\newtheoremstyle{definition}
{3pt} % Space above
{3pt} % Space below
{} % Body font
{} % Indent amount
{\bfseries} % Theorem head font
{.} % Punctuation after theorem head
{.5em} % Space after theorem head
{} % Theorem head spec (can be left empty, meaning `normal')
\theoremstyle{definition}
\newtheorem{theorem}{Theorem}[section]
\newtheorem{corollary}[theorem]{Corollary}
\newtheorem{definition}[theorem]{Definition}
\newtheorem{lemma}[theorem]{Lemma}
\newtheorem{assumption}[theorem]{Assumption}
\newcommand{\statespace}{\mathcal{X}}
\newcommand{\actionspace}{\mathcal{A}}
\newcommand{\repdim}{K}
\newcommand{\repix}{k}
\begin{document}

% If your paper is accepted and the title of your paper is very long,
% the style will print as headings an error message. Use the following
% command to supply a shorter title of your paper so that it can be
% used as headings.
%
%\runningtitle{I use this title instead because the last one was very long}

% If your paper is accepted and the number of authors is large, the
% style will print as headings an error message. Use the following
% command to supply a shorter version of the authors names so that
% they can be used as headings (for example, use only the surnames)
%
%\runningauthor{Surname 1, Surname 2, Surname 3, ...., Surname n}

\twocolumn[

\aistatstitle{On The Effect of Auxiliary Tasks on Representation Dynamics}

\aistatsauthor{ Clare Lyle* \And Mark Rowland* \And Georg Ostrovski \And Will Dabney}

\aistatsaddress{ University of Oxford \And DeepMind \And DeepMind \And DeepMind } ]

\begin{abstract}
While auxiliary tasks play a key role in shaping the representations learnt by reinforcement learning agents, much is still unknown about the mechanisms through which this is achieved.
This work develops our understanding of the relationship between auxiliary tasks, environment structure, and representations by analysing the dynamics of temporal difference algorithms.
Through this approach, we establish a connection between the spectral decomposition of the 
transition operator and the representations induced by a variety of auxiliary tasks.
We then leverage insights from these theoretical results to inform the selection of auxiliary tasks for deep reinforcement learning agents in sparse-reward environments.
\end{abstract}

\section{Introduction}

Auxiliary tasks have provided robust benefits to deep reinforcement learning agents \citep{jaderberg2016reinforcement,mirowski2017learning, lin2019adaptive}. A commonly-held belief is that these benefits are mediated through improved representation learning.
This hypothesis naturally raises a number of questions that, broadly speaking, remain open. What makes a good auxiliary task? Can we predict how an auxiliary task will affect an agent's representation? When should one auxiliary task be used instead of another? More generally, how should this hypothesis about the mechanism of auxiliary tasks itself be tested? The complex interacting components of large-scale deep reinforcement learning agents make it difficult to extract general insights. 
In this work we aim to shed light on the answers to these questions by distilling the benefits of auxiliary tasks down to the effects on the dynamics of the representations of reinforcement learning agents.

\begin{figure}[t!]
    \centering
    \null\hfill
    \includegraphics[keepaspectratio,width=.46\textwidth]{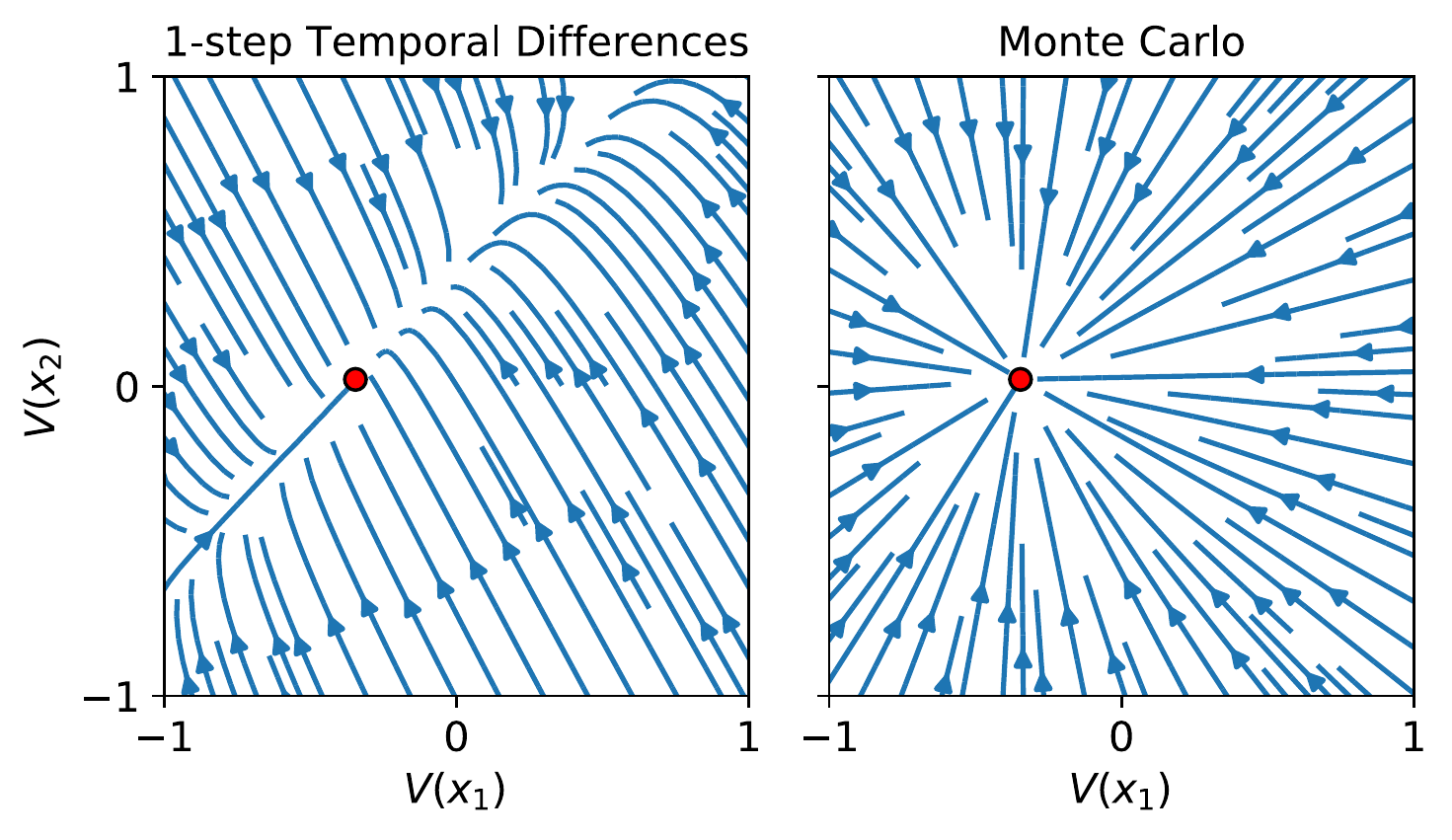}\hfill
    \null
    \caption{An example of qualitatively different value function dynamics for a two-state MDP for 1-step temporal difference learning and Monte Carlo learning, with fixed point $V^\pi$ in red.}
    \label{fig:two-state-example}
\end{figure}

We begin by considering a \emph{learning dynamics} framework for studying the effects of auxiliary tasks; see Figure~\ref{fig:two-state-example} for a toy illustration, with full details given in Section~\ref{sec:learning-dynamics}. The central idea behind this framework is that it is not just \emph{what} an agent learns that dictates how its representation is shaped, but \emph{how} it learns. 

This framework provides a \emph{model} for representation learning in RL. Under this model, even in the case of value-based algorithms, it is shown that agents automatically incorporate the transition structure of the environment into their representations. We characterize the dynamics induced by a number of auxiliary tasks, with particular focus on ensemble predictions and random cumulant functions, and prove convergence of the induced representations to subspaces defined by certain decompositions of the environment's transition operator.
We then consider the effectiveness of auxiliary tasks in sparse-reward environments, and via the use of the learning dynamics framework, construct a hypothesis as to which auxiliary tasks should be particularly well suited to such environments; we then test these developments in the Arcade Learning Environment \citep{bellemare2013arcade}, demonstrating strong performance with random cumulant auxiliary tasks.

\section{Background}

We consider a Markov decision process $(\mathcal{X}, \mathcal{A}, P, R)$ comprising a finite state space $\mathcal{X}$, finite action space $\mathcal{A}$, transition kernel $P : \mathcal{X} \times \mathcal{A} \rightarrow \mathscr{P}(\mathcal{X})$, and reward distribution function $\mathcal{R} : \mathcal{X} \times \mathcal{A} \rightarrow \mathscr{P}(\mathbb{R})$.

\subsection{Value-based reinforcement learning}

Two key tasks in reinforcement learning are \emph{policy evaluation} and \emph{policy optimisation}. The former is specified by a policy $\pi: \mathcal{X} \rightarrow \mathscr{P}(\mathcal{A})$. An agent using the policy $\pi$ to interact with the environment generates a sequence of states, actions and rewards $(X_t, A_t, R_t)_{t \geq 0}$. The performance of the agent is summarised by the return $\sum_{t \geq 0} \gamma^t R_t$, for a discount factor $\gamma \in [0,1)$. The goal of policy evaluation is to (approximately) compute the value function
\begin{align*}
    \textstyle
    V^\pi(x) = \mathbb{E}_\pi[ \sum_{t \geq 0} \gamma^t R_t \mid X_0 = x ] \, ,
\end{align*}
for all $x \in \mathcal{X}$. Policy optimisation consists of finding a policy $\pi^* \in \mathscr{P}(\mathcal{A})^\mathcal{X}$ that maximises the expected return from all possible initial states. The value function associated with $\pi^*$ is denoted $V^*$.

Crucial to the value-based approach to reinforcement learning are the \emph{Bellman operators}. The one-step evaluation operator associated with a policy $\pi$ is the function $T^\pi : \mathbb{R}^{\statespace} \rightarrow \mathbb{R}^{\statespace}$ defined by
\begin{align*}
    (T^\pi V)(x, a) = \mathbb{E}_\pi[ R_0 + \gamma V(X_1) \mid X_0 =x] \, .
\end{align*}
Introducing the transition operator $P^\pi \in \mathbb{R}^{\statespace\times\statespace}$ defined by $P^\pi(x'|x) = \sum_{a \in \mathcal{A}}\pi(a|x)P(x'|x, a)$, and the expected reward vector $R^\pi \in \mathbb{R}^{\statespace}$ defined by $R^\pi(x) = \mathbb{E}_\pi[R_0|X_0=x]$, this can be expressed even more succinctly in operator notation as
\begin{align*}
    T^\pi V = R^\pi + \gamma P^\pi V \, .
\end{align*}
The Bellman optimality operator is the function $T^* : \mathbb{R}^{\statespace\times\actionspace} \rightarrow \mathbb{R}^{\statespace\times\actionspace}$ defined by
\begin{align*}
    (T^* V)(x) \!=\! \max_{a \in \mathcal{A}}\mathbb{E}[R_0 \!+\! \gamma V(X_1) | X_0 = x, A_0 = a] \, .
\end{align*}

Repeated application of $T^\pi$ (resp., $T^*$) to any initial value function converges to $V^\pi$ (resp., $V^*$) \citep{bertsekas1996neuro}. Popular algorithms such as Q-learning \citep{watkins1992q}, which form the basis of many deep RL agents \citep{mnih2015human}, can be viewed as approximating the iterative application of $T^*$ and related operators \citep{tsitsiklis1994asynchronous,jaakola1994convergence,bertsekas1996neuro}.

\subsection{Features and representations}\label{sec:reps}

In many environments, it is impractical to store a value function as a table indexed by states, and further this does not permit generalisation in the course of learning. Instead, it is typical to parametrise $V \in \mathbb{R}^{\statespace}$ through a \emph{feature map} $\phi : \mathcal{X} \rightarrow \mathbb{R}^\repdim$ and \emph{weight vector} $\mathbf{w} \in \mathbb{R}^\repdim$, leading to a factorisation of the form
\begin{align*}
    V(x) = \langle \phi(x), \mathbf{w} \rangle \, .
\end{align*}
Such a parametrisation may be amenable to more efficient learning, for example if $\phi$ abstracts away unimportant information, allowing for generalisation between similar states.
Even more concisely, writing $\Phi \in \mathbb{R}^{\statespace\times\repdim}$ for the matrix with rows $\phi(x)$ yields
\begin{align}\label{eq:q-phi-w}
    V = \Phi \mathbf{w} \, .
\end{align}
The quantity $\Phi$ is often referred to as the agent's \emph{representation} of the environment \citep{boyan1999least, levine2017shallow, bertsekas2018feature,chung2018two,bellemare2019geometric,dabney2020value}.
In many small- and medium-scale applications, the representation is fixed ahead of time,
and only $\mathbf{w}$ is updated during learning; this is the linear function approximation regime. Many common choices of features relate to various decompositions of operators associated with the transition operators $P^\pi$. In deep reinforcement learning, however, $\Phi$ and $\mathbf{w}$ are learnt simultaneously.

\subsection{Representation learning and auxiliary tasks}

To perfectly express a value function $V^\pi$ in the form of Equation~\eqref{eq:q-phi-w}, the following condition is necessary:
\begin{align}\label{eq:value-constraint}
    \langle \Phi \rangle \supseteq \langle V^\pi \rangle \, ,
\end{align}
where $\langle \Phi \rangle$ denotes the column span of $\Phi$, and $\langle V^\pi\rangle$ denotes the one-dimensional subspace of $\mathbb{R}^{\mathcal{X}}$ spanned by $V^\pi$. However, this condition is not sufficient for efficient sample-based learning \citep{du2019good}. There are several reasons for this; for some intuition, consider that since value functions are typically learnt through bootstrapping algorithms, the agent is required to accurately express a \emph{sequence} of value functions as its estimates are updated, and thus a good representation should also allow such intermediate value functions to be expressed in the course of learning \citep{dabney2020value}.

Despite the importance of the representation $\Phi$, it remains unclear how exactly the notion of a good representation in this sense should be formalised. In spite of this, representation learning is a hugely important aspect of deep reinforcement learning.
A consistent finding in empirical deep RL research is that requiring the agent to use its representation to predict other functions of state, referred to as \emph{auxiliary tasks}, in addition to its primary task of learning an optimal policy, can lead to considerable boosts in performance. Examples of commonly-used auxiliary tasks include predicting the expected return associated with other reward functions \citep{sutton2011horde}, other discount factors \citep{fedus2019hyperbolic}, and other policies \citep{dabney2020value}, as well as other properties of the return distribution \citep{bellemare2017distributional}  and other aspects of the environment observations \citep{jaderberg2016reinforcement}, amongst others.
We discuss prior work on auxiliary tasks in greater detail in Section~\ref{sec:related-work}.
A popular hypothesis is that auxiliary tasks add further constraints to Expression~\eqref{eq:value-constraint}, requiring the representation $\Phi$ to contain more functions of interest than just $V^\pi$ in its column span \citep{bellemare2019geometric,dabney2020value}.

\section{Learning dynamics}\label{sec:learning-dynamics}

Our aim in the remainder of the paper is to develop an understanding of the ways in which auxiliary tasks shape representations in RL. Our central results establish connections between decompositions of transition operators, commonly used in static feature selection, and certain classes of auxiliary tasks used in deep reinforcement learning. To build up to these results, in this section we examine learning algorithms in the absence of auxiliary tasks, first considering tabular learning algorithms, and then moving to the case where representations and feature weights are learnt simultaneously.

\subsection{Warm-up: Tabular value function dynamics}\label{sec:value-function}

We consider the following one-step temporal difference (TD) continuous-time learning dynamics:
\begin{align*}
    \partial_t V_t(x) = \mathbb{E}_\pi[R_0 + \gamma V_t(X_1)|X_0 = x]- V_t(x) \, ,
\end{align*}
for each $x \in \mathcal{X}$, which may also be written 
\begin{align*}
    \partial_t V_t(x) = R^\pi(x) + \gamma (P^\pi V_t)(x) - V_t(x) \, ,
\end{align*}
or in full matrix notation,
\begin{align}\label{eq:value-function-ode}
    \partial_t V_t = -(I - \gamma P^\pi) V_t + R^\pi \, .
\end{align}
The differential equation in \eqref{eq:value-function-ode} is an affine autonomous system, and is straightforwardly solvable.
\begin{restatable}{lemma}{lemODESoln}\label{lem:ode-soln}
If $(V_t)_{t \geq 0}$ satisfies Equation~\eqref{eq:value-function-ode} with initial condition $V_0$ at time $t=0$, then we have
\begin{align}\label{eq:value-function-ode-solution}
    V_t = \exp( -t (I - \gamma P^\pi ) )(V_0 - V^\pi) + V^\pi \, .
\end{align}
\end{restatable}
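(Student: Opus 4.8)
The plan is to reduce the affine system \eqref{eq:value-function-ode} to a homogeneous linear ODE by translating coordinates to the fixed point, and then invoke the standard matrix-exponential solution formula for homogeneous linear systems.

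First I would record that the fixed point $V^\pi$ is well defined and satisfies the relevant identity. Since $P^\pi$ is row-stochastic its spectral radius is at most $1$, so every eigenvalue of $\gamma P^\pi$ has modulus at most $\gamma < 1$; hence $1$ is not an eigenvalue of $\gamma P^\pi$, and $A := I - \gamma P^\pi$ is invertible. Setting the right-hand side of \eqref{eq:value-function-ode} to zero then identifies the unique stationary point $V^\pi = (I - \gamma P^\pi)^{-1} R^\pi$, equivalently $(I - \gamma P^\pi) V^\pi = R^\pi$; this agrees with the value function, being the fixed point of $T^\pi$.

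Next I would change variables, setting $U_t = V_t - V^\pi$. Differentiating and substituting \eqref{eq:value-function-ode} together with the fixed-point identity, the affine term cancels:
\begin{align*}
    \partial_t U_t = -(I - \gamma P^\pi)(U_t + V^\pi) + R^\pi = -(I - \gamma P^\pi) U_t = -A U_t \, ,
\end{align*}
with initial condition $U_0 = V_0 - V^\pi$. The solution of such a homogeneous linear ODE is standard: $U_t = \exp(-tA) U_0$ satisfies $\partial_t U_t = -A U_t$, as one checks by differentiating the power-series definition of $\exp(-tA)$ term by term (justified by uniform convergence on compact time intervals) and using that $A$ commutes with $\exp(-tA)$; uniqueness follows from the Picard--Lindel\"of theorem, the right-hand side being globally Lipschitz in $U_t$. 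Undoing the substitution yields $V_t = \exp(-tA)(V_0 - V^\pi) + V^\pi$, which is precisely \eqref{eq:value-function-ode-solution}.

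There is no substantive obstacle here; the only points needing care are the invertibility of $I - \gamma P^\pi$ (guaranteeing a well-defined fixed point about which to translate) and the legitimacy of term-by-term differentiation of the matrix exponential, both of which are routine.
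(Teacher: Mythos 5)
Your proof is correct and follows essentially the same route as the paper's: the paper verifies the stated formula by direct differentiation and appeals to Picard--Lindel\"of for uniqueness, while you derive the same formula by translating to the fixed point $V^\pi$ and solving the resulting homogeneous linear system, which is the same standard linear-ODE argument presented constructively rather than as a verification. Your additional remarks on the invertibility of $I - \gamma P^\pi$ and the identification $V^\pi = (I-\gamma P^\pi)^{-1}R^\pi$ are correct and slightly more careful than the paper's one-line proof.
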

We recover as a straightforward corollary the well-known result that $V_t \rightarrow V^\pi$ as $t \rightarrow \infty$, since all eigenvalues of $(I - \gamma P^\pi)$ have strictly positive real part. 

However, the solution in Equation~\eqref{eq:value-function-ode-solution} also describes the \emph{trajectory} by which $V_t$ reaches this limiting value.  Figure~\ref{fig:two-state-example} provides an illustration of this in a small MDP; the value functions accumulate along a particular affine subspace of $\mathbb{R}^\mathcal{X}$ prior to convergence.

This phenomenon can in fact be formalised. To do so, we need a notion of distance between subspaces of $\mathbb{R}^{\mathcal{X}}$. 
The following definition follows \citet{ye2016schubert}. Intuitively, it can be thought of as generalizing the notion of an angle between vectors to subspaces.

\begin{definition}\label{def:grassmann-distance}
For two $K$-dimensional subspaces $Y_1, Y_2 \leq \mathbb{R}^\mathcal{X}$, the \emph{principal angles} $\theta_1, \ldots, \theta_K \in [0, \pi/2]$ between the subspaces are defined by taking orthonormal matrices $\mathbf{Y}_1 \in \mathbb{R}^{\mathcal{X} \times K}$ and $\mathbf{Y}_2 \in \mathbb{R}^{\mathcal{X} \times K}$ the columns of which span $Y_1$ and $Y_2$ respectively, and defining $\theta_k = \cos^{-1}(\sigma_k(\mathbf{Y}_1^\top \mathbf{Y}_2))$, where $\sigma_k(\mathbf{A})$ is the $k$\textsuperscript{th} singular value of the matrix $\mathbf{A}$. One can check that this definition is independent of the matrices $\mathbf{Y}_1$ and $\mathbf{Y}_2$, depending only on the subspaces $Y_1,Y_2$ themselves. The \emph{Grassmann distance} $d(Y_1, Y_2)$ between $Y_1$ and $Y_2$ is then defined as $\| \theta \|_2 = (\sum_{k=1}^K \theta_k^2)^{1/2}$.
\end{definition}

With these definitions in hand, we now give a precise version of the statement alluded to in the discussion and figure above. We make some simplifying assumptions to avoid focusing on technicalities here, and give a discussion of the more general case in  Appendix~\ref{sec:more-general-value-function-results}.

\begin{assumption}\label{assume:value-function-conditions}
    $P^\pi$ is real-diagonalisable, with strictly decreasing eigenvalue sequence $1=|\lambda_1| > |\lambda_2| > \cdots >  |\lambda_{|\mathcal{X}|}|$, and corresponding right-eigenvectors $U_1, \ldots, U_{|\mathcal{X}|}$. 
\end{assumption}

\begin{restatable}{proposition}{propOneValueFunction}\label{prop:one-value-function}
    Under Assumption~\ref{assume:value-function-conditions}, and $(V_t)_{t \geq 0}$ the solution to Equation~\eqref{eq:value-function-ode}, for almost every\footnote{In the measure-theoretic sense that the set of excluded initial  conditions $V_0$ has Lebesgue measure $0$.} initial condition $V_0$, we have
    \begin{align*}
        d(\langle V_t - V^\pi \rangle, \langle U_1 \rangle) \rightarrow 0 \, .
    \end{align*}
\end{restatable}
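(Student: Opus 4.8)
The plan is to exploit the explicit closed form from Lemma~\ref{lem:ode-soln} and reduce the statement to tracking the dominant mode of a matrix exponential acting on the error vector $V_0 - V^\pi$. By Lemma~\ref{lem:ode-soln} we have $V_t - V^\pi = \exp(-t(I - \gamma P^\pi))(V_0 - V^\pi)$, so the entire trajectory of the error is governed by the spectral data of $I - \gamma P^\pi$. Under Assumption~\ref{assume:value-function-conditions} the eigenvectors $U_1, \dots, U_{|\mathcal{X}|}$ of $P^\pi$ form a basis of $\mathbb{R}^\mathcal{X}$, and each $U_k$ is also an eigenvector of $I - \gamma P^\pi$, now with eigenvalue $1 - \gamma\lambda_k$. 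First I would expand the initial error in this basis as $V_0 - V^\pi = \sum_{k=1}^{|\mathcal{X}|} c_k U_k$, where each coefficient $c_k$ is a fixed (affine) functional of $V_0$.

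Applying the matrix exponential termwise then gives
\begin{align*}
    V_t - V^\pi = \sum_{k=1}^{|\mathcal{X}|} c_k\, e^{-t(1 - \gamma\lambda_k)} U_k \, .
\end{align*}
The next step is to identify the slowest-decaying mode. Because $P^\pi$ is stochastic it has $1$ as an eigenvalue, and the strict modulus ordering in Assumption~\ref{assume:value-function-conditions} forces $\lambda_1 = 1$ to be the unique eigenvalue of modulus $1$; since the matrix is real-diagonalisable all eigenvalues are real, so every other $\lambda_k$ lies strictly in $(-1, 1)$ and hence $\lambda_1 = 1 > \lambda_k$ as real numbers. Consequently $1 - \gamma\lambda_1 = 1 - \gamma$ is strictly the smallest of the decay rates $1 - \gamma\lambda_k$. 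Factoring out $e^{-t(1-\gamma)}$ yields
\begin{align*}
    e^{t(1-\gamma)}(V_t - V^\pi) = c_1 U_1 + \sum_{k=2}^{|\mathcal{X}|} c_k\, e^{t\gamma(\lambda_k - 1)} U_k \, ,
\end{align*}
and since $\lambda_k - 1 < 0$ for $k \geq 2$, every term in the sum vanishes as $t \to \infty$, leaving the limit $c_1 U_1$.

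Provided $c_1 \neq 0$, normalising shows that $(V_t - V^\pi)/\|V_t - V^\pi\|$ converges to the unit vector $\pm U_1/\|U_1\|$, and I would then read off the Grassmann distance directly: for one-dimensional subspaces Definition~\ref{def:grassmann-distance} reduces $d$ to the single principal angle $\cos^{-1}|\langle \hat{v}_t, \hat{U}_1\rangle|$ between unit spanning vectors, so convergence of the normalised error into $\langle U_1\rangle$ makes this inner product tend to $1$ and the angle tend to $0$ by continuity of $\cos^{-1}$. The ``almost every'' qualifier is handled by noting that $c_1 = 0$ is the zero set of a nonzero affine functional of $V_0$, i.e.\ a hyperplane in the space of initial conditions, which has Lebesgue measure zero.

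The calculation itself is routine; the two points needing genuine care are (i) pinning down $\lambda_1 = 1$ as the strictly dominant real eigenvalue, which I expect to be the main obstacle since it is precisely where stochasticity and Assumption~\ref{assume:value-function-conditions} must be combined — one must rule out $-1$ being a modulus-$1$ eigenvalue and use real-diagonalisability to compare eigenvalues as reals rather than by modulus — and (ii) cleanly translating the genericity condition $c_1 \neq 0$ into the measure-zero statement and the normalised-vector limit into the Grassmann-distance limit.
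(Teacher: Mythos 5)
Your proposal is correct and follows essentially the same route as the paper's proof: expand $V_0 - V^\pi$ in the eigenbasis of $P^\pi$, observe that the $U_1$ mode decays strictly slowest so that $V_t - V^\pi = c_1 e^{-t(1-\gamma)}U_1 + o(e^{-t(1-\gamma)})$ for $c_1 \neq 0$, and convert this into the Grassmann-distance statement via the one-dimensional principal-angle formula (the paper packages this last step as Lemma~\ref{lem:grassmann1}). Your added care in pinning down $\lambda_1 = 1$ from stochasticity and in identifying the excluded set $\{c_1 = 0\}$ as a measure-zero hyperplane makes explicit two points the paper treats more briefly, but does not change the argument.
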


The behaviour described by Proposition~\ref{prop:one-value-function} is exhibited in Figure~\ref{fig:two-state-example}, as the value function $V_t$ approaches the affine subspace in direction $(1, 1)$ prior to converging to $V^\pi$.
A more general version of this statement can also be given with an ensemble of $\repdim$ value functions, which indicates that yet more information about the environment is contained in the learnt collection. The proofs of these results relate to the classical \emph{power method} in linear algebra.

\begin{restatable}{proposition}{propManyValueFunctions}\label{prop:many-value-functions}
    Under Assumption~\ref{assume:value-function-conditions}, and $(V^{(\repix)}_t)_{t \geq 0}$ the solution to Equation~\eqref{eq:value-function-ode} for each $\repix=1,\ldots,\repdim$, for almost every initial condition $(V_0^{(\repix)})_{\repix=1}^\repdim$, we have
    \begin{align*}
        d(\langle V^{(\repix)}_t - V^\pi \mid \repix \in [\repdim] \rangle, \langle U_{1:\repdim} \rangle) \rightarrow 0 \, .
    \end{align*}
\end{restatable}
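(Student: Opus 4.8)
The plan is to reduce the statement, via the closed-form solution of Lemma~\ref{lem:ode-soln}, to a continuous-time analogue of the classical \emph{subspace} (block) power method. Writing $n = |\mathcal{X}|$, I would first stack the errors $V_t^{(\repix)} - V^\pi$ as the columns of a matrix $W_t \in \mathbb{R}^{n \times \repdim}$, so that the subspace in question is exactly the column span $\langle W_t\rangle$. By Lemma~\ref{lem:ode-soln}, $W_t = \exp(-t(I - \gamma P^\pi)) W_0$, where $W_0$ has columns $V_0^{(\repix)} - V^\pi$. Diagonalising in the eigenbasis $U = [U_1 \mid \cdots \mid U_n]$ from Assumption~\ref{assume:value-function-conditions} and setting $\mu_i = 1 - \gamma\lambda_i$, this becomes $W_t = U D_t C$ with $D_t = \mathrm{diag}(e^{-t\mu_1}, \ldots, e^{-t\mu_n})$ and $C = U^{-1} W_0$.

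Next I would split $U$, $D_t$, $C$ into the top-$\repdim$ block and the remainder, giving $W_t = U_{1:\repdim}\, D_t^{(1)} C_1 + U_{(\repdim+1):n}\, D_t^{(2)} C_2$, where $C_1 \in \mathbb{R}^{\repdim\times \repdim}$ collects the top $\repdim$ rows of $C$. The crucial genericity step is that $C_1$ is invertible for almost every $W_0$: since $C_1 = S W_0$ with $S$ the full-rank top $\repdim$ rows of $U^{-1}$, the map $W_0 \mapsto \det(C_1)$ is a polynomial that is not identically zero (rank $S = \repdim$ lets one choose $W_0$ with $S W_0 = I$), so its vanishing locus has Lebesgue measure $0$; this is precisely the excluded set in the statement. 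Because right-multiplication by the invertible matrix $(D_t^{(1)} C_1)^{-1}$ preserves column span, $\langle W_t\rangle = \langle \tilde W_t\rangle$ with
\begin{align*}
    \tilde W_t = U_{1:\repdim} + U_{(\repdim+1):n}\, D_t^{(2)} C_2 (D_t^{(1)} C_1)^{-1} \, .
\end{align*}

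The core estimate is then that the perturbation $E_t := U_{(\repdim+1):n}\, D_t^{(2)} C_2 (D_t^{(1)} C_1)^{-1}$ tends to $0$. Its entries carry factors $e^{-t\mu_j}/e^{-t\mu_i} = e^{-t(\mu_j - \mu_i)}$ with $j > \repdim \geq i$, so this hinges on a spectral gap separating the $\repdim$ slowest-decaying modes from the rest, i.e.\ $\min_{j>\repdim}\mu_j > \max_{i\le \repdim}\mu_i$; identifying these slowest modes with $U_{1:\repdim}$ is what the ordering in Assumption~\ref{assume:value-function-conditions} provides (as $\mu_i = 1-\gamma\lambda_i$ is decreasing in $\lambda_i$). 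Granting the gap, $E_t \to 0$ exponentially, so the columns of $\tilde W_t$ converge to the fixed full-rank basis $U_{1:\repdim}$ of $\langle U_{1:\repdim}\rangle$; in particular $\langle W_t\rangle$ is eventually $\repdim$-dimensional. Since the principal angles of Definition~\ref{def:grassmann-distance}, and hence $d(\cdot,\cdot)$, are continuous functions of a full-rank generating matrix (through orthonormalisation and the singular values of $\mathbf{Y}_1^\top \mathbf{Y}_2$) and vanish at $U_{1:\repdim}$, I conclude $d(\langle W_t\rangle, \langle U_{1:\repdim}\rangle) \to 0$.

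I expect the main obstacle to be the interface between the algebraic limit $\tilde W_t \to U_{1:\repdim}$ and the geometric Grassmann-distance claim: one must argue rigorously that convergence of a spanning matrix (in Frobenius norm) to a full-rank limit forces convergence of the spanned subspaces, which requires controlling that $\tilde W_t$ stays full rank for large $t$ and invoking continuity of the relevant singular values. A secondary subtlety, which I would defer to the general treatment in Appendix~\ref{sec:more-general-value-function-results}, is ensuring that the absolute-value ordering of the $\lambda_i$ coincides with the decay-rate ordering of the $\mu_i$ needed for the spectral gap; this is immediate when the relevant eigenvalues are nonnegative, and otherwise the dominant invariant subspace should be described directly in terms of the $\mu_i$.
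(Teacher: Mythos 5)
Your proposal is correct and follows essentially the same route as the paper: both reduce the claim via the closed-form exponential solution to a continuous-time block power-method argument, with your explicit block decomposition and right-multiplication by $(D_t^{(1)}C_1)^{-1}$ playing exactly the role of the paper's Lemma~\ref{lem:grassmann2} (which performs the same normalisation via row reduction and the same genericity condition on the leading coefficient block). Your closing caveat about the absolute-value ordering of the $\lambda_i$ versus the decay-rate ordering of the $\mu_i = 1 - \gamma\lambda_i$ is a real subtlety that the paper's own proof also glosses over, so flagging it and deferring to Appendix~\ref{sec:more-general-value-function-results} is appropriate.
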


\vspace{2mm}
\mdfsetup{%
backgroundcolor=black!10,
roundcorner=10pt}
\begin{mdframed}
\textbf{Key insight.}\\
Even in an environment with no reward signal at all (in which case $V^\pi = 0$), an agent performing TD learning still picks up information about the transition structure of the environment within its value function.
\end{mdframed}

Due to the importance of the vectors $U_{1:K}$ in this analysis, we introduce the term \emph{eigen-basis functions} (EBFs) to describe them.

We observe that a similar analysis, indicating similar behaviour, is possible for related learning algorithms such as $n$-step temporal difference learning and TD($\lambda$); see Appendix~\ref{sec:beyond-one-step} for further details. In contrast, Monte Carlo learning dynamics correspond to the differential equation
\begin{align*}
    \partial_t V_t = (I - \gamma P^\pi)^{-1}R^\pi - V_t \, ,
\end{align*}
which has the solution
\begin{align*}
    V_t = e^{-t}(V_0 - (I - \gamma P^\pi)^{-1} R^\pi) + (I - \gamma P^\pi)^{-1} R^\pi \, .
\end{align*}
The trajectory associated with this solution simply linearly interpolates between $V_0$ and $V^\pi$, as illustrated in Figure~\ref{fig:two-state-example}, and does not pick up any additional information about the environment in the value function as learning proceeds.  See Appendix~\ref{sec:beyond-one-step} for further details.
This example serves to illustrate that it is not just \emph{what} an agent learns ($V^\pi$), but \emph{how} the agent learns that plays a key, measurable role in what environment information is picked up in its value function. We now apply this perspective to representation learning. 

\subsection{Representation dynamics}\label{sec:rep-dynamics}

Recall the parametrisation of $V \in \mathbb{R}^{\mathcal{X}}$ from Section~\ref{sec:reps}, taking the form
\begin{align*}
    V = \Phi \mathbf{w} \, ,
\end{align*}
for $\Phi \in \mathbb{R}^{\mathcal{X} \times \repdim}$, $\mathbf{w} \in \mathbb{R}^{\repdim}$. 
Central to deep reinforcement learning agents is the idea that $\Phi$ and $\mathbf{w}$ are simultaneously learnt from a single RL loss. 
As in the value function case, we will focus on the dynamics with single-step temporal difference learning; remarks on other learning algorithms are given in Appendix~\ref{sec:beyond-one-step}. 
The dynamics associated with single-step TD learning are given by
\begin{align}
    \partial_t \Phi_t & = -\alpha \frac{1}{2}\nabla_{\Phi_t} \| R^\pi + \texttt{SG}[\gamma P^\pi \Phi_t \mathbf{w}_t] - \Phi_t \mathbf{w}_t \|^2_2 \, , \label{eq:phi-ode} \\
    \partial_t \mathbf{w}_t & = -\beta\frac{1}{2}\nabla_{\mathbf{w}_t} \| R^\pi + \texttt{SG}[\gamma P^\pi \Phi_t \mathbf{w}_t] - \Phi_t \mathbf{w}_t \|^2_2 \label{eq:w-ode} \, ,
\end{align}
where $\alpha, \beta \in [0, \infty)$ are learning rates, implying that features and weights may be learnt at different rates.
Further, $\texttt{SG}[]$ denotes a \emph{stop-gradient}, indicating that we treat the instances of $\Phi_t$ and $\mathbf{w}_t$ within the expression as constants with regard to computing derivatives; this reflects the fact that temporal difference learning is a \emph{semi-gradient} method.

The use of a single loss to learn both the representation and weights corresponds to the approach taken in deep RL, and we will use these dynamics as an idealized model of the deep RL setting. While this model ignores some practicalities of deep RL (such as visitation distributions, implicit bias from the function approximation architecture, and stochasticity introduced by mini-batch training), it allows us to obtain valuable insights into representation dynamics which, as we shall see in Section \ref{sec:experiments}, accurately predict the behaviour of deep RL agents. 

\begin{restatable}{lemma}{lemCoupledDynamics}
Let $\Phi_t$ and $\mathbf{w}_t$ parameterize a value function approximator as defined above. Then
\begin{align}
    \partial_t \Phi_t & = \alpha (R^\pi + \gamma P^\pi \Phi_t \mathbf{w}_t - \Phi_t \mathbf{w}_t) \mathbf{w}_t^\top \, , \label{eq:phi-flow} \\
    \partial_t \mathbf{w}_t & = \beta \Phi_t^\top(R^\pi + \gamma P^\pi \Phi_t \mathbf{w}_t - \Phi_t \mathbf{w}_t) \label{eq:w-flow} \, .
\end{align}
\end{restatable}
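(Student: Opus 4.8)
The plan is to treat the stop-gradient operator as simply freezing the target vector, so that the shared squared loss becomes an ordinary least-squares objective in the two parameters, and then to compute each gradient directly. Concretely, I would write the objective as $L(\Phi, \mathbf{w}) = \tfrac{1}{2}\| y - \Phi \mathbf{w} \|_2^2$, where $y = R^\pi + \texttt{SG}[\gamma P^\pi \Phi_t \mathbf{w}_t]$ is held constant with respect to the differentiation, reflecting the semi-gradient nature of TD learning. The essential observation to record first is that, because of the stop-gradient, the only parameter-dependence surviving differentiation is through the term $-\Phi \mathbf{w}$; once the gradient has been taken, one substitutes $\texttt{SG}[\gamma P^\pi \Phi_t \mathbf{w}_t] = \gamma P^\pi \Phi_t \mathbf{w}_t$ (its actual value at time $t$) to recover the TD residual $R^\pi + \gamma P^\pi \Phi_t \mathbf{w}_t - \Phi_t \mathbf{w}_t$ appearing in the claimed expressions.

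For the weight update, I would apply the chain rule to $L = \tfrac{1}{2}\| y - \Phi \mathbf{w} \|_2^2$, obtaining $\nabla_{\mathbf{w}} L = -\Phi^\top (y - \Phi \mathbf{w})$; multiplying by $-\beta$ and substituting the TD residual back in yields Equation~\eqref{eq:w-flow}. For the feature update I would differentiate with respect to the matrix $\Phi$, which needs slightly more care: expanding entrywise as $L = \tfrac{1}{2}\sum_{x} \bigl( y(x) - \sum_{k} \Phi_{xk} w_k \bigr)^2$ and differentiating with respect to $\Phi_{xk}$ gives $-\bigl( y(x) - (\Phi \mathbf{w})(x) \bigr) w_k$, which assembles into the outer product $\nabla_{\Phi} L = -(y - \Phi \mathbf{w}) \mathbf{w}^\top$. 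Multiplying by $-\alpha$ and again substituting the TD residual recovers Equation~\eqref{eq:phi-flow}.

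I expect the only real obstacle to be the bookkeeping around the matrix derivative and the stop-gradient, rather than any genuine mathematical difficulty. Specifically, I would verify that the outer-product term $(\cdot)\,\mathbf{w}^\top$ has the correct shape $\statespace \times \repdim$ matching $\Phi$, and confirm that no spurious $\gamma P^\pi$-dependent contribution leaks into either gradient through the frozen target. With those two checks in place the result is immediate, since both lines are exact negative-gradient flows of the same quadratic objective with $y$ held fixed.
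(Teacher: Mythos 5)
Your proposal is correct and is exactly the computation the paper has in mind; the paper's own proof simply states that the result "follows immediately by computing the derivatives" and omits the details you have written out. Your handling of the stop-gradient (freeze the target, differentiate the quadratic, then substitute the TD residual back) and the outer-product form of the matrix gradient both match the intended argument.
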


This joint flow on $\Phi_t$ and $\mathbf{w}_t$ leads to much richer behaviour than the flow considered on value functions in the previous section. Without further assumptions, the evolution of the representation $\Phi_t$ may be complex, and will not necessarily incorporate environment information as described for the case of value functions in Proposition~\ref{prop:many-value-functions}.
In particular, in sparse reward environments, the agent may learn to predict a near-zero value function by setting the weights $\mathbf{w}_t$ close to zero, which would effectively prevent any further updating of the features $\Phi_t$, ruling out the possibility of a result analogous to Proposition~\ref{prop:many-value-functions}.

\section{Auxiliary task dynamics}\label{sec:aux-dynamics}

Having studied the temporal difference learning dynamics in Equation~\eqref{eq:phi-flow} \& \eqref{eq:w-flow}, we now examine how auxiliary value-prediction tasks influence the behaviour of the agent's representation during the learning process.

As described above, developing a granular description of the joint learning dynamics of the representation and weights of the learner is a complex task, and so we focus on the limiting case in which the number of auxiliary tasks is large relative to the dimensionality of the representation. We conclude that under certain conditions, representations learnt in the many-task limit bear a close connection with the \emph{eigen-basis functions} described in Section~\ref{sec:value-function}, and also \emph{resolvent singular basis functions}, a new decomposition introduced in Section~\ref{sec:random-cumulants}. The reader may find it useful to refer to Appendix~\ref{sec:feature-selection} for a more detailed discussion of these decompositions.

\begin{table*}
    \centering
    \begin{tabular}{c|c|c|c|c}
        \toprule
        Auxiliary task & Dynamics ($r=0$) & $\Phi_\infty$ ($r=0$)  & $\Phi_\infty$ ($r\neq 0$) & Limit of $\langle \Phi_t - \Phi_\infty \rangle$\\
        \midrule
        Ensemble & $-(I - \gamma P^\pi) \Phi_t$ & $0$ & $(I - \gamma P^\pi)^{-1} r\epsilon^\top$ & EBFs of $P^\pi$\\
        Random cumulants & $-(I - \gamma P^\pi) \Phi_t + Z_\Sigma$ & $(I - \gamma P^\pi)^{-1} Z_\Sigma$ &   $(I - \gamma P^\pi)^{-1} Z_\Sigma$ & EBFs of $P^\pi$\\
        Additional policies & $  -(I - \gamma P^{\overline{\pi}}) \Phi_t$ & $0$ & $ (I - \gamma P^{\overline{\pi}})^{-1} R^{\overline{\pi}} \epsilon^\top$ & EBFs of $P^{\bar{\pi}}$\\
        Multiple $\gamma$s & $-(I - \overline{\gamma} P^\pi)\Phi_t$ & $0$  & $(I - \overline{\gamma} P^{\pi})^{-1}R^{\pi} \epsilon^\top$ & EBFs of $P^{\pi}$\\
        \bottomrule
    \end{tabular}
    \caption{Summary of dynamics and limiting solutions under some common auxiliary tasks in the limit of infinitely-many prediction outputs. For additional policies, $\overline{\pi}$ denotes the average of the finite set of policies $\pi_1,\ldots,\pi_L$ under consideration ($L$ fixed and independent of $M$), and for multiple discount factors, $\overline{\gamma}$ denotes the average of the discount factors $\gamma_1,\ldots,\gamma_L$ under consideration.
    }
    \label{tab:theory}
\end{table*}
\subsection{Ensemble value prediction}

We begin by considering the auxiliary task of \emph{ensemble value prediction} \citep{osband2016deep, anschel2017averaged, agarwal2019striving}. Rather than making a single prediction of the value function $Q^\pi$, the learner makes $M \in \mathbb{N}$ separate predictions as linear functions of a common representation $\Phi^{M}$, using $M$ independently initialized weights matrices $\mathbf{w}^{m} \in \mathbb{R}^{\repdim}$ ($m=1,\ldots,M$). We note that while at initialization $\Phi^M_0 \in \mathbb{R}^{\statespace \times d}$ is independent of $M$, its dynamics do depend on $M$ through the contribution of the weights. Simultaneous temporal difference learning on all predictions leads to the following dynamics:
\begin{align}
    \partial_t \Phi^{M}_t  \label{eq:ensemble-phi-flow}
    \!=  &   \alpha\! \sum_{m=1}^M (R^\pi\! +\! \gamma P^\pi \Phi^{M}_t \mathbf{w}_t^{m}\! -\! \Phi^{M}_t \mathbf{w}_t^{m})  (\mathbf{w}_t^{m} )^\top \, , \\
    \partial_t \mathbf{w}_t^{m} = & \beta (\Phi^{M}_t)^\top (R^\pi + \gamma P^\pi \Phi^M_t \mathbf{w}^{m}_t - \Phi_t \mathbf{w}^{m}_t ) \, . \label{eq:ensemble-w-flow}
\end{align}

The following result characterises the representation learnt by the agent in the many-tasks limit, again establishing a connection to EBFs; we follow the approach described by \citet{arora2019fine} in fixing the linear weights associated with the value function; this dramatically simplifies our analysis, while still describing practical settings in which the features and weights are trained separately as in \citet{chung2018two}.

\begin{restatable}{theorem}{thmInfiniteHeads}\label{thm:infinite-heads}

For $M \in \mathbb{N}$, let $(\Phi^{M}_t)_{t \geq 0}$ be the solution to Equation~\eqref{eq:ensemble-phi-flow}, with each $\mathbf{w}^{m}_t$ for $m=1,\ldots,M$ initialised independently from $N(0, \sigma_M^2)$, and fixed throughout training ($\beta=0$). 
We consider two settings: first, where the learning rate $\alpha$ is scaled as $\frac{1}{M}$
and $\sigma_M^2 = 1$ for all $M$, and second where $\sigma_M^2 = \frac{1}{M}$ and the learning rate $\alpha$ is equal to $1$. These two settings yield the following dynamics, respectively:
\begin{align}
       \lim_{M \rightarrow \infty} \partial_t \Phi_t^{M} \overset{P}{=}& -(I - \gamma P^\pi )\Phi_t^{M}\quad  \text{, and } \\
       \lim_{M \rightarrow \infty} \partial_t \Phi_t^{M} \overset{D}{=}& -(I - \gamma P^\pi )\Phi_t^{M} + R^\pi \epsilon^\top \; \text{,  $\epsilon \sim \mathcal{N}(0, I)\,$.}
\end{align}
The corresponding limiting trajectories for a fixed initialisation $\Phi_0 \in \mathbb{R}^{\statespace\times \repdim}$, are therefore given respectively by
\begin{align}
        \lim_{M \rightarrow \infty} \Phi_t^{M} \overset{P}{=}& \exp(-t(I - \gamma P^\pi))\Phi_0 \quad  \text{, and } \\
         \lim_{M \rightarrow \infty} \Phi_t^{M}  \overset{D}{=}& \exp(-t(I - \gamma P^\pi))(\Phi_0 - (I - \gamma P^\pi)^{-1} R^\pi \varepsilon^\top ) \nonumber \\
        & \qquad \ + (I - \gamma P^\pi)^{-1}R^\pi \varepsilon^\top  \, ,\,  \epsilon \sim \mathcal{N}(0, I)\,.
\end{align}
\end{restatable}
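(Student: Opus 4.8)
The plan is to exploit the fact that with $\beta = 0$ the weight matrices are frozen at their initial values $\mathbf{w}^m_t \equiv \mathbf{w}^m$, so that Equation~\eqref{eq:ensemble-phi-flow} closes into an \emph{autonomous, affine} ODE in $\Phi^M_t$ alone. Pulling the state-independent sums out front, it reads
\begin{align*}
    \partial_t \Phi^M_t = \alpha R^\pi (S_1^M)^\top - \alpha (I - \gamma P^\pi)\Phi^M_t\, S_2^M \, ,
\end{align*}
where $S_1^M = \sum_{m=1}^M \mathbf{w}^m$ and $S_2^M = \sum_{m=1}^M \mathbf{w}^m (\mathbf{w}^m)^\top$. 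Thus all $M$-dependence of both the vector field and its trajectory is channelled through the pair $(\alpha S_1^M, \alpha S_2^M) \in \mathbb{R}^\repdim \times \mathbb{R}^{\repdim \times \repdim}$, and the proof reduces to (i) finding the limit of this pair in each scaling regime, and (ii) transporting that limit through the (continuous) solution map of the ODE.

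First I would settle the coefficient asymptotics. Writing $\mathbf{w}^m = \sigma_M g^m$ with $g^m \sim \mathcal{N}(0, I)$ i.i.d., note that in both regimes $\alpha \sigma_M^2 M = 1$, so the second-moment coefficient is $\alpha S_2^M = \alpha\sigma_M^2 \sum_m g^m (g^m)^\top = \frac1M \sum_m g^m(g^m)^\top \overset{P}{\to} \mathbb{E}[g g^\top] = I$ identically in both settings, by the matrix law of large numbers. The first-moment coefficient $\alpha S_1^M = \alpha\sigma_M\sum_m g^m$ is what separates them: in the first setting ($\alpha = 1/M$, $\sigma_M = 1$) it equals $\frac1M\sum_m g^m \overset{P}{\to} 0$, whereas in the second ($\alpha = 1$, $\sigma_M^2 = 1/M$) it equals $\frac1{\sqrt M}\sum_m g^m$, which is \emph{exactly} $\mathcal{N}(0, I)$ for every $M$ (being a normalised sum of i.i.d.\ Gaussians) and hence converges in distribution to $\epsilon \sim \mathcal{N}(0, I)$. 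Since $\alpha S_2^M$ converges to the deterministic limit $I$, Slutsky's theorem upgrades these marginals to the joint limits $(\alpha S_1^M, \alpha S_2^M) \overset{P}{\to} (0, I)$ and $(\alpha S_1^M, \alpha S_2^M) \overset{D}{\to} (\epsilon, I)$, respectively. Reading the vector field $F_M(\Phi) = R^\pi (\alpha S_1^M)^\top - (I - \gamma P^\pi)\Phi\,(\alpha S_2^M)$ at these limits (a continuous function of the pair, so the continuous mapping theorem applies) yields exactly the two claimed dynamics statements.

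It then remains to transport these limits from the vector field to the trajectory $\Phi^M_t$. As the coefficients are constant in $t$, vectorising $\Phi$ turns the flow into $\partial_t \mathrm{vec}(\Phi^M_t) = -A_M \mathrm{vec}(\Phi^M_t) + b_M$, with $A_M = (\alpha S_2^M)^\top \otimes (I - \gamma P^\pi)$ and $b_M = \mathrm{vec}(R^\pi(\alpha S_1^M)^\top)$, whose Duhamel solution $\mathrm{vec}(\Phi^M_t) = \exp(-t A_M)\mathrm{vec}(\Phi_0) + \big(\int_0^t \exp(-(t-s)A_M)\,ds\big) b_M$ is a continuous (indeed entire) function of $(\alpha S_1^M, \alpha S_2^M)$ for each fixed $t$ and $\Phi_0$. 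The continuous mapping theorem then carries the coefficient limits over to $\Phi^M_t$ in the corresponding mode (probability, then distribution). Evaluating the solution map at the limits completes the argument: the limit $(0, I)$ gives the homogeneous flow $\exp(-t(I - \gamma P^\pi))\Phi_0$, while the limit $S_2^M \to I$ decouples the columns of $\Phi$, reducing the flow to $\partial_t \Phi = -(I - \gamma P^\pi)\Phi + R^\pi \epsilon^\top$; its generator $I \otimes (I - \gamma P^\pi)$ is invertible since every eigenvalue of $I - \gamma P^\pi$ has positive real part, so Lemma~\ref{lem:ode-soln} applied column-by-column produces the stated affine trajectory with fixed point $(I - \gamma P^\pi)^{-1} R^\pi \epsilon^\top$.

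I expect the main obstacle to be this last transport step rather than the moment computations. The key is that the solution map is continuous \emph{at the limit point}, so the continuous mapping theorem applies with no excluded null set (in contrast to Proposition~\ref{prop:one-value-function}); the explicit matrix-exponential form makes this transparent and in fact shows the convergence is uniform on compact $t$-intervals. The one genuinely delicate point is the appeal to Slutsky's theorem for the joint limit in the second setting: $S_1^M$ and $S_2^M$ are dependent, and it is only the degeneracy of the $S_2^M$-limit (convergence to a constant matrix) that licenses combining the in-distribution limit of $\alpha S_1^M$ with the in-probability limit of $\alpha S_2^M$.
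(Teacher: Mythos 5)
Your proposal is correct and follows essentially the same route as the paper's proof: the quadratic weight sum $\sum_m \mathbf{w}^m(\mathbf{w}^m)^\top$ converges to $I$ by a law of large numbers (the paper's Lemma~\ref{lem:w-limit}), the linear sum $\sum_m \mathbf{w}^m$ vanishes in probability under the $1/M$ learning-rate scaling but is Gaussian under the $1/M$ initialisation-variance scaling, and the limit is then transported through the (continuous) solution map of the resulting affine autonomous ODE. Your treatment is somewhat more careful than the paper's on two points---the explicit appeal to Slutsky's theorem for the joint limit of the dependent coefficients, and the Duhamel/Kronecker form of the solution map covering the affine (nonzero-reward) case---but these are refinements of the same argument rather than a different one.
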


In contrast to the case described in Section~\ref{sec:rep-dynamics}, this result indicates that the introduction of auxiliary tasks leads to useful environment information being incorporated into the representation.
Indeed, the dynamics described above imply the following convergence result, analogous to Proposition~\ref{prop:many-value-functions}.

\begin{restatable}{corollary}{propSubspaceConvergence}\label{prop:subspaceconvergence}
Under the feature flow \eqref{eq:ensemble-phi-flow} with $\mathbf{w}^m_t$ fixed at initialization for each $i = 1, \dots, M$ and Assumption~\ref{assume:value-function-conditions}, for almost all initialisations $\Phi_0$, we have when $R^\pi = 0$
    \begin{align*}
        d(\langle \Phi_t \rangle, \langle U_{1:\repdim} \rangle) \rightarrow 0 \, ,\quad\text{as } t \rightarrow \infty.
    \end{align*}
\end{restatable}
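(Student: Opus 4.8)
The plan is to recognise the limiting feature flow as a continuous-time analogue of subspace (block power) iteration, and to track how the column span $\langle\Phi_t\rangle$ rotates into the dominant invariant subspace of $P^\pi$. Since $R^\pi = 0$, Theorem~\ref{thm:infinite-heads} supplies the limiting trajectory in closed form, $\Phi_t = \exp(-t(I-\gamma P^\pi))\Phi_0$, so everything reduces to understanding the action of the matrix semigroup $M(t) := \exp(-t(I-\gamma P^\pi))$ on the fixed $\repdim$-dimensional subspace $\langle\Phi_0\rangle$.

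First I would diagonalise using Assumption~\ref{assume:value-function-conditions}: writing $P^\pi = U\Lambda U^{-1}$ with $U = [U_1,\ldots,U_{|\statespace|}]$ and $\Lambda = \operatorname{diag}(\lambda_i)$ gives $M(t) = U D(t) U^{-1}$ with $D(t) = \operatorname{diag}(e^{-t(1-\gamma\lambda_i)})$. Passing to eigencoordinates, let $C = U^{-1}\Phi_0$ and split it into a top block $C_{\mathrm{top}}\in\mathbb{R}^{\repdim\times\repdim}$ (rows $1,\dots,\repdim$) and a bottom block $C_{\mathrm{bot}}$, and split $D(t)$ conformally into $D_{\mathrm{top}}(t)$ and $D_{\mathrm{bot}}(t)$. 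The eigencoordinate matrix of $\Phi_t$ is then $\left[\begin{smallmatrix} D_{\mathrm{top}}(t)C_{\mathrm{top}} \\ D_{\mathrm{bot}}(t)C_{\mathrm{bot}}\end{smallmatrix}\right]$, and the $\repdim$ slowest-decaying modes are exactly those associated with $U_{1:\repdim}$.

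The key step exploits the invariance of a column span under right multiplication by an invertible $\repdim\times\repdim$ matrix: renormalising by $(D_{\mathrm{top}}(t)C_{\mathrm{top}})^{-1}$ produces a spanning matrix $B(t) = U\left[\begin{smallmatrix} I_\repdim \\ E(t)\end{smallmatrix}\right]$ with $E(t) = D_{\mathrm{bot}}(t)\,C_{\mathrm{bot}}C_{\mathrm{top}}^{-1}\,D_{\mathrm{top}}(t)^{-1}$. Its $(i,j)$ entry carries the factor $e^{-t(1-\gamma\lambda_{\repdim+i})}e^{t(1-\gamma\lambda_j)} = e^{-t\gamma(\lambda_j-\lambda_{\repdim+i})}$ with $j\le\repdim<\repdim+i$; by the strict separation of eigenvalues in Assumption~\ref{assume:value-function-conditions}, the relevant decay rates satisfy $1-\gamma\lambda_\repdim < 1-\gamma\lambda_{\repdim+1}$, so every entry of $E(t)$ decays exponentially and $E(t)\to 0$ at a rate set by the gap $\gamma(\lambda_\repdim-\lambda_{\repdim+1})$. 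Hence $B(t)\to U\left[\begin{smallmatrix}I_\repdim\\0\end{smallmatrix}\right] = [U_1,\dots,U_\repdim]$, and since the map sending a full-rank matrix to its column span is continuous, Definition~\ref{def:grassmann-distance} gives $d(\langle\Phi_t\rangle,\langle U_{1:\repdim}\rangle)\to 0$.

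Two points need care, and the second is the main obstacle. The renormalisation requires $C_{\mathrm{top}}$ — the top $\repdim$ rows of $U^{-1}\Phi_0$ — to be invertible; this is exactly where ``almost every $\Phi_0$'' enters, because $\det C_{\mathrm{top}}$ is a polynomial in the entries of $\Phi_0$ that is not identically zero (it equals $1$ at $\Phi_0 = [U_1,\dots,U_\repdim]$), so it vanishes only on a Lebesgue-null set. The conceptual obstacle is the renormalisation itself: each individual column of $\Phi_t$ collapses in direction onto the single dominant eigenvector $U_1$, so the naive columnwise limit is degenerate; it is the rebalancing by the diagonal $D_{\mathrm{top}}(t)^{-1}$, with its distinct entries, that keeps the subdominant directions $U_2,\dots,U_\repdim$ visible and forces convergence to the full $\repdim$-dimensional invariant subspace rather than to $\langle U_1\rangle$. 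Finally, since $U$ need not be orthonormal, I would either invoke continuity of the span map directly or make the rate explicit by bounding the largest principal angle via $\sin\theta_{\max}\lesssim\|E(t)\|$ up to a conditioning constant depending on $U$.
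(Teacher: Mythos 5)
Your proposal is correct and follows essentially the same route as the paper: the paper also passes to the closed-form trajectory $\Phi_t = \exp(-t(I-\gamma P^\pi))\Phi_0$ from Theorem~\ref{thm:infinite-heads}, expands the columns in the eigenbasis of $P^\pi$, and (in Lemma~\ref{lem:grassmann2}) performs row reduction on the time-dependent coefficient matrix --- which is exactly your right-multiplication by $(D_{\mathrm{top}}(t)C_{\mathrm{top}})^{-1}$ --- to obtain a spanning set of the form $U_\repix + o(1)$, with the same genericity condition (nondegeneracy of the leading $\repdim\times\repdim$ block of the coefficients) accounting for the ``almost all $\Phi_0$'' clause. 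Your explicit identification of the null set via $\det C_{\mathrm{top}}$ and the remark on non-orthonormality of $U$ are, if anything, slightly more careful than the paper's treatment.
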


\vspace{2mm}
\mdfsetup{%
backgroundcolor=black!10,
roundcorner=10pt}
\begin{mdframed}
\textbf{Key insight.}\\
Under the conditions of Theorem~\ref{thm:infinite-heads} and Corollary~\ref{prop:subspaceconvergence}, the ensemble auxiliary tasks cause the agent's representation $\Phi$ to align with EBFs.
\end{mdframed}

We show in Appendix \ref{sec:ensemble-dynamics} that this behaviour is observed in practice when $M \gg \repdim$ and the value of $\mathbf{w}^m_t$ is fixed at initialization. We additionally compare the representations learned when $\mathbf{w}^m_t$ is allowed to vary over training. Here we find empirically that allowing the weights to vary during training induces dynamics that differ from those predicted by Theorem~\ref{thm:infinite-heads} for the fixed-weights setting.
To illustrate this, we follow the evolution of a single column of $\Phi_t$, i.e. a single feature vector $\phi_t$, trained with the ensemble prediction dynamics of Equations~\eqref{eq:ensemble-phi-flow} \& \eqref{eq:ensemble-w-flow} on a simple four-rooms gridworld environment in
Figure~\ref{fig:feature-viz}.

\begin{figure}[!b]
    \centering
    \includegraphics[width=0.38\textwidth]{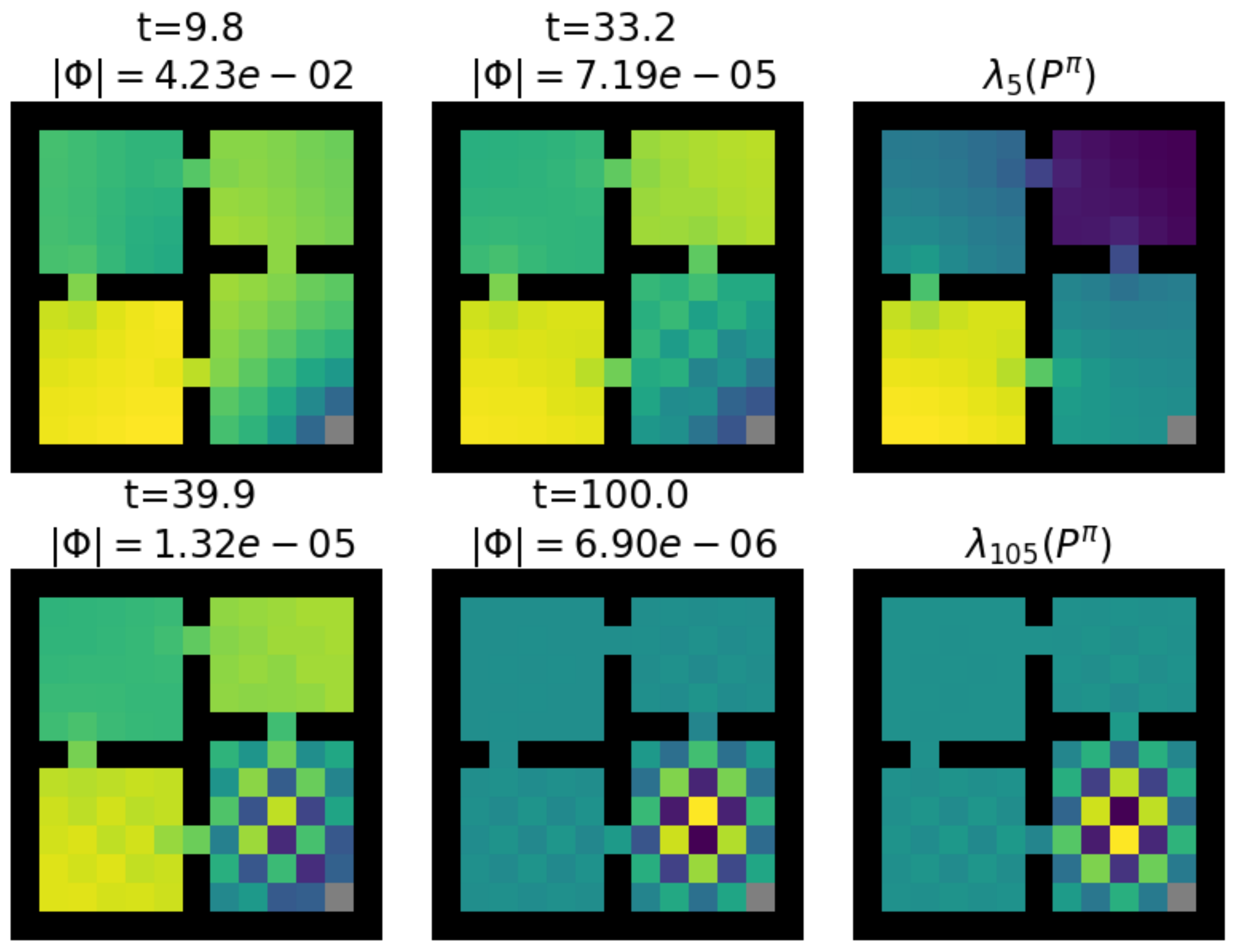}
    \caption{Visualization of a single column of $\Phi_t$ (i.e. feature vector)  after application of the ODE in Equation~\eqref{eq:phi-flow} for $t\in[0,100]$ in the four rooms environment, with $\repdim=10$ and $M = 20$. Early in its trajectory, $\phi_t$ exhibits similarity to smooth eigenfunctions (e.g. the eigenfunction corresponding to the 5$^{th}$ greatest eigenvalue $\lambda_5$ which we plot in the top right) of $P^\pi$, but later converges to non-smooth eigenfunctions (e.g. the eigenfunction corresponding to eigenvalue $\lambda_{105}$, the most negative eigenvalue, plotted in the bottom right).
    }
    
    \label{fig:feature-viz}
\end{figure}
 
We visualize $\phi_t$ along with two illustrative eigenfunctions of the transition matrix $P^\pi$, corresponding to one positive and one negative eigenvalue. 
We observe that while the feature $\phi_t$ quickly evolves to resemble the smooth eigenfunction corresponding to the positive eigenvalue for small values of $t$, it later converges to the non-smooth eigenfunction corresponding to the most negative eigenvalue of the transition matrix $P^\pi$. While we leave further analysis to future work, this example hints at an intriguing relationship between the EBFs and the joint representation dynamics.

\subsection{Random cumulants}\label{sec:random-cumulants}

In the case of zero rewards, our previous results show that whilst from the perspective of subspaces the representation approaches the EBF subspace in Grassmann distance, in Euclidean distance the representation is approaching the zero matrix pointwise. This has important implications for the scenario of large-scale sparse-reward environments, in which the agent may not encounter rewards for long periods of time, and indicates that the agent's representation is at risk of collapsing in such cases.

Motivated by this analysis, we consider a means of alleviating this representation collapse, by learning value functions for \emph{randomly generated cumulants} \citep{osband2018randomized,dabney2020value}. Mathematically, the agent again makes many predictions from a common representation, with each prediction indexed by $m=1,\ldots,M$ attempting to learn the value function associated with a randomly drawn reward function $r^m \in \mathbb{R}^{\statespace}$ under the policy $\pi$. Thus, the agent's parameters are the representation $\Phi$ and a set of weights $\mathbf{w}^m$ for each prediction. The learning dynamics are then given by:
\begin{align}
    \partial_t \Phi^{M}_t  \label{eq:rc-phi-flow}
    \!=  &  \alpha \sum_{m=1}^M (r^m\!\! +\!\! \gamma P^\pi \Phi^{M}_t \mathbf{w}_t^{m}\!\! -\!\! \Phi^{M}_t \mathbf{w}_t^{m})  (\mathbf{w}_t^{m} )^\top  , \\
    \partial_t \mathbf{w}_t^{m} = & \beta (\Phi^{M}_t)^\top (r^m + \gamma P^\pi \Phi^M_t \mathbf{w}^{m}_t - \Phi_t \mathbf{w}^{m}_t ) \, . \label{eq:rc-w-flow}
\end{align}

The main result of this section is to show that, even in the absence of reward, the limiting distribution induced by random cumulant auxiliary tasks dynamics described in Equation~\eqref{eq:rc-phi-flow} is not the zero subspace.

\begin{restatable}{theorem}{ThmDistribution}\label{thm:distribution}
For fixed $M \in \mathbb{N}$, let the random rewards $(r^m)_{m=1}^M$ and weights $(\mathbf{w}^m)_{m=1}^M$ be as defined above, let $\alpha=1$, and consider the representation dynamics in Equation~\eqref{eq:rc-phi-flow}, with weights fixed throughout training ($\beta=0$). Let $\Sigma$ denote the covariance matrix of the random cumulant distribution. Then 
\begin{align}
    &\lim_{M \rightarrow \infty} \sum_{m=1}^M r^m   (\mathbf{w}^m)^\top \overset{D}{=} Z_\Sigma \sim \mathcal{N}(0, \Sigma), \text{ and } \nonumber \\
   & \lim_{M \rightarrow \infty} \Phi^M_t \overset{D}{=}  \exp(-t(I - \gamma P^\pi ))(\Phi_0 - (I - \gamma P^\pi)^{-1} Z_\Sigma) \nonumber \\
    & \qquad\qquad+ (I - \gamma P^\pi)^{-1} Z_\Sigma \nonumber \;.
\end{align}

As the columns of $Z_\Sigma$ are mean-zero, uncorrelated, with covariance matrices $\Sigma$, the limiting distribution of each column of $\Phi_\infty = \lim_{t \rightarrow \infty} \lim_{M \rightarrow \infty} \Phi^M_t$ has covariance $\Psi\Sigma \Psi^\top$, where $\Psi$ is the resolvent $(I - \gamma P^\pi)^{-1}$.
\end{restatable}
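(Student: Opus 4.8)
The plan is to collapse the $M$ coupled flows into a single affine matrix ODE, identify the limiting behaviour of its random coefficients, and then pass to the $M\to\infty$ and $t\to\infty$ limits in turn. Since $\beta=0$, the weights are frozen at their initial values $\mathbf{w}^m$, so writing $B=\sum_{m=1}^M r^m(\mathbf{w}^m)^\top$ and $W=\sum_{m=1}^M \mathbf{w}^m(\mathbf{w}^m)^\top$ and using $\alpha=1$, Equation~\eqref{eq:rc-phi-flow} becomes the linear matrix ODE $\partial_t \Phi_t^M = B - (I-\gamma P^\pi)\Phi_t^M W$. The whole statement then reduces to understanding the random matrices $B,W$ in the many-task limit and solving this ODE.

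For the coefficients I would take $\sigma_M^2=1/M$, as in the second regime of Theorem~\ref{thm:infinite-heads}. Then $\mathbb{E}[W]=M\sigma_M^2 I = I$ with vanishing variance, so $W\to I$ in probability by the law of large numbers. For $B$, writing $\mathbf{w}^m=M^{-1/2}\tilde{\mathbf{w}}^m$ with $\tilde{\mathbf{w}}^m\sim\mathcal{N}(0,I)$ exhibits $B=M^{-1/2}\sum_{m=1}^M r^m(\tilde{\mathbf{w}}^m)^\top$ as a normalised sum of i.i.d.\ mean-zero matrices, so the multivariate CLT gives convergence in distribution to a Gaussian. A second-moment computation, using that the weights are mean-zero and independent of the rewards (which annihilates all cross terms in $m$), yields $\mathrm{Cov}(B_{ij},B_{kl})=M\sigma_M^2\,\Sigma_{ik}\delta_{jl}\to\Sigma_{ik}\delta_{jl}$; hence the limit $Z_\Sigma$ is jointly Gaussian with independent columns, each $\mathcal{N}(0,\Sigma)$, establishing the first displayed claim.

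To solve the ODE I would vectorise: using $\mathrm{vec}((I-\gamma P^\pi)\Phi W)=(W^\top\otimes(I-\gamma P^\pi))\mathrm{vec}(\Phi)$, the flow becomes $\partial_t\,\mathrm{vec}(\Phi_t^M)=\mathrm{vec}(B)-(W^\top\otimes(I-\gamma P^\pi))\mathrm{vec}(\Phi_t^M)$, which is exactly the affine autonomous system solved by Lemma~\ref{lem:ode-soln}. Since $I-\gamma P^\pi$ is invertible and $W$ lies near $I$, the generator $W^\top\otimes(I-\gamma P^\pi)$ is invertible, so the solution map $(B,W)\mapsto\Phi_t^M$ is a continuous (indeed smooth) function of the coefficients on this region. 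Because $(B,W)\to(Z_\Sigma,I)$ jointly in distribution (Slutsky, since $W$ has a deterministic limit), the continuous mapping theorem transfers this to $\Phi_t^M$; evaluating the solution map at $W=I$ decouples the Kronecker product into the columnwise exponential $\exp(-t(I-\gamma P^\pi))$, giving precisely the second displayed formula with fixed point $\Psi Z_\Sigma$.

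The final step takes $t\to\infty$ in this limiting trajectory. Every eigenvalue of $I-\gamma P^\pi$ has strictly positive real part, so $\exp(-t(I-\gamma P^\pi))\to0$ and $\Phi_\infty=\Psi Z_\Sigma$ with $\Psi=(I-\gamma P^\pi)^{-1}$; each column is $\Psi$ applied to an independent $\mathcal{N}(0,\Sigma)$ vector, giving covariance $\Psi\Sigma\Psi^\top$ as claimed. I expect the main obstacle to be the limit-interchange in the third paragraph: one must argue carefully that the distributional limit of the solutions equals the solution of the limiting ODE, which relies on the joint convergence of $(B,W)$ together with invertibility---hence continuity of the solution map---being stable in a whole neighbourhood of $W=I$, not merely at the limit itself.
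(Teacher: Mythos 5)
Your proposal is correct and follows essentially the same route as the paper: the paper's Lemmas~\ref{lem:reward-matrix} and~\ref{lem:w-limit} are exactly your CLT for $B$ and law-of-large-numbers for $W$, and the paper likewise solves the resulting linear ODE and passes to the limit via continuity of the matrix exponential (in Theorem~\ref{thm:infinite-heads}, on which the proof of Theorem~\ref{thm:distribution} simply leans) before applying the Gaussian linear-transformation property at $t\to\infty$. Your treatment of the limit interchange via Slutsky and the continuous mapping theorem is somewhat more explicit than the paper's, but it is the same underlying argument.
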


\begin{restatable}{corollary}{propSubspaceConvergenceRC}\label{prop:subspaceconvergence-rc}
Under the feature flow \eqref{eq:rc-phi-flow} with $\mathbf{w}^m_t$ fixed at initialization for each $i = 1, \dots, M$ and Assumption~\ref{assume:value-function-conditions}, for almost all initialisations $\Phi_0$, we have when $R^\pi = 0$
    \begin{align*}
        d(\lim_{M \rightarrow \infty} \langle \Phi^M_t - \Phi_\infty \rangle, \langle U_{1:\repdim} \rangle) \rightarrow 0 \, ,\quad\text{as } t \rightarrow \infty.
    \end{align*}
\end{restatable}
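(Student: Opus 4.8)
The plan is to reduce the statement to the reward-free ensemble convergence already established in Corollary~\ref{prop:subspaceconvergence}. First I would take the explicit limiting trajectory supplied by Theorem~\ref{thm:distribution},
\[
\lim_{M \to \infty} \Phi^M_t \overset{D}{=} \exp(-t(I - \gamma P^\pi))(\Phi_0 - \Psi Z_\Sigma) + \Psi Z_\Sigma \, ,
\]
with $\Psi = (I - \gamma P^\pi)^{-1}$, and identify the double limit $\Phi_\infty$. Since every eigenvalue of $I - \gamma P^\pi$ has strictly positive real part, $\exp(-t(I - \gamma P^\pi)) \to 0$ as $t \to \infty$, so $\Phi_\infty = \lim_{t\to\infty}\lim_{M\to\infty} \Phi^M_t = \Psi Z_\Sigma$. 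Subtracting, the centred object of interest is
\[
\lim_{M\to\infty} \Phi^M_t - \Phi_\infty = \exp(-t(I-\gamma P^\pi))\,\widetilde\Phi_0 \, , \qquad \widetilde\Phi_0 := \Phi_0 - \Psi Z_\Sigma \, .
\]

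The right-hand side has exactly the form of the reward-free ensemble flow analysed in Corollary~\ref{prop:subspaceconvergence}, now with the effective initialisation $\widetilde\Phi_0$ in place of $\Phi_0$. I would therefore invoke the same spectral argument: under Assumption~\ref{assume:value-function-conditions}, expand $\widetilde\Phi_0 = \sum_{i} U_i c_i^\top$ in the eigenbasis of $P^\pi$, so that $\exp(-t(I-\gamma P^\pi))\widetilde\Phi_0 = \sum_i e^{-t(1-\gamma\lambda_i)} U_i c_i^\top$. The Grassmann-distance computation underlying Proposition~\ref{prop:many-value-functions} and Corollary~\ref{prop:subspaceconvergence} then applies verbatim, showing that the normalised column span is asymptotically dominated by the modes retained in the limit and hence converges to $\langle U_{1:\repdim}\rangle$, provided the leading coefficient block $[c_1,\dots,c_\repdim]$ associated with $\widetilde\Phi_0$ is nonsingular. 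I would not re-derive this power-method estimate, as it is inherited unchanged from the earlier results.

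The only genuinely new ingredient, and the main obstacle, is establishing the genericity of $\widetilde\Phi_0$, since it is a \emph{random} translate of $\Phi_0$ rather than $\Phi_0$ itself. I would handle this by conditioning on $Z_\Sigma$: for each fixed realisation, the map $\Phi_0 \mapsto \widetilde\Phi_0 = \Phi_0 - \Psi Z_\Sigma$ is a translation, so the set of $\Phi_0$ for which the leading coefficient block of $\widetilde\Phi_0$ is singular is a translate of the Lebesgue-null exclusion set of Corollary~\ref{prop:subspaceconvergence}, and hence is itself null. Thus for almost every $\Phi_0$ and \emph{every} value of $Z_\Sigma$ the convergence holds, and Fubini upgrades this to the claimed statement over the joint law of $(\Phi_0, Z_\Sigma)$. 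A final point to verify, though it requires no interchange argument, is that the order of limits in the statement ($M\to\infty$ followed by $t\to\infty$) is precisely the order in which $\Phi_\infty$ is extracted and the spectral estimate is applied, so the reduction is self-consistent.
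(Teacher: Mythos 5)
Your proposal is correct and follows essentially the same route as the paper: subtract the limit $\Phi_\infty = (I-\gamma P^\pi)^{-1} Z_\Sigma$ from the limiting trajectory of Theorem~\ref{thm:distribution}, expand the resulting $\exp(-t(I-\gamma P^\pi))(\Phi_0 - \Phi_\infty)$ in the eigenbasis of $P^\pi$, and invoke the Grassmann-distance machinery of Proposition~\ref{prop:many-value-functions} and Lemma~\ref{lem:grassmann2}. Your extra step of checking that the null exclusion set survives the random translation $\Phi_0 \mapsto \Phi_0 - \Psi Z_\Sigma$ is a point the paper leaves implicit, but it does not change the argument.
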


Theorem~\ref{thm:distribution} indicates that the left-singular vectors of $\Sigma^{1/2}\Psi$ (or equivalently, the right-eigenvectors of $\Psi \Sigma \Psi^\top$) are key to understanding the effects on random cumulants on representations; we introduce the term \emph{resolvent singular basis functions} (RSBFs) to refer to these vectors in the canonical case $\Sigma = I$. 
\vspace{2mm}

\mdfsetup{%
backgroundcolor=black!10,
roundcorner=10pt}
\begin{mdframed}
\textbf{Key insight.}\\
With random cumulant auxiliary tasks, under the assumptions of Theorem~\ref{thm:distribution} and Corollary~\ref{prop:subspaceconvergence-rc}, the distribution of the limiting representation does not collapse, and is characterized by the RSBFs of $P^\pi$, while the trajectory it follows to reach this subspace is determined by the EBFs of $P^\pi$. 
\end{mdframed}

These decompositions of $P^\pi$ bear deep connections to prior work on feature learning. EBFs correspond to the eigendecomposition of the successor representation, which can be explicitly related to the proto-value functions described by \citet{mahadevan2009learning} when the transition matrix $P^\pi$ corresponds to that of a random walk policy \citep{machado2017eigenoption}. For symmetric $P^\pi$ we obtain an additional correspondence between EBFs and RSBFs, though we note that when $P^\pi$ is not symmetric the RSBFs may differ from both the EBFs and the singular value decomposition of the transition matrix $P^\pi$. We provide further discussion of RSBFs and comparisons against existing concepts in feature selection in Appendix~\ref{sec:feature-selection}.

In Appendix~\ref{sec:bayes-opt} we show that RSBFs can be viewed as Bayes-optimal features in the sense that they minimize the expected value function approximation error given an isotropic Gaussian prior on an unknown reward function.

\subsection{Analysis of additional auxiliary tasks}

The infinite-task limit simplifies the analysis of a broad range of auxiliary tasks, and analogous results to Theorem \ref{thm:infinite-heads} can be easily derived for families of auxiliary tasks which predict returns associated with additional policies and multiple discounts factors. We provide a summary of these results in Table \ref{tab:theory}, including their full statements and derivations in Appendix~\ref{sec:proofs}. 

We consider two additional classes of auxiliary task: predicting the values of multiple policies \citep{dabney2020value}, and predicting value under multiple discount factors \citep{fedus2019hyperbolic}. Under the multiple policies auxiliary task, the agent's objective is to learn a set of value functions $V^1, \dots, V^M$ such that $V^i(x) = \mathbb{E}_{\pi_i}[R^{\pi_i}(x) + \gamma P^{\pi_i}V^i(x)]$. Under the multiple discount factors auxiliary task, the agent's objective is analogously to find $V^i(x) = \mathbb{E}_{\pi_i}[R^{\pi}(x) + \gamma_i P^{\pi}V^i(x)]$ for $\gamma_i \in \gamma_1, \dots, \gamma_k$. We consider an ensemble prediction variant of these objectives, where given a fixed set of $k$ policies, we train an ensemble of $M$ predictors $V^{1,1}, \dots, V^{m, 1}, \dots, V^{1,k}, \dots, V^{m, k}$, where $m=\frac{M}{k}$ and the value function $V^{i, j}$ is trained on policy (respectively discount factor) $\pi_j$ (respectively $\gamma_j)$. 

In both cases, under the conditions of the previous theorems the dynamics of the ensemble converge to the dynamics induced by the mean of the set of auxiliary tasks, implying the counter-intuitive result that training with multiple auxiliary tasks doesn't provide additional utility over the single task setting. This apparent shortcoming can be addressed by ensuring that the weights corresponding to each auxiliary task $\pi_i$ or $\gamma_i$ are initialized in \textit{orthogonal subspaces}, so that the vector space $V$ in which the representation evolves can be decomposed as $V = \oplus_{i\in[1, k]} V_i$. In this case, we obtain an analogous decomposition of the representation $\Phi$ and its corresponding dynamics, obtaining convergence to a direct sum of the limiting representation of each task. This suggests that the benefits of auxiliary tasks might be maximized by appropriate initialization schemes which encourage the representations learned for each task to be independent.

\section{Experiments}\label{sec:experiments}

In this section, we complement the theoretical results above with empirical investigations in both tabular and deep reinforcement learning settings.

\subsection{Feature generalisation across the value-improvement path}\label{sec:feature-generalisation}

Having established connections between the representations induced by auxiliary tasks and several decompositions of the environment transition operator, we now turn to the question of how useful these representations are to a reinforcement learning agent. In particular, we address how well representations learnt under one policy generalize under the policy improvement step to approximate future value functions, with particular attention paid to EBFs and RSBFs, the decompositions that feature in our earlier analysis.

\begin{figure}[!b]
    \centering
    \includegraphics[keepaspectratio,width=.48\textwidth]{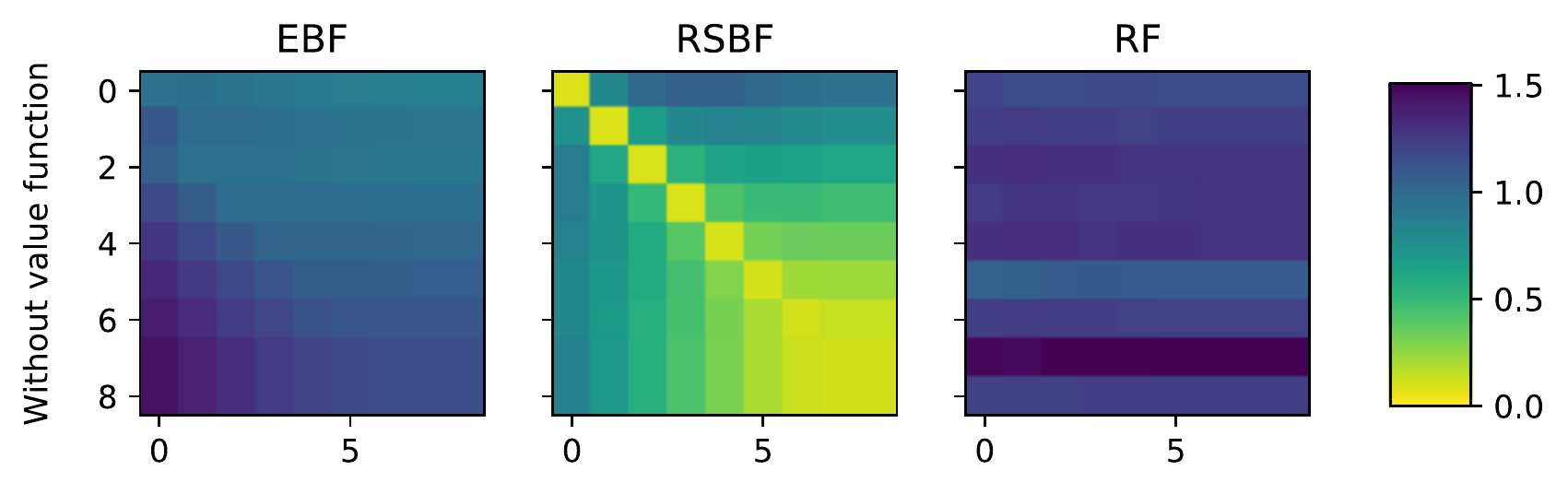}
    
    \includegraphics[keepaspectratio,width=.48\textwidth]{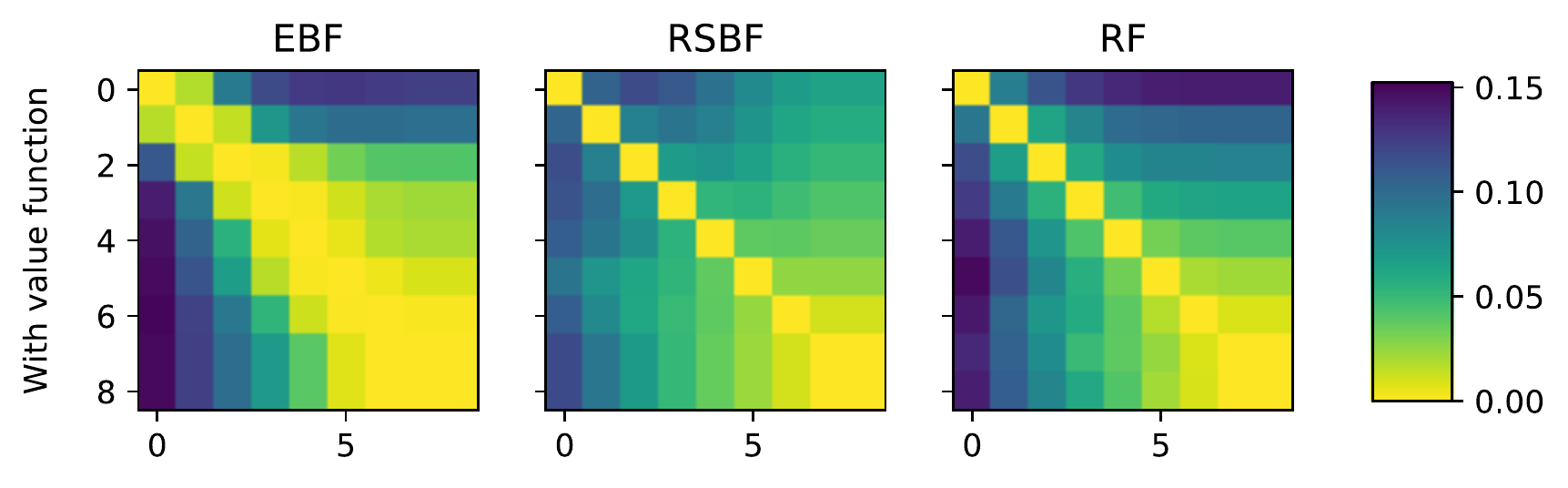}
    \caption{Transfer of EBFs, RSBFs, and RFs across the value-improvement path of a chain MDP, with and without the value function as an additional feature.
    }
    \label{fig:chain-transfer}
    \vspace{0.2cm}
\end{figure}

\begin{figure*}
    \includegraphics[width=\textwidth]{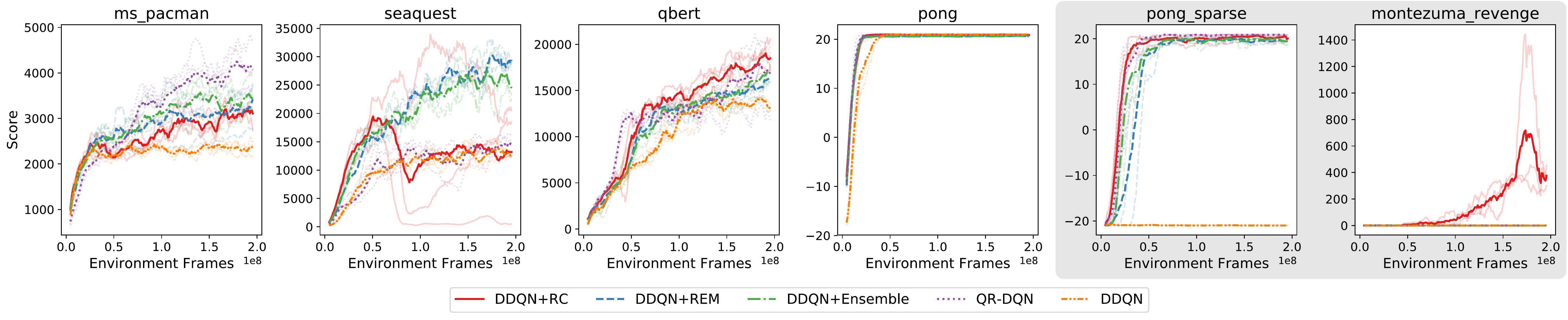}
    \caption{Learning curves for DDQN, DDQN+RC, DDQN+Ensemble, DDQN+REM, and QR-DQN agents on several dense reward ALE environments. Two games with sparse rewards are shown in the shaded box.}
    \label{fig:deep-rl}
    \vspace{0.5cm}
\end{figure*}
\begin{figure}
    \centering
    \includegraphics[width=.5\textwidth]{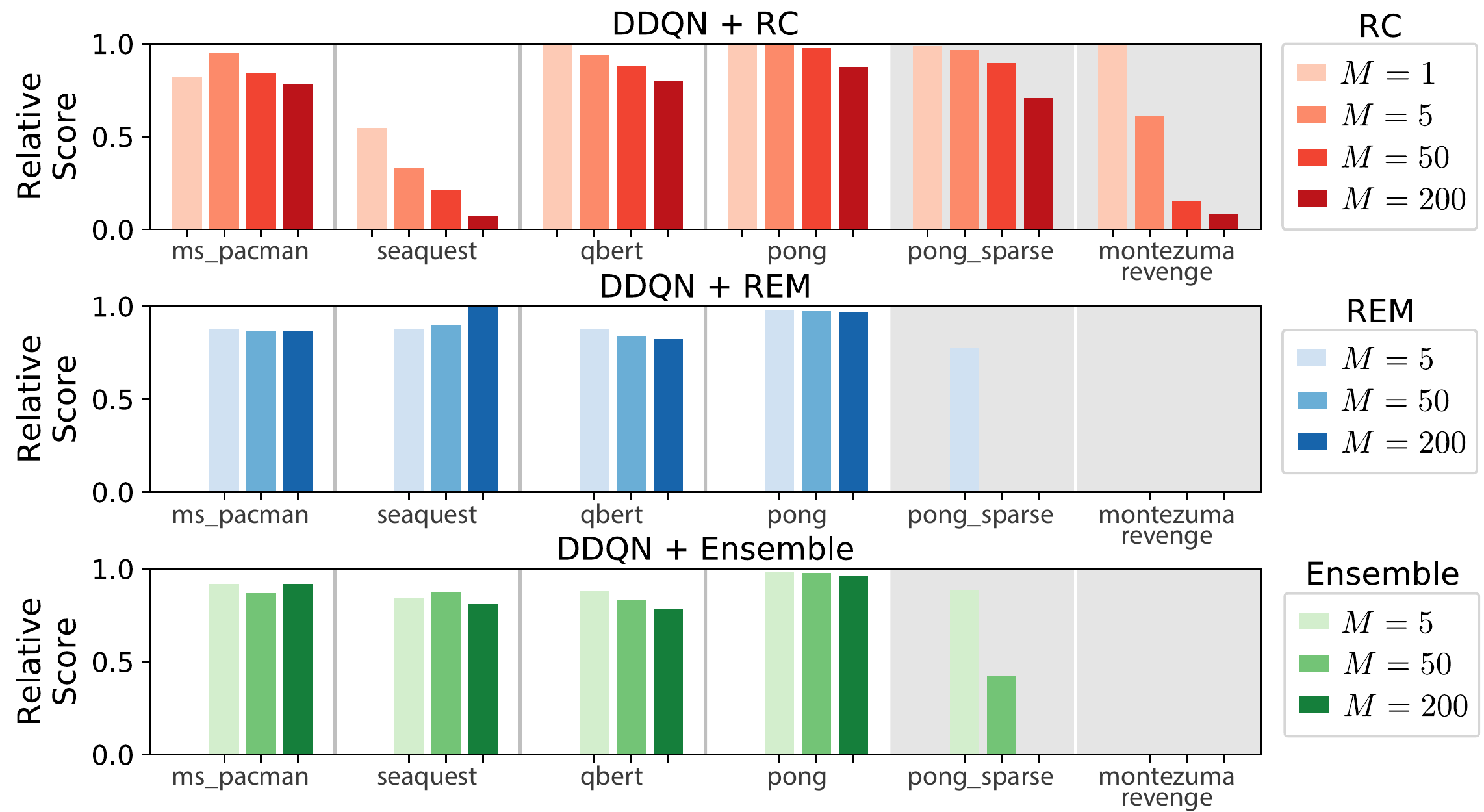}
    \caption{Sweep over number of auxiliary heads for RC, REM and Ensemble. Each bar corresponds to a single game, presented in the same order as Figure~\ref{fig:deep-rl}. Relative score is the per-game score divided by the maximum over all algorithms and hyperparameters.}
    \label{fig:naux}
\end{figure}

To address this question empirically we run tabular policy iteration on a stochastic chain MDP, yielding a sequence of policies $(\pi_j)_{j=1}^J$ and associated value functions $(V_j)_{j=1}^J$. We then compute EBFs and RSBFs associated with $P^\pi$, and compute the acute angle between $V_j$ and the subspace spanned by these features, for each $j \in [J]$; this is in fact equal to a generalisation of the Grassmann distance for subspaces of unequal dimension \citep{ye2016schubert}. We also compare against a baseline of isotropic randomly-generated features; full experimental details are provided in Appendix~\ref{sec:experiment-details}. 

Results are given in the top row of Figure~\ref{fig:chain-transfer} for the case of four features; each individual heatmap plots Grassmann distances, with rows indexing the policy that generated the features, and columns indexing the policy yielding the target value function. In general, the RSBFs provide better transfer across policies in the improvement path relative to random features and EBFs. For times $j, j' \in [J]$, we observe that  the Grassmann distance between the RSBFs of $P^{\pi_j}$ and the value function of $j'$, $V^{\pi_{j'}}$, increases as $|j - j'|$ does. 

We also evaluate transfer when the vector $V^{\pi_j}$ is added to the set of features, in the bottom row of Figure~\ref{fig:chain-transfer}. This contains the subspace to which the value functions described in Proposition~\ref{prop:many-value-functions} converge, as the limiting solutions can be described as being of the form $V^{\pi_j} + u$ for $u \in \langle U_{1:\repdim} \rangle$. Surprisingly, we find that in this setting the EBFs for $\pi_j$ outperform RSBFs specifically in predicting $V^{\pi_{j+1}}$. This can be observed in the upper off-diagonal the EBF plot in Figure~\ref{fig:chain-transfer}. We conclude that the dynamics induced by TD updates may be particularly beneficial to transfer between policies in the value-improvement path, and further study of this phenomenon is a promising avenue for future work.

\subsection{Auxiliary tasks for large-scale environments with sparse rewards}\label{sec:deep-rl}

We now consider the problem of deep RL agents interacting with environments with sparse reward structure. Motivated by the theoretical results obtained in earlier sections, we study the effects of a variety of auxiliary tasks in this setting; our analysis indicates that random cumulants may be particularly effective in preventing representation collapse in such environments.

We modify a Double DQN agent \citep{van2016deep} with a variety of auxiliary tasks, including random cumulants (RC) \citep{dabney2020value}, random ensemble mixtures (REM) \citep{agarwal2019striving}, an ensembling approach \citep{anschel2017averaged}, and also compare with QR-DQN, a distributional agent \citep{dabney2018distributional}. Full details of these agents, including specific implementation details for deep RL versions of these auxiliary tasks, are given in Appendix~\ref{sec:experiment-details}.

We evaluate these agents on a series of Atari games from the Arcade Learning Environment \citep{bellemare2013arcade,machado2018revisiting}, comprising Montezuma's Revenge, Pong, MsPacman, Seaquest, and Q$^*$bert.
In addition, we evaluate on a more challenging, sparse reward, version of Pong in which the agent does not observe negative rewards.\footnote{We attempted a similar modification of the other three dense reward games, but found no agent or configuration that was able to successfully learn on them. Full details, along with hyperparameters and results on these unsuccessful modifications, are given in Appendix~\ref{sec:experiment-details}.}

Figure~\ref{fig:deep-rl} shows the main results from these experiments.
Recall from Section~\ref{sec:aux-dynamics} that the random cumulant auxiliary task causes the agent's representation to converge to the RSBFs of $P^\pi$  in the idealized setting. 
We hypothesize that this auxiliary task will therefore improve agent performance over ensemble-based auxiliary tasks in sparse-reward environments.
Our empirical results support our hypothesis, with the random cumulant agent (DDQN+RC) generally performing well in the sparse-reward environments. Of particular note is the strong performance in Montezuma's Revenge. We expected reduced performance for DDQN+RC in the dense-reward games, but were surprised to observe improved performance here as well. However, we do note the instability seen in Seaquest.
Finally, Figure~\ref{fig:naux} shows the result of a hyperparameter sweep over the number of auxiliary task heads, revealing relevant differences in the three methods considered. Overall, we find that random cumulants are a promising auxiliary task specifically in sparse-reward environments, and believe that this motivates further theoretical development to close the gap between the dynamics of representations in deep RL agents, and the settings studied above.

\section{Related work}\label{sec:related-work}

As described previously, a wide variety of auxiliary tasks have been demonstrated to improve performance in deep reinforcement learning \citep{sutton2011horde, anschel2017averaged, jaderberg2017reinforcement, bellemare2017distributional,  barreto2017successor, mirowski2017learning, du2018adapting, riedmiller2018learning, oord2018representation, dabney2018distributional, gelada2019deepmdp, fedus2019hyperbolic,kartal2019terminal,lin2019adaptive, stooke2020decoupling,dabney2020value,guo2020bootstrap,laskin2020curl}. 
These works principally focus on demonstrating the empirical benefits of these tasks on agent performance, leaving an analysis as to why these effects occur to future work.
Follow-up work on distributional reinforcement learning, for example, has begun to close the theory-practice gap \citep{lyle2019comparative, rowland2018analysis}. There is also a growing body of work on understanding the impact of representations on the sample efficiency of reinforcement learning; see for example \citet{du2019good,van2019comments,lattimore2020learning}.

Further analysis of auxiliary tasks in deep reinforcement learning focuses on their effect on the representation learned by the agent \citep{bellemare2019geometric, dabney2020value} and its ability to approximate the value functions of several policies. Additionally, \citet{ghosh2020representations} propose an auxiliary task based on its effect on the \textit{stability} of learned representations. \citet{kumar2020implicit} also study representation collapse in deep reinforcement learning, in the absence of auxiliary tasks. Aside from reinforcement learning, there are also related empirical approaches using bootstrapping to shape representations in self-supervised learning \citep{grill2020bootstrap}, and theoretical work in characterising the regularising effect of self-distillation \citep{mobahi2020self} and over-parametrisation \citep{arora2019implicit,li2018algorithmic} in supervised learning.

Recent work in representation learning has its roots in the broader feature selection problem in reinforcement learning. This problem has been extensively studied \citep{parr2008analysis,parr2007analyzing,mahadevan2007proto,petrik2007analysis,mahadevan2009learning,kroon2009automatic,fard2013bellman,jiang2015abstraction},
particularly in the linear value function approximation setting.
\citet{parr2008analysis} analyze power-iteration-style feature learning methods, to which our analysis of the convergence of features presents notable similarity. Also closely related is the notion of feature adaptation \citep{menache2005basis,yu2009basis,di2010adaptive,bhatnagar2013feature,prabuchandran2014actor,prabuchandran2016actor,barker2019unsupervised}, in which features are adaptively updated simultaneously with the weights used for value function approximation.

\section{Conclusion}

We have introduced a framework based on learning dynamics to analyse representations in reinforcement learning. This led to a variety of theoretical results concerning learning with and without the presence of auxiliary tasks, as well as several straightforward models for studying representation learning empirically. With this, we were able to thoroughly test a new hypothesis on the effectiveness of particular auxiliary tasks in sparse reward environments, which led to improved understanding of representation learning in RL, as well as practical modifications to deep RL algorithms.

There are many natural follow-up directions to this work. One direction is to further develop the theory associated with the learning dynamics perspective, in order to (i) understand how additional types of auxiliary tasks, in particular auxiliary tasks that don't correspond to value functions, affect the representations in the learning models developed in this paper, (ii) extend the learning models themselves to incorporate further aspects of large-scale learning scenarios, such as sample-based learning and state-visitation distribution corrections, and (iii) investigate other common learning dynamics, such as gradient TD methods \citep{sutton2008convergent}.
There is also scope for further empirical work to develop an understanding of which auxiliary tasks are useful in certain types of environments, extending the observations relating to sparse-reward environments and random cumulants in this paper.
We hope that the community will find this framework useful for testing many more hypotheses in a wide range of scenarios, ultimately leading to a better understanding of how reinforcement learning and representation learning fit together.

\clearpage

\section*{Acknowledgements}

We thank Diana Borsa for detailed feedback on a preliminary version of this paper, and Kris Cao, Marc Bellemare, and the wider DeepMind team for valuable discussions. We also thank the anonymous reviewers for useful comments during the review process. CL is funded by an Open Phil AI Fellowship.

\bibliographystyle{plainnat}
\bibliography{references}

\clearpage

\onecolumn

\begin{appendix}

\hsize\textwidth
  \linewidth\hsize \toptitlebar {\centering
  {\Large\bfseries On The Effect of Auxiliary Tasks on Representation Dynamics: \\ Appendices \par}}
 \bottomtitlebar

\section{Additional results}

In this section, we state and prove some additional lemmas that are useful in proving the results stated in the main paper.

\begin{lemma}\label{lem:grassmann1}
    Let $x \in \mathbb{R}^d$, and let $(v_t)_{t \geq 0}$ be a sequence of vectors in $\mathbb{R}^d$ satisfying $v_t = f(t) x + o(f(t))$, for some function $f : [0, \infty) \rightarrow (0, \infty)$. Then $d(\langle v_t\rangle , \langle x\rangle ) \rightarrow 0$ as $t \rightarrow \infty$.
\end{lemma}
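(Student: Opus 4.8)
We want to show that if $v_t = f(t)x + o(f(t))$, then the Grassmann distance between the lines $\langle v_t \rangle$ and $\langle x \rangle$ tends to $0$. Since both subspaces are one-dimensional, Definition~2.2 tells us that the Grassmann distance is just the single principal angle $\theta_1 = \cos^{-1}(\sigma_1(\mathbf{Y}_1^\top \mathbf{Y}_2))$, where $\mathbf{Y}_1, \mathbf{Y}_2$ are unit vectors spanning the two lines. In the one-dimensional case this reduces to the familiar acute angle between the vectors, so it suffices to show that this angle goes to $0$.

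The plan is to compute the cosine of the angle directly and show it converges to $1$. Write $v_t = f(t)\bigl(x + r_t\bigr)$, where $r_t = o(1)$ as $t \to \infty$ (dividing the hypothesis through by $f(t) > 0$; this is legitimate precisely because $f$ is strictly positive). The unit vector in the direction of $v_t$ is then $\hat{v}_t = (x + r_t)/\|x + r_t\|$, and the unit vector along $x$ is $\hat{x} = x/\|x\|$. I would compute
\begin{align*}
    \cos\theta_1 = \frac{|\langle x + r_t, x\rangle|}{\|x + r_t\|\,\|x\|} = \frac{|\|x\|^2 + \langle r_t, x\rangle|}{\|x + r_t\|\,\|x\|} \, .
\end{align*}
As $t \to \infty$, $r_t \to 0$, so by continuity of the inner product and the norm the numerator converges to $\|x\|^2$ and the denominator to $\|x\|^2$, giving $\cos\theta_1 \to 1$, hence $\theta_1 \to 0$ and $d(\langle v_t\rangle, \langle x\rangle) \to 0$.

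The only genuine subtlety, and the step I would be most careful about, is the implicit nondegeneracy assumption that $x \neq 0$: if $x = 0$ the line $\langle x \rangle$ is not a well-defined one-dimensional subspace and the statement is vacuous or ill-posed, so I would note that we take $x \neq 0$ (consistent with how the lemma is applied later). Given $x \neq 0$, the denominator $\|x + r_t\|\,\|x\|$ is bounded away from $0$ for all large $t$ (since $\|x + r_t\| \to \|x\| > 0$), so no division-by-zero issue arises and the limit is clean. The remainder is routine continuity; there is no real obstacle beyond handling this degenerate case and being explicit that the little-$o$ notation means $\|r_t\| \to 0$ after normalising by $f(t)$.
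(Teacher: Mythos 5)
Your proof is correct and follows essentially the same route as the paper's: both normalise by $f(t)>0$ to reduce to $x + o(1)$ and then compute the cosine of the acute angle between the two lines (the paper via a $\min$ of two $\arccos$ expressions, you via the absolute value of the inner product), concluding by continuity that the angle tends to $0$. Your explicit remark that $x \neq 0$ is needed is a small point the paper leaves implicit, but it does not change the argument.
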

\begin{proof}
    The Grassmann distance $d(\langle v_t\rangle , \langle x\rangle )$ between two one-dimensional subspaces has a particular simple form, given by
    \begin{align*}
        d(\langle v_t \rangle , \langle x \rangle ) = \min\left( \arccos\left( \frac{\langle v_t, x \rangle}{\|v_t\| \|x\|} \right) , \arccos\left( \frac{\langle -v_t, x \rangle}{\|v_t\| \|x\|} \right) \right) \, .
    \end{align*}
    In our case, for sufficiently large $t$ this yields
    \begin{align*}
        d(\langle v_t\rangle , \langle x\rangle ) & = \arccos\left( \frac{\langle f(t) x + o(f(t)), x \rangle}{\| f(t) x + o(|f(t)|) \| \|x\|} \right) \\
        & = \arccos\left( \frac{\langle  x + o(1), x \rangle}{\| x + o(1) \| \|x\|} \right)\\
        & \rightarrow \arccos\left( \frac{\langle  x , x \rangle}{\| x \| \|x\|} \right)\\
        & = 0 \, .
    \end{align*}
\end{proof}

\begin{lemma}\label{lem:grassmann2}
    Let $U_1,\ldots,U_{|\mathcal{X}|}$ be a basis for $\mathbb{R}^{\statespace}$, let $K < |\mathcal{X}|$, and let $(a_{ij} |i \in [K], j \in [|\mathcal{X}|])$ be real coefficients.
    Let $0 < \beta_1 < \cdots < \beta_{|\mathcal{X}|}$
    , and consider time-dependent vectors $W_1(t),\ldots,W_d(t)$ defined by
    \begin{align*}
        W_i(t) = \sum_{j=1}^{|\mathcal{X}|} a_{ij} e^{-\beta_j t} U_j \, , \quad t \geq 0 \, .
    \end{align*}
    Then for almost all sets of coefficients $(a_{ij} |i \in [K], j \in [|\mathcal{X}|])$, we have
    \begin{align*}
        d(W_{1:K}(t) , U_{1:K}) \rightarrow 0 \, .
    \end{align*}
\end{lemma}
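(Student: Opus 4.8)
The plan is to recognise this as a continuous-time analogue of the block power method (subspace iteration): the $K$ slowest-decaying modes $U_1,\ldots,U_K$ (those with decay rates $\beta_1 < \cdots < \beta_K$) survive a suitable rescaling, while the remaining modes are exponentially suppressed relative to them. First I would collect the defining relations in matrix form. Write $\mathbf{U} = [U_1 \mid \cdots \mid U_{|\mathcal{X}|}] \in \mathbb{R}^{\mathcal{X} \times |\mathcal{X}|}$, which is invertible since the $U_j$ form a basis, together with $D(t) = \mathrm{diag}(e^{-\beta_1 t}, \ldots, e^{-\beta_{|\mathcal{X}|} t})$ and the coefficient matrix $A = (a_{ij}) \in \mathbb{R}^{K \times |\mathcal{X}|}$. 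Then the matrix $\mathbf{W}(t) \in \mathbb{R}^{\mathcal{X} \times K}$ whose columns are $W_1(t),\ldots,W_K(t)$ satisfies $\mathbf{W}(t) = \mathbf{U} D(t) A^\top$. Partitioning the columns into the leading $K$ modes and the rest, so that $\mathbf{U} = [\mathbf{U}_1 \mid \mathbf{U}_2]$, $A = [A_1 \mid A_2]$ with $A_1 \in \mathbb{R}^{K \times K}$, and $D(t) = \mathrm{diag}(D_1(t), D_2(t))$, gives $\mathbf{W}(t) = \mathbf{U}_1 D_1(t) A_1^\top + \mathbf{U}_2 D_2(t) A_2^\top$.

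The crucial step is the choice of the measure-zero exceptional set, namely $\{A : \det A_1 = 0\}$. Off this set $A_1^\top$ is invertible, so I would right-multiply $\mathbf{W}(t)$ by the invertible matrix $(A_1^\top)^{-1} D_1(t)^{-1}$ — an operation that preserves the column span — to obtain $\mathbf{U}_1 + \mathbf{U}_2 E(t)$, where $E(t) = D_2(t) A_2^\top (A_1^\top)^{-1} D_1(t)^{-1}$. Each entry of $E(t)$ carries a factor $e^{-(\beta_{K+\ell} - \beta_i) t}$ with $i \le K < K+\ell$, and since $\beta_{K+\ell} > \beta_K \ge \beta_i$ the exponent is strictly negative, so $E(t) \to 0$ as $t \to \infty$. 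Consequently $\langle \mathbf{W}(t)\rangle = \langle \mathbf{U}_1 + \mathbf{U}_2 E(t)\rangle \to \langle \mathbf{U}_1\rangle = \langle U_{1:K}\rangle$. I would also note that $\mathbf{W}(t)$ has rank $K$ for every $t$, since $\mathbf{U}$ and $D(t)$ are invertible and $A$ has full row rank once $A_1$ is invertible; hence $W_{1:K}(t)$ is genuinely $K$-dimensional and the Grassmann distance is well defined.

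Finally I would upgrade this span convergence to convergence in Grassmann distance by a continuity argument. Viewed as a function of a pair of full-column-rank matrices — through the orthonormalisation $M \mapsto M(M^\top M)^{-1/2}$, the singular values $\sigma_k(\cdot)$ of Definition~\ref{def:grassmann-distance}, and $\cos^{-1}$ — the Grassmann distance is continuous on full-rank matrices. Since $\mathbf{U}_1 + \mathbf{U}_2 E(t) \to \mathbf{U}_1$ with $\mathbf{U}_1$ of full rank $K$, continuity yields $d(\langle \mathbf{W}(t)\rangle, \langle U_{1:K}\rangle) = d(\langle \mathbf{U}_1 + \mathbf{U}_2 E(t)\rangle, \langle \mathbf{U}_1\rangle) \to 0$, as required. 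This is exactly the $K$-dimensional analogue of Lemma~\ref{lem:grassmann1}.

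The main obstacle is conceptual rather than computational. The naive move is to let $t \to \infty$ termwise in $\mathbf{W}(t)$, but this collapses every column onto the single slowest mode $U_1$ and destroys the $K$-dimensionality needed for the statement to make sense. The essential idea is the right-rescaling by $D_1(t)^{-1}$, the renormalisation at the heart of subspace iteration, and recognising that it leaves the column span unchanged; getting this normalisation right is the one genuinely non-mechanical step. The remaining pieces — identifying $\det A_1 \ne 0$ as the generic condition, verifying the decay of $E(t)$, and invoking continuity of the Grassmann distance — are routine.
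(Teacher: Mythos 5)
Your proof is correct and follows essentially the same route as the paper's: restrict to a generic coefficient set, perform column operations (the paper phrases this as row reduction on the coefficient matrix) to replace $W_{1:K}(t)$ by a spanning set of the form $U_k + O(e^{-t(\beta_{K+1}-\beta_k)})$, and then pass to the Grassmann distance via the singular-value characterisation. If anything, your exceptional set ($\det A_1 = 0$) is the sharper one — the paper excludes only coefficient matrices $A$ that fail to have full row rank, which is not quite enough for its own reduction step — but both are measure-zero conditions and the arguments are otherwise identical.
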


\begin{proof}
    Without loss of generality, we may take the vectors $U_1,\ldots,U_{|\mathcal{X}|}$ to be the canonical basis vectors. Under the assumptions of the theorem, we exclude initial conditions for which the matrix $A$ with $(i,j)$\textsuperscript{th} element $a_{ij}$ is not full rank. Note that under this condition, the matrix $A_t$ with $(k,i)$\textsuperscript{th} element $a_{ki}e^{\beta_i t}$ is also full rank for all but finitely many $t$. 
    By performing row reduction operations and scaling rows, for all such $t$ we may pass from $(W_{\repix}(t) \mid \repix \in [\repdim])$ to an alternative spanning set $(\widetilde{W}_{\repix}(t) \mid \repix \in [ \repdim])$ of the same subspace such that $\widetilde{W}_{\repix}(t) - U_\repix \in \langle U_{\repdim+1:|\mathcal{X}|}\rangle$, and $\|\widetilde{W}_\repix(t) - U_\repix\| = O(e^{-t(\beta_{\repdim+1} - \beta_\repix)}) = o(1)$. We therefore obtain an orthonormal basis for this subspace of the form $U_1 + o(1),\ldots, U_\repdim +o(1)$.
    
    We now use the singular value decomposition characterisation of Grassmann distance in Definition~\ref{def:grassmann-distance}. Since we have obtained an orthonormal basis for the subspace $\langle W_\repix(t) \mid \repix \in [\repdim] \rangle$, the top-$K$ singular values of the matrix $(\sum_{\repix=1}^\repdim U_\repix U_\repix^\top)(\sum_{\repix=1}^\repdim (U_\repix + o(1))( U_\repix + o(1))^\top )$ determine the Grassmann distance. However, this matrix is equal to $\text{diag}(1,\ldots,1,0,\ldots,0) + o(1)$, with $K$ entries of $1$ in the diagonal matrix.
    But the top-$K$ singular values this matrix are $1+o(1)$, and so the principal angles between the subspaces are $o(1)$, and hence the Grassmann distance between the subspaces is $o(1)$, as required.
\end{proof}

\begin{restatable}{lemma}{lemmaRewardMatrix}\label{lem:reward-matrix}
For $M \in \mathbb{N}$, let $(r^m)_{m=1}^M$ be independent random variables drawn from some fixed mean-zero distribution in $\mathscr{P}(\mathbb{R}^{\statespace\times\actionspace})$ such that the covariance between coordinates $(x, a), (y, a)$ is $\Sigma_{xy}$, independent of $a \in \mathcal{A}$. Let $(\mathbf{w}^m)_{m=1}^M$ be independent random variables taking values in $\mathbb{R}^{\repdim \times \actionspace}$, with columns drawn independently from $\mathcal{N}(0, (1/M)I)$.
Then $\sum_{m=1}^M r^m (\mathbf{w}^m)^\top$ converges (in distribution) to a mean-zero Gaussian distribution over $\mathbb{R}^{\statespace \times \repdim}$, with independent columns, and individual columns having covariance matrix $\Sigma$.
\end{restatable}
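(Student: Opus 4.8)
The plan is to recognise $\sum_{m=1}^M r^m(\mathbf{w}^m)^\top$ as a normalised sum of independent, identically distributed, mean-zero random matrices and then apply the multivariate central limit theorem (CLT). First I would absorb the $1/M$ scaling of the weight covariance into the CLT normalisation: since each column of $\mathbf{w}^m$ is $\mathcal{N}(0,(1/M)I)$, we may write $\mathbf{w}^m \overset{D}{=} M^{-1/2}\widetilde{\mathbf{w}}^m$, where $\widetilde{\mathbf{w}}^m \in \mathbb{R}^{\repdim \times \actionspace}$ has independent $\mathcal{N}(0,I)$ columns. This gives
\[
    \sum_{m=1}^M r^m(\mathbf{w}^m)^\top = \frac{1}{\sqrt{M}}\sum_{m=1}^M Z^m, \qquad Z^m := r^m(\widetilde{\mathbf{w}}^m)^\top \, .
\]
Regarded as random elements of the finite-dimensional space $\mathbb{R}^{\statespace \times \repdim}$, the $Z^m$ are i.i.d.; they are mean-zero because $r^m$ and $\widetilde{\mathbf{w}}^m$ are independent and $\widetilde{\mathbf{w}}^m$ is mean-zero; and they have finite second moments because $r^m$ has finite per-coordinate variances by hypothesis and $\widetilde{\mathbf{w}}^m$ is Gaussian. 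The multivariate CLT then yields convergence in distribution to $\mathcal{N}(0,C)$, where $C$ is the covariance of a single summand $Z^1$.

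The substantive step is to compute $C$ and read off the claimed structure. Writing $Z_{x,\repix} = \sum_{a \in \actionspace} r_{x,a}\widetilde{w}_{\repix,a}$ and using the independence of $r$ and $\widetilde{\mathbf{w}}$, we obtain
\[
    C_{(x,\repix),(y,l)} = \sum_{a, b \in \actionspace}\mathbb{E}[r_{x,a} r_{y,b}]\,\mathbb{E}[\widetilde{w}_{\repix,a}\widetilde{w}_{l,b}] \, .
\]
The crucial input is that the weight columns are independent and isotropic, so $\mathbb{E}[\widetilde{w}_{\repix,a}\widetilde{w}_{l,b}] = \delta_{ab}\,\delta_{\repix l}$; this collapses the double sum to $\delta_{\repix l}\sum_{a}\mathbb{E}[r_{x,a}r_{y,a}]$, which by the same-action covariance hypothesis is a multiple of $\Sigma_{xy}$ (and exactly $\Sigma_{xy}$ in the single-action, state-only instantiation used for Theorem~\ref{thm:distribution}). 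The Kronecker delta $\delta_{\repix l}$ does double duty: it shows that distinct columns of the limit are uncorrelated, hence independent since the limit is jointly Gaussian, and that each individual column has covariance $\Sigma$.

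I expect the main difficulty to be bookkeeping rather than conceptual: tracking the four-index covariance tensor $C$ and verifying that it is precisely the column-independence and isotropy of the weights that simultaneously delivers both the independence of the limiting columns and their reduction to $\Sigma$ (neither conclusion survives if the weight columns are correlated or anisotropic). To handle the joint Gaussianity and the covariance in one stroke, I would apply the Cram\'er--Wold device: test against an arbitrary linear functional $\langle \Theta, \cdot \rangle$ of the matrix, reduce the problem to a scalar sum of i.i.d.\ terms, and invoke the classical Lindeberg--L\'evy CLT, which returns the full covariance $C$ without a separate argument for joint normality.
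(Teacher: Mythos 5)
Your proposal is correct and takes essentially the same route as the paper's proof, which likewise rescales $\mathbf{w}^m = M^{-1/2}\varepsilon^m$ with $\varepsilon^m$ standard normal and invokes the central limit theorem on the resulting normalised i.i.d.\ sum; you simply supply the covariance computation and the Cram\'er--Wold step that the paper leaves implicit. Your parenthetical observation that the sum over actions contributes a factor of $|\actionspace|$ to the column covariance (so that it equals $\Sigma$ exactly only in the state-only instantiation) is a fair and accurate reading of the statement.
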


\begin{proof}
    The proof simply follows by noting that $\sum_{m=1}^M r^m \mathbf{w}^m$ may be written $1/\sqrt{M} \sum_{m=1}^M r^m \varepsilon^m$, with $(\varepsilon^m)_{m=1}^\infty$ i.i.d.~$N(0,I)$ random variables. The individual terms have the desired mean and variance, and the resulting converge in distribution now follows from the central limit theorem.
\end{proof}

\begin{restatable}{lemma}{lemmaWLimit}\label{lem:w-limit}
For fixed $M$, let $(\mathbf{w}^m)_{m=1}^M$, $\mathbf{w}^m \in \mathbb{R}^d$, be sampled i.i.d. according to $\mathcal{N}(0, \frac{1}{M}I)$. Then the following hold.
\begin{equation}
    \lim_{M \rightarrow \infty} \sum_{m=1}^M \mathbf{w}^m (\mathbf{w}^m)^\top = I \text{ and } \lim_{M \rightarrow \infty} \sum_{m=1}^M \mathbf{w}^m \overset{D}{=} \epsilon \sim \mathcal{N}(0, I)
\end{equation}
\end{restatable}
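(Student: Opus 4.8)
The plan is to exploit the fact that each $\mathbf{w}^m$ can be written as $\tfrac{1}{\sqrt{M}}\varepsilon^m$, where the $\varepsilon^m$ are i.i.d.\ $\mathcal{N}(0, I)$ random vectors in $\mathbb{R}^d$, and then to treat the two claimed limits separately. Both reduce to standard facts about Gaussian vectors once this rescaling has been performed.

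For the first claim, I would rewrite the sum of outer products as
\begin{align*}
    \sum_{m=1}^M \mathbf{w}^m (\mathbf{w}^m)^\top = \frac{1}{M}\sum_{m=1}^M \varepsilon^m (\varepsilon^m)^\top \, .
\end{align*}
The summands $\varepsilon^m (\varepsilon^m)^\top$ are i.i.d.\ matrix-valued random variables with common mean $\mathbb{E}[\varepsilon (\varepsilon)^\top] = I$, and each entry $\varepsilon^m_i \varepsilon^m_j$ is integrable (Gaussian coordinates possess moments of all orders, so the products have finite first moment by Cauchy--Schwarz). Applying the strong law of large numbers entrywise then yields almost-sure convergence of the average to $I$, which establishes the first limit.

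For the second claim, I would simply observe that a finite sum of independent Gaussian vectors is again Gaussian, with covariance equal to the sum of the individual covariances. Since each $\mathbf{w}^m \sim \mathcal{N}(0, \tfrac{1}{M}I)$, this gives
\begin{align*}
    \sum_{m=1}^M \mathbf{w}^m \sim \mathcal{N}\Big(0, \sum_{m=1}^M \tfrac{1}{M} I\Big) = \mathcal{N}(0, I)
\end{align*}
exactly, for every fixed $M$. Hence the distribution of the partial sum does not depend on $M$ at all, and the limit in distribution is $\mathcal{N}(0, I)$ trivially, without any appeal to the central limit theorem.

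I do not anticipate a genuine obstacle here, as both parts are direct consequences of classical results. The only points requiring a little care are to be explicit about the mode of convergence in the first claim (almost-sure, via the strong law) so that the equality sign is interpreted correctly, and to confirm the integrability condition needed for the law of large numbers, which holds because Gaussian coordinates have finite moments of all orders.
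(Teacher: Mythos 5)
Your proof is correct, and for the second limit it is essentially the paper's argument (indeed slightly cleaner: you note that $\sum_{m=1}^M \mathbf{w}^m \sim \mathcal{N}(0,I)$ \emph{exactly} for every $M$, so no limit theorem is needed, whereas the paper somewhat superfluously invokes ``part 1'' before making the same observation). For the first limit your route differs from the paper's in the tool used and in the mode of convergence obtained. The paper works entrywise with the triangular array directly: it computes the mean and variance of each diagonal entry $\sum_m (\mathbf{w}^m_j)^2$ and each off-diagonal entry $\sum_m \mathbf{w}^m_j \mathbf{w}^m_\ell$, shows the variances are $O(1/M)$, and concludes convergence in probability via Chebyshev --- which is exactly the $\overset{P}{=}$ statement consumed later in Theorem~\ref{thm:infinite-heads}. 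You instead rescale to $\varepsilon^m = \sqrt{M}\,\mathbf{w}^m$ and apply the strong law of large numbers to $\frac{1}{M}\sum_m \varepsilon^m(\varepsilon^m)^\top$. This yields the stronger almost-sure convergence, but it implicitly requires coupling the arrays across different $M$ by fixing a single infinite i.i.d.\ sequence $(\varepsilon^m)_{m\geq 1}$; as literally stated, the lemma concerns a fresh sample of size $M$ for each $M$ (a triangular array), for which only convergence in probability or distribution is meaningful without such a coupling. Under the natural coupling your conclusion implies the paper's, so nothing is lost, but it is worth being explicit that this identification is being made. Both arguments rely on the same integrability facts about Gaussian coordinates, so the difference is one of packaging rather than substance.
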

\begin{proof}
We prove two results on the limit of $W = \sum_{m=1}^M \mathbf{w}^m (\mathbf{w}^m)^\top$ as $k \rightarrow \infty$. First
\begin{align*}
    \lim_{M \rightarrow \infty} \sum_{m=1}^M \mathbf{w}^m (\mathbf{w}^m)^\top &\overset{P}{=} I \, , \\ 
    \intertext{which we observe by evaluating an arbitrary diagonal and off-diagonal element of $\sum_{m=1}^M \mathbf{w}^m (\mathbf{w}^m)^\top$. For the diagonal terms, note that}
    \left(\sum_{m=1}^M \mathbf{w}^m (\mathbf{w}^m)^\top\right) [j, j] &= \sum_{m=1}^M (\mathbf{w}^m_{j})^2
\end{align*}
Now observe that
\begin{align*}
    \mathbb{E}\left\lbrack \sum_{m=1}^M (\mathbf{w}^m_j)^2 \right\rbrack &= M \frac{1}{M} = 1  \, , \text{ and } \quad 
    \text{Var} \left(\sum_{m=1}^M (\mathbf{w}^m_j)^2\right) = M \frac{1}{M^2} \rightarrow 0 \\
\end{align*}
Similarly, for the off-diagonal terms, let $j \not= \ell$. Then we have
\begin{align*}
    \left( \sum_{m=1}^M \mathbf{w}^m (\mathbf{w}^m)^\top\right) [j, \ell] &= \sum_{m=1}^M \mathbf{w}^m_j \mathbf{w}^m_\ell \, ,
\end{align*}
and further
\begin{align*}
    \mathbb{E}\left\lbrack \sum_{m=1}^M \mathbf{w}^m_j \mathbf{w}^m_\ell \right\rbrack  = 0 \, , \text{ and } \quad
    \text{Var}\left(\sum_{m=1}^M \mathbf{w}^m_\ell \mathbf{w}^m_j\right) &= M \frac{1}{M^2} \rightarrow 0; \quad
\end{align*}
The limit in probability is immediately implied by Chebyshev's inequality. The result on $\sum_{m=1}^M \mathbf{w}^m$ follows immediately from part 1 and the fact that a sum of Gaussian random variables is another Gaussian random variable whose mean and variance in this case will be a standard normal.
\end{proof}
\section{Proofs}
\label{sec:proofs}
\lemODESoln*

\begin{proof}
    Equation~\eqref{eq:value-function-ode-solution} can be verified as a solution to Equation~\eqref{eq:value-function-ode} by direct differentiation. Uniqueness of the solution follows since this is an autonomous initial value problem that satisfies the Lipschitz condition, and so the Picard-Lindelh\"of theorem applies.
\end{proof}

\propOneValueFunction*

\begin{proof}
    By Assumption~\ref{assume:value-function-conditions}, $P^\pi$ is diagonalisable, with eigenbasis $U_1,\ldots,U_{|\mathcal{X}|}$, with corresponding eigenvalues $\lambda_{1:|\mathcal{X}|}$ with strictly decreasing magnitudes $|\lambda_1| > \cdots > |\lambda_{|\mathcal{X}|}|$. We note then that $\exp-(t(I - \gamma P^\pi))$ is also diagonaisable under the same basis, with eigenvalues $\exp(t (\gamma \lambda_i - 1))$, for $i=1,\ldots,|\mathcal{X}|$. We may therefore expand $V_0$ with respect to this eigenbasis, and write
    \begin{align*}
        V_0 - V^\pi = \sum_{i=1}^{|\mathcal{X}|} \alpha_i U_i \, ,
    \end{align*}
    for some $\alpha_{1:|\mathcal{X}|} \in \mathbb{R}^{|\mathcal{X}|}$. Now note from the differential equation \eqref{eq:value-function-ode-solution}, we have
    \begin{align*}
        V_t - V^\pi = \exp(-t(I - \gamma P^\pi)) (V_0 - V^\pi) = \sum_{i=1}^{|\mathcal{X}|} \alpha_i \exp(t(\gamma \lambda_i - 1)) U_i \, .
    \end{align*}
    Note that as $P^\pi$ is a stochastic matrix, we have $|\lambda_i| \leq 1$ for all $i=1,\ldots,|\mathcal{X}|$, and hence $\exp(t(\gamma \lambda_i - 1)) \rightarrow 0$ for all $i=1,\ldots,|\mathcal{X}|$. Further, $\exp(t(\gamma \lambda_i - 1)) = o(\exp(t(\gamma \lambda_1 - 1)))$ for all $i=2,\ldots,|\mathcal{X}|$. 
    We make the additional assumption that $\alpha_1 \not= 0$, which makes the `almost every initial condition' assumption in the statement precise. Under this assumption, we therefore have
    \begin{align*}
        V_t - V^\pi = \alpha_1 \exp(t(\gamma \lambda_1 - 1)) U_1 + \sum_{i=2}^{|\mathcal{X}|} \alpha_i \exp(t(\gamma \lambda_i - 1)) U_i = \alpha_1 \exp(t(\gamma \lambda_1 - 1)) U_1 + o(\exp(t(\gamma \lambda_1 - 1))) \, .
    \end{align*}
    Then Lemma~\ref{lem:grassmann1} applies to give $d(\langle V_t - V^\pi \rangle, \langle U_1 \rangle) \rightarrow 0$, as required.
\end{proof}

\propManyValueFunctions*

\begin{proof}
    Expanding $V^{(\repix)}_0 - V^\pi$ with respect to $U_1,\ldots,U_{|\mathcal{X}|}$ for each $\repix=1,\ldots,|\mathcal{X}|$, we obtain expressions of the form
    \begin{align*}
        V^{(\repix)}_0 - V^\pi = \sum_{i=1}^{|\mathcal{X}|} a_{\repix i} U_i \, .
    \end{align*}
    By the ODE solution in Lemma~\ref{lem:ode-soln}, we then have
    \begin{align*}
        V^{(\repix)}_t - V^\pi = \sum_{i=1}^{|\mathcal{X}|} a_{\repix i}  e^{-t(1-\gamma\lambda_i)} U_i \, .
    \end{align*}
    We may now apply Lemma~\ref{lem:grassmann2} to obtain the desired result.
\end{proof}

\lemCoupledDynamics*

\begin{proof}
    This follows immediately by computing the derivatives in Equations~\eqref{eq:phi-ode} \& \eqref{eq:w-ode}, and so we omit the direct calculations.
\end{proof}

\thmInfiniteHeads*

\begin{proof}
We write the dynamics on $\Phi^M_t$ as follows and apply the results of Lemma~\ref{lem:w-limit}. We first consider the scaled initialization setting (implicitly setting the learning rate $\alpha=1$), where we find
\begin{align}
    \partial_t \Phi^M_t &= (I - \gamma P^\pi)\Phi_t^M \sum_{m=1}^M \mathbf{w}^m (\mathbf{w}^m)^\top + \sum_{m=1}^M R^{\pi} (\mathbf{w}^m)^\top \\
    \lim_{M \rightarrow \infty} \partial_t \Phi^M_t &= (I - \gamma P^\pi)\Phi_t^M \lim_{M \rightarrow \infty}\sum_{m=1}^M \mathbf{w}^m (\mathbf{w}^m)^\top  + \lim_{M \rightarrow \infty} R^\pi(\sum_{m=1}^M \mathbf{w}^m)^\top \\
    &\overset{D}{=} (I - \gamma P^\pi )\Phi_t^M I + R^\pi \epsilon^\top, \; \epsilon \sim \mathcal{N}(0, I).
\end{align}

We further observe that, for any finite interval, in the setting of zero reward we obtain \textit{uniform} convergence of the induced trajectory $\Phi_t^M$ to the trajectory of the limiting dynamics. We first observe that for a fixed initialization, we have that the induced dynamics are linear (in the zero-reward setting, affine otherwise) function of $\Phi^M_t$, and so 
\begin{align*}
    \partial_t \Phi^M_t &= (I - \gamma P^\pi) \Phi^M_t \sum_{m=1}^M w^m (w^m)^\top = \mathcal{L}^M \Phi^M_t  \\
    &\text{ where $\mathcal{L}^M(A) = (I - \gamma P^\pi) A \sum_{m=1}^M w^m (w^m)^\top$} \\
    \implies \Phi^M_t &= \exp(t \mathcal{L}^M)\Phi^M_0\; .
\intertext{Because the function $t \mapsto \exp (t A)$ is Lipschitz on a bounded interval for any $A$, this implies that for any finite interval $[0, T]$, the functions $t \mapsto \Phi^M_t$, as well as limiting solution, are $L$-Lipschitz for some $L$. Further, since the exponential is continuous, }
\lim_{M \rightarrow \infty} \Phi^M_t &= \lim_{M \rightarrow \infty} \exp(t \mathcal{L}^M) \Phi^M_0 = \exp(t \lim_{M \rightarrow \infty} \mathcal{L}^M) \Phi_0 \\
&= \exp (-t(I - \gamma P^\pi)) \Phi_0 = \Phi^\infty_t \;.
\intertext{Therefore, the functions $t \mapsto \Phi^M_t$ are $L$-Lipschitz and converge to the limit $\Phi^\infty_t$ on the interval $[0, T]$, which implies that they converge uniformly. }
\end{align*}

To evaluate the scaled learning rate setting, we observe that we now have
\begin{align}
    \partial_t \Phi^M_t &= \frac{1}{M} (I - \gamma P^\pi)\Phi_t^M \sum_{m=1}^M \mathbf{w}^m (\mathbf{w}^m)^\top + \sum_{m=1}^M R^{\pi} (\mathbf{w}^m)^\top \\
    \lim_{M \rightarrow \infty} \partial_t \Phi^M_t &= (I - \gamma P^\pi)\Phi_t^M \lim_{M \rightarrow \infty} \frac{1}{M}\sum_{m=1}^M \mathbf{w}^m (\mathbf{w}^m)^\top  + \lim_{M \rightarrow \infty} \frac{1}{M}R^\pi(\sum_{m=1}^M \mathbf{w}^m)^\top \\
    & = (I - \gamma P^\pi )\Phi_t^M I . \\
       \implies \lim_{M \rightarrow \infty} \Phi^M_t & = \exp(-t(I - \gamma P^\pi)) \Phi_0\, ,
\end{align}
almost surely. 
The principal difference between this and the scaled initialization setting is that here we divide the $R^\pi \mathbf{w}^\top$ term by $\frac{1}{M}$, whereas the scaled initialization is equivalent to scaling by $\frac{1}{\sqrt{M}}$. Therefore the scaled learning rate limit can be computed by the law of large numbers and converges in probability to its mean (zero), whereas under the scaled initialization it converges via the central limit theorem to a Gaussian distribution. 
\end{proof}

\propSubspaceConvergence*

\begin{proof}
    As described in the proof of Theorem~\ref{thm:infinite-heads}, we have $\Phi_t = \exp(-t(I - \gamma P^\pi))(\Phi_0 - \Phi_\infty) + \Phi_\infty$. Under Assumption~\ref{assume:value-function-conditions}, we may now apply an analogous argument as in Proposition~\ref{prop:many-value-functions} to the columns of $\Phi_t - \Phi_\infty$, and apply Lemma~\ref{lem:grassmann2} to obtain the desired result.
\end{proof}

\ThmDistribution*

\begin{proof}
We recall from Theorem~\ref{thm:infinite-heads} that the limiting dynamics follow the distribution
\begin{align}
    \lim_{t \rightarrow \infty} \lim_{M \rightarrow \infty} \Phi_t &\overset{D}{=} \lim_{t \rightarrow \infty} \exp(-t(I - \gamma P^\pi)) (\Phi_0 - (I -\gamma P^\pi)^{-1} Z_\Sigma) + (I - \gamma P^\pi)^{-1} Z_\Sigma\\
    & \overset{D}{=} (I - \gamma P^\pi) ^{-1} Z_\Sigma
\end{align}
for which we can straightforwardly apply known properties of Gaussian distributions: namely, that the distribution of a linear transformation $A$ of a Gaussian random variable with parameters $\mu, \Sigma$ is also Gaussian with mean $A\mu$ and covariance $A \Sigma A^\top$. Letting $A=(I - \gamma P^\pi)$ therefore gives the desired result.
\end{proof}

\propSubspaceConvergenceRC*

\begin{proof}
    As described in the proof of Theorem~\ref{thm:distribution}, we have $\Phi_t = \exp(-t(I - \gamma P^\pi))(\Phi_0 - (I - \gamma P^\pi)^{-1} Z_\Sigma ) + (I - \gamma P^\pi)^{-1} Z_\Sigma $. Under Assumption~\ref{assume:value-function-conditions}, we may now apply an analogous argument as in Proposition~\ref{prop:many-value-functions} to the columns of $\Phi_t - (I - \gamma P^\pi)^{-1} Z_\Sigma$, and apply Lemma~\ref{lem:grassmann2} to obtain the desired result.
\end{proof}

\section{Additional results from Table~\ref{tab:theory}}

We begin this section by noting the following property of systems following linear dynamics.
\begin{lemma}\label{lem:dynamics-aux}
Let $\Phi_t \in \mathbb{R}^{\statespace \times M}$ follow the dynamics $\partial_t \Phi_t \overset{D}{=} A \Phi_t + B$, where $A$ is a linear operator for which all eigenvalues have negative real part, and $B$ is a vector. Then 
\begin{align}
    \lim_{t \rightarrow \infty} \Phi_t = -A^{-1}B \, .
\end{align}
Further, if $A$ is diagonalisable, with all eigenvalues of different magnitudes, 
\begin{align}
    \lim_{t \rightarrow \infty} d( \langle \Phi_t - \Phi_\infty \rangle, \langle U_{1:\repdim}(A) \rangle) = 0 \, ,
\end{align}
where $U_i(A)$ is the eigenvector of $A$ corresponding to the eigenvalue with $i$\textsuperscript{th} largest magnitude.
\end{lemma}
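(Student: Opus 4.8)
The plan is to mirror the two-stage argument already used for the value-function dynamics: first solve the affine ODE in closed form via a matrix exponential to pin down the equilibrium, and then run a power-method / Grassmann argument on the transient to identify the limiting direction. Since every eigenvalue of $A$ has strictly negative real part, $0$ is not an eigenvalue, so $A$ is invertible and $\Phi_\infty := -A^{-1}B$ is well-defined; it is the unique $\Phi$ with $A\Phi + B = 0$, hence the unique equilibrium. Exactly as in Lemma~\ref{lem:ode-soln}, I would verify by direct differentiation that
\[
\Phi_t = \exp(tA)(\Phi_0 - \Phi_\infty) + \Phi_\infty
\]
solves $\partial_t \Phi_t = A\Phi_t + B$, with uniqueness from Picard--Lindel\"of. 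Because all eigenvalues of $A$ have negative real part, $\exp(tA) \to 0$ in operator norm as $t \to \infty$, which immediately yields $\Phi_t \to \Phi_\infty = -A^{-1}B$ and establishes the first claim.

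For the second claim, I would subtract the equilibrium to obtain $\Phi_t - \Phi_\infty = \exp(tA)(\Phi_0 - \Phi_\infty)$, and then diagonalise. Writing the eigendecomposition of $A$ with eigenvectors $U_1, \dots, U_{|\mathcal{X}|}$ and eigenvalues $\mu_1, \dots, \mu_{|\mathcal{X}|}$, I would expand each column of $\Phi_0 - \Phi_\infty$ in this basis; applying $\exp(tA)$ scales the $U_j$-component of column $k$ by $e^{\mu_j t}$, so column $k$ of $\Phi_t - \Phi_\infty$ takes the form $\sum_j a_{kj} e^{\mu_j t} U_j$. This is precisely the structure handled by Lemma~\ref{lem:grassmann2}, with the decay rates played by $-\mathrm{Re}(\mu_j)$; invoking it for almost every initialisation $\Phi_0$ (the excluded set being where the relevant coefficient matrix drops rank) gives Grassmann convergence of $\langle \Phi_t - \Phi_\infty\rangle$ to the span of the \emph{dominant} eigenvectors of $A$, i.e.\ those whose modes decay slowest.

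The step requiring the most care is matching this limiting subspace to the ordering in the statement. Lemma~\ref{lem:grassmann2} is phrased for real, strictly ordered positive decay rates and selects the slowest-decaying directions, so the argument closes cleanly precisely when $A$ is real-diagonalisable with eigenvalues of distinct magnitude --- which is the setting of every entry in Table~\ref{tab:theory}, where $A = -(I - \gamma P)$ for a real-diagonalisable transition operator $P$ and the decay rates $-\mu_j = 1 - \gamma\lambda_j$ are real and distinct. In that regime I would identify the surviving eigenvectors $U_{1:\repdim}(A)$ with the slowest-decaying modes and verify that they coincide with the EBFs of $P$ claimed in the table. The genuine obstacle I expect is that, for a general $A$, the spectrum may be complex, so ordering the eigenvalues by magnitude need not track the real part $\mathrm{Re}(\mu_j)$ that actually governs which direction survives; disposing of that case would require a complex-valued analogue of Lemma~\ref{lem:grassmann2} that separates the dominant-mode real part from the oscillatory phases $e^{i\,\mathrm{Im}(\mu_j)t}$, and I would flag this as the point to treat most carefully.
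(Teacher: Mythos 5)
Your proposal is correct and follows essentially the same route as the paper: solve the affine ODE via the matrix exponential to identify $\Phi_\infty = -A^{-1}B$, then apply Lemma~\ref{lem:grassmann2} to the columns of $\Phi_t - \Phi_\infty = \exp(tA)(\Phi_0 - \Phi_\infty)$ expanded in the eigenbasis of $A$. Your closing caveat about complex spectra and about magnitude-ordering of the eigenvalues of $A$ versus the decay rates is a real subtlety that the paper's own proof glosses over by immediately specialising to $A = -(I-\gamma P^\pi)$, so flagging it is a point in your favour rather than a gap.
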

\begin{proof}
    We observe that the dynamics $\partial_t \Phi_t = A \Phi_t$ induce the trajectory 
    \begin{equation}
       \Phi_t = \exp(tA)\Phi_0 + (I - \exp(tA))(-A^{-1}B)\; ,
    \end{equation}
    with limit $\Phi_\infty = - A^{-1}B$. When $A$ is diagonalizable, we can therefore straightforwardly apply the results of Lemma~\ref{lem:grassmann2} to get that the limiting subspace will be characterized by the top $k$ eigenvectors of $A$. In the settings we are interested in, $A = - ( I - \gamma P^\pi)$ for some $\pi$ and some $\gamma$, and so the principal eigenvectors of $A$ will be the principal eigenvectors of $P^\pi$.
\end{proof}

The following two theorems characterize the learning dynamics under the past policies and multiple timescale auxiliary tasks listed in Table~\ref{tab:theory}. With these characterizations, it becomes straightforward to deduce $\Phi_\infty$ and the limiting subspace error as a direct consequence of the previous lemma.

\begin{theorem} \label{thm:pastpolicies-aux}
Let $\pi_1, \dots, \pi_L$ be a fixed set of policies. Given fixed $M$ and $L$, we define the indexing function $i_m = \lceil\frac{L}{m} \rceil$ for $m \in [1, M]$. Let $\Phi^M_t$ follow the dynamics
\begin{align}
    \partial_t \Phi_t^M &= \sum_{m=1}^M -( (I - \gamma P^{\pi_{i_m}})\Phi_t^M \mathbf{w}^m_t + R^{\pi_{i_m}})(\mathbf{w}^m_t)^\top
\end{align}
Then $\Phi_t^M$ satisfies the following dynamics and trajectory in the limit as $M \rightarrow \infty$, where $\bar{\pi} = \sum_{i=1}^L \pi_i$ and $\epsilon_i \in \mathbb{R}^d$ is an isotropic Gaussian with variance $\frac{1}{L}$. Note that we cannot naively average the rewards without changing the variance of the induced distribution unless $R^{\pi_i} = R^{\pi_j}$ for all $i,j$.
\begin{align}
    \lim_{M \rightarrow \infty} \partial_t \Phi_t^M &\overset{D}{=} -(I - \gamma P^{\bar{\pi}})\Phi_t + \sum_{i=1}^L R^{\pi_{i}} \epsilon_{i}\\
    \lim_{M \rightarrow \infty} \Phi_t^M &\overset{D}{=} \exp (-t(I - \gamma P^{\bar{\pi}} )) (\Phi_0 - \Phi_\infty ) + (I - \gamma P^{\bar{\pi}} )^{-1} \bigg ( \sum_{i=1}^L R^{\pi_{i}} \epsilon_{i}^\top \bigg )
\end{align}

\end{theorem}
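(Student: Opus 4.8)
The plan is to follow the proof of Theorem~\ref{thm:infinite-heads} closely, the only new ingredient being that the ensemble heads are now partitioned across the $L$ policies. First I would regroup the sum over heads according to which policy each head is assigned to. Writing $S_i = \{m : i_m = i\}$ for the set of heads trained on $\pi_i$, the dynamics split into a part that is linear in $\Phi_t^M$ and an affine reward part:
\begin{align*}
    \partial_t \Phi_t^M = -\sum_{i=1}^L (I - \gamma P^{\pi_i}) \Phi_t^M \Big( \sum_{m \in S_i} \mathbf{w}^m (\mathbf{w}^m)^\top \Big) + \sum_{i=1}^L R^{\pi_i} \Big( \sum_{m \in S_i} \mathbf{w}^m \Big)^\top \, ,
\end{align*}
each term now built from a \emph{per-policy} weight sum rather than a single global one.

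Next I would apply Lemma~\ref{lem:w-limit} group by group. Each $S_i$ contains $\approx M/L$ heads, with weights drawn from $\mathcal{N}(0, \tfrac{1}{M} I)$, so the group sums have exactly the structure of Lemma~\ref{lem:w-limit} up to the overall factor $1/L$: one checks directly (mean and vanishing variance, resp.\ Gaussianity of sums of Gaussians) that $\sum_{m \in S_i} \mathbf{w}^m (\mathbf{w}^m)^\top \overset{P}{\rightarrow} \tfrac{1}{L} I$ and $\sum_{m \in S_i} \mathbf{w}^m \overset{D}{\rightarrow} \epsilon_i \sim \mathcal{N}(0, \tfrac{1}{L} I)$, with $\epsilon_1,\ldots,\epsilon_L$ mutually independent since the index sets $S_i$ are disjoint. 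The ceiling-based indexing only makes the group sizes differ by at most one, which is negligible in the limit. Substituting back, the linear term converges to $-\tfrac{1}{L}\sum_i (I - \gamma P^{\pi_i})\Phi_t = -(I - \gamma P^{\bar\pi})\Phi_t$, using $P^{\bar\pi} = \tfrac{1}{L}\sum_i P^{\pi_i}$, and the reward term converges to $\sum_i R^{\pi_i}\epsilon_i^\top$, yielding the stated limiting generator.

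With the limiting dynamics $\partial_t \Phi_t = -(I - \gamma P^{\bar\pi})\Phi_t + \sum_i R^{\pi_i}\epsilon_i^\top$ established, the trajectory follows by solving this affine autonomous system exactly as in Lemma~\ref{lem:ode-soln} and Lemma~\ref{lem:dynamics-aux}. Since every eigenvalue of $I - \gamma P^{\bar\pi}$ has strictly positive real part (as $P^{\bar\pi}$ is stochastic and $\gamma < 1$), the fixed point $\Phi_\infty = (I - \gamma P^{\bar\pi})^{-1}\sum_i R^{\pi_i}\epsilon_i^\top$ is well defined, and the solution is $\exp(-t(I - \gamma P^{\bar\pi}))(\Phi_0 - \Phi_\infty) + \Phi_\infty$, matching the claimed trajectory (and, via Lemma~\ref{lem:dynamics-aux}, also the $\Phi_\infty$ and EBF-of-$P^{\bar\pi}$ entries of Table~\ref{tab:theory}).

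The main obstacle is interchanging the $M \to \infty$ limit with the solution of the ODE, made delicate by the two distinct modes of convergence: the matrix coefficient converges in probability (a law-of-large-numbers statement), whereas the affine reward term converges in distribution. I would resolve this exactly as in Theorem~\ref{thm:infinite-heads}: write $\Phi_t^M = \exp(t\,\mathcal{L}^M)\Phi_0 + (\text{affine correction involving } B^M)$, observe that the pair of random objects (the generator $\mathcal{L}^M$ and the affine term $B^M$) converges jointly in distribution to $(\mathcal{L}^\infty, B^\infty)$ with $\mathcal{L}^\infty$ \emph{deterministic}, and then invoke continuity of the fixed-$t$ solution map $(\mathcal{L}, B) \mapsto \Phi_t$ together with the continuous mapping theorem. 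The Lipschitz property of $A \mapsto \exp(tA)$ on bounded intervals then upgrades this pointwise statement to uniform convergence of the trajectory on any compact $[0,T]$.
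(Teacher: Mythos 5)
Your proposal is correct and follows essentially the same route as the paper's proof: group the heads by assigned policy, apply the Lemma~\ref{lem:w-limit}-style limits per group to get $\sum_{m\in S_i}\mathbf{w}^m(\mathbf{w}^m)^\top \to \tfrac{1}{L}I$ and $\sum_{m\in S_i}\mathbf{w}^m \to \epsilon_i \sim \mathcal{N}(0,\tfrac{1}{L}I)$, and then read off the averaged generator $-(I-\gamma P^{\bar\pi})$ and the reward term $\sum_i R^{\pi_i}\epsilon_i^\top$ before solving the resulting affine ODE. Your additional care about the joint convergence of the generator and the affine term, and the uniform-on-compacts upgrade, goes slightly beyond what the paper writes out but matches the argument it invokes from Theorem~\ref{thm:infinite-heads}.
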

\begin{proof}
The result on the trajectories follows immediately from the result on the dynamics, so it suffices to prove convergence of the dynamics. We approach this problem by decomposing the dynamics of $\Phi^M_t$ as follows.
\begin{equation}
    \partial_t \Phi_t^M = \sum_{m=1}^M - (I - \gamma P^{\pi_{i_m}})\Phi_t^M \mathbf{w}^m_t (\mathbf{w}^m_t)^\top - \sum_{m=1}^M R^{\pi_{i_m}}(\mathbf{w}^m_t)^\top \, .
\end{equation}

We first consider the random variables in the term which includes the rewards $R^\pi$. For this, we can directly apply the results from the previous theorems to the random variables $\epsilon_j = \sum_{m : i_m = j} w^m$, whose limiting variance is easily computed to be 
\begin{equation}
    \lim_{M \rightarrow \infty} \text{Var}\left(\sum_{m : i_m = j} \mathbf{w}^m\right) = \lim_{M \rightarrow \infty} \sum_{\lfloor \frac{j}{n}M \rfloor}^{\lfloor \frac{j+1}{n}M \rfloor} \frac{1}{M}I = \frac{1}{L}I \, . 
\end{equation}
For the term which depends on $\Phi_t$, we see
\begin{align}
    \sum_{m=1}^M - (I - \gamma P^{\pi_{i_m}} )\Phi^M_t \mathbf{w}^m_t (\mathbf{w}^m_t)^\top &= \sum_{i=1}^L \sum_{m : i_m = i}^M - (I - \gamma P^{\pi_{i_m}} )\Phi^M_t \mathbf{w}^m_t (\mathbf{w}^m_t)^\top \\
    &= \sum_{i=1}^L  - (I - \gamma P^{\pi_{i}} )\Phi^M_t \sum_{m : i_m = i}^M \mathbf{w}^m_t (\mathbf{w}^m_t)^\top  \, . \\
    \intertext{Since $L$ is finite and fixed, $\sum_{m:i_m=i}^M\mathbf{w}^m_t(\mathbf{w}^m_t)^\top$ converges to $\frac{1}{L}I$}
    & \underset{M \rightarrow \infty}{\longrightarrow} \sum_{i=1}^L - (I - \gamma P^{\pi_{i}} )\Phi^M_t \frac{1}{L}I \\
    &= -( I - \gamma \frac{1}{L}\sum_{i=1}^L P^{\pi_i})\Phi_t^M \\
    &= -(I - \gamma P^{\bar{\pi}} )\Phi_t^M \, .
\end{align}
And so the limiting distribution becomes
\begin{equation}
    \lim_{M \rightarrow \infty } \partial_t \Phi_t^M = -(I - \gamma P^{\bar{\pi}} )\Phi_t^M - \bigg ( \sum_{i=1}^L R^{\pi_{i}} \epsilon_{i} \bigg ) \, .
\end{equation}
\end{proof}
\begin{corollary}
The above result can be readily adapted to the setting in which each head predicts a randomly selected (deterministic) policy in MDPs with finite state and action spaces. Let $L = |\actionspace| ^{|\statespace|}$, $\{\pi_1, \dots, \pi_L\}$ be an enumeration of $\actionspace ^\statespace$, and $i_m$ denote the index of the policy randomly assigned to head $m$; then the above result still holds, and $\bar{\pi}$ is the uniform policy.
\end{corollary}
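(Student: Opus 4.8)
The plan is to reduce the corollary to Theorem~\ref{thm:pastpolicies-aux}, since the only structural difference between the two settings is the mechanism generating the assignment of heads to policies: in the theorem the indices $i_m$ are deterministic and balanced, whereas here each $i_m$ is an i.i.d.\ uniform draw from $[L]$. I would begin by observing that the grouping decomposition used in the proof of that theorem is completely agnostic to \emph{how} the $i_m$ are produced, so it suffices to re-establish the two limits that drive the argument, namely that $\sum_{m : i_m = i} \mathbf{w}^m_t (\mathbf{w}^m_t)^\top \to \tfrac{1}{L} I$ and that $\sum_{m : i_m = i} \mathbf{w}^m_t$ converges to an isotropic Gaussian with covariance $\tfrac{1}{L} I$, but now under random rather than fixed group sizes.

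The central new ingredient is controlling the random group counts $N_i = |\{m : i_m = i\}|$. Since the $i_m$ are i.i.d.\ uniform on $[L]$, the strong law of large numbers gives $N_i/M \to 1/L$ almost surely for each $i \in [L]$. I would then condition on the assignment $(i_m)_{m=1}^M$, which is independent of the weights $(\mathbf{w}^m)_{m=1}^M \sim \mathcal{N}(0, \tfrac{1}{M} I)$. Conditionally, $\sum_{m : i_m = i} \mathbf{w}^m (\mathbf{w}^m)^\top$ is a sum of $N_i$ i.i.d.\ terms each of mean $\tfrac{1}{M} I$, so its conditional mean is $\tfrac{N_i}{M} I$ and its conditional variance is of order $N_i/M^2 = O(1/M)$; combined with $N_i/M \to 1/L$ this yields convergence in probability to $\tfrac{1}{L} I$, exactly as in Lemma~\ref{lem:w-limit}. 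The reward term is handled by the same conditioning: conditionally on $N_i$, the quantity $\sum_{m : i_m = i} \mathbf{w}^m$ is Gaussian with covariance $\tfrac{N_i}{M} I \to \tfrac{1}{L} I$, giving the stated $\epsilon_i \sim \mathcal{N}(0, \tfrac{1}{L} I)$.

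With these two limits in hand, substituting into the grouped dynamics $\sum_{i=1}^L (I - \gamma P^{\pi_i}) \Phi^M_t \sum_{m : i_m = i} \mathbf{w}^m_t (\mathbf{w}^m_t)^\top - \sum_{i=1}^L R^{\pi_i} \big( \sum_{m : i_m = i} \mathbf{w}^m_t \big)^\top$ reproduces verbatim the limiting dynamics and trajectory of Theorem~\ref{thm:pastpolicies-aux}, with $\bar\pi = \tfrac{1}{L} \sum_{i=1}^L \pi_i$. The final step is to identify this average as the uniform policy: for any fixed state $x$ and action $a$, exactly $|\actionspace|^{|\statespace|-1}$ of the $L = |\actionspace|^{|\statespace|}$ deterministic policies select $a$ at $x$ (the action at $x$ is pinned while the actions at the remaining $|\statespace|-1$ states range freely), so $\bar\pi(a \mid x) = |\actionspace|^{|\statespace|-1}/|\actionspace|^{|\statespace|} = 1/|\actionspace|$, which is precisely the uniform policy.

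The step I expect to be most delicate is the interchange of the two independent sources of randomness, the random assignment and the random weights. Although conditioning on $(i_m)$ renders each within-group limit routine, one must verify that the almost-sure convergence $N_i/M \to 1/L$ genuinely transfers these conditional limits to unconditional convergence (in probability, respectively in distribution); I would do this by restricting to the high-probability event on which every $N_i/M$ is close to $1/L$ and noting that the conditional variances vanish uniformly over that event. Everything else is a direct transcription of the argument for Theorem~\ref{thm:pastpolicies-aux}, followed by the combinatorial identification of $\bar\pi$ above.
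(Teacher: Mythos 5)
The paper states this corollary without any proof, simply asserting that the theorem is ``readily adapted,'' so there is no paper argument to compare against; your proposal correctly supplies the two ingredients that assertion leaves implicit. Namely, you show that the random group sizes $N_i = |\{m : i_m = i\}|$ concentrate ($N_i/M \to 1/L$ almost surely), so that conditioning on the assignment reduces the within-group limits $\sum_{m : i_m = i} \mathbf{w}^m (\mathbf{w}^m)^\top \to \tfrac{1}{L} I$ and $\sum_{m : i_m = i} \mathbf{w}^m \to \mathcal{N}(0, \tfrac{1}{L} I)$ to the balanced case of Theorem~\ref{thm:pastpolicies-aux}, and you verify by the counting argument (each action at each state is chosen by exactly $|\actionspace|^{|\statespace|-1}$ of the $|\actionspace|^{|\statespace|}$ deterministic policies) that $\tfrac{1}{L}\sum_i P^{\pi_i} = P^{\bar\pi}$ with $\bar\pi$ uniform; both steps are sound.
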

\begin{theorem} \label{thm:multiple-timescales}
We consider the task of predicting the value functions of a fixed policy under multiple discount rates $\gamma_1, \dots, \gamma_L$. For fixed $M$, $L$, let $i_m$ denote the indexing function defined in Theorem \ref{thm:pastpolicies-aux} Let $\Phi_t$ follow the dynamics
\begin{align}
    \partial_t \Phi_t^M &= \sum_{m=1}^M -( (I - \gamma_{i_m} P^{\pi})\Phi_t^M \mathbf{w}^m_t + R^{\pi})(\mathbf{w}^m_t)^\top \, .
\end{align}
Then the limiting dynamics as $M\rightarrow \infty$ of $\Phi_t^M$ are as follows, where $\bar{\gamma} = \sum \frac{1}{L} \gamma_i$
\begin{align}
    \lim_{M \rightarrow \infty} \partial_t \Phi_t^M &\overset{D}{=} -(I - \bar{\gamma} P^{\pi})\Phi_t + R^{\pi} \epsilon^\top\\
    \intertext{and}
    \lim_{M \rightarrow \infty} \Phi_t^M &\overset{D}{=} \exp (-t(I - \gamma P^{\bar{\pi}} )) (\Phi_0 - \Phi_\infty ) + (I - \gamma P^{\pi} )^{-1} R^{\pi} \epsilon^\top)\; .
\end{align}
\end{theorem}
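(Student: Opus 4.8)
The plan is to follow the proof of Theorem~\ref{thm:pastpolicies-aux} almost verbatim, since the multiple-discount task differs from the multiple-policy task only in \emph{where} the head-index $i_m$ enters the dynamics: here it modulates the discount factor in $(I - \gamma_{i_m}P^\pi)$ rather than the transition operator, and — crucially — the cumulant $R^\pi$ is shared across all heads. As in that proof, the trajectory statement follows immediately from the dynamics statement via Lemma~\ref{lem:dynamics-aux} (applied with $A = -(I - \bar\gamma P^\pi)$ and inhomogeneity $R^\pi \epsilon^\top$, whose fixed point is $\Phi_\infty = (I - \bar\gamma P^\pi)^{-1}R^\pi\epsilon^\top$), so I would concentrate on identifying the limiting dynamics. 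First I would split the right-hand side into the homogeneous part and the reward-driven inhomogeneity,
\begin{equation*}
    \partial_t \Phi_t^M = -\sum_{m=1}^M (I - \gamma_{i_m} P^\pi)\Phi_t^M \mathbf{w}_t^m (\mathbf{w}_t^m)^\top - \sum_{m=1}^M R^\pi (\mathbf{w}_t^m)^\top \, ,
\end{equation*}
and analyse the two sums separately.

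For the homogeneous term I would group the $M$ heads according to which of the $L$ discount factors they receive, so that
\begin{equation*}
    \sum_{m=1}^M (I - \gamma_{i_m} P^\pi)\Phi_t^M \mathbf{w}_t^m (\mathbf{w}_t^m)^\top = \sum_{i=1}^L (I - \gamma_i P^\pi)\Phi_t^M \sum_{m : i_m = i} \mathbf{w}_t^m (\mathbf{w}_t^m)^\top \, .
\end{equation*}
Since $L$ is fixed while each group retains a fraction $1/L$ of the heads, the inner sum $\sum_{m : i_m = i}\mathbf{w}^m (\mathbf{w}^m)^\top$ converges in probability to $\tfrac1L I$ by exactly the second-moment (Chebyshev) computation used in Lemma~\ref{lem:w-limit}. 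Substituting this limit and exploiting the linearity of $\gamma \mapsto (I - \gamma P^\pi)$ collapses the outer sum to $-(I - \bar\gamma P^\pi)\Phi_t^M$, with $\bar\gamma = \tfrac1L\sum_{i=1}^L \gamma_i$, which is precisely the homogeneous part of the claimed limit.

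The genuinely new point, and the step I expect to require the most care, is the reward term. Unlike the multiple-policy case — where the distinct cumulants $R^{\pi_i}$ cannot be amalgamated, producing the sum $\sum_i R^{\pi_i}\epsilon_i^\top$ with each $\epsilon_i$ of variance $\tfrac1L$ — here every head shares the single reward $R^\pi$, so I would factor it out entirely as $\sum_{m=1}^M R^\pi (\mathbf{w}^m)^\top = R^\pi\bigl(\sum_{m=1}^M \mathbf{w}^m\bigr)^\top$ and invoke the second part of Lemma~\ref{lem:w-limit} to obtain $\sum_{m=1}^M \mathbf{w}^m \overset{D}{=} \epsilon \sim \mathcal{N}(0,I)$ in the limit. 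The subtlety is the covariance bookkeeping: the independent Gaussian contributions of all $M$ heads now aggregate into a \emph{single} standard Gaussian factor of full covariance $I$, rather than into $L$ separate factors of variance $\tfrac1L$, which is exactly what produces the limit $R^\pi\epsilon^\top$ instead of its multiple-policy analogue; the overall sign is immaterial since $-\epsilon \overset{D}{=} \epsilon$. A final technical nicety is that the two pieces converge in different modes — the coefficient $\sum_{m:i_m=i}\mathbf{w}^m(\mathbf{w}^m)^\top$ in probability to a deterministic limit, and the reward offset in distribution to a Gaussian — so I would appeal to a Slutsky-type argument to combine them into joint convergence in distribution. Assembling the two terms yields $\lim_{M\to\infty}\partial_t \Phi_t^M \overset{D}{=} -(I - \bar\gamma P^\pi)\Phi_t + R^\pi\epsilon^\top$, and Lemma~\ref{lem:dynamics-aux} then delivers the stated trajectory.
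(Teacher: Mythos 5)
Your proposal is correct and follows essentially the same route as the paper: decompose into the $\Phi$-dependent and reward terms, group heads by discount index so that $\sum_{m:i_m=i}\mathbf{w}^m(\mathbf{w}^m)^\top \to \tfrac1L I$ and the homogeneous part collapses to $-(I-\bar\gamma P^\pi)\Phi_t$ by linearity in $\gamma$, and handle the shared-reward term via the second part of Lemma~\ref{lem:w-limit} to obtain $R^\pi\epsilon^\top$ with $\epsilon\sim\mathcal{N}(0,I)$. Your added remarks on the covariance bookkeeping (one full-variance Gaussian versus $L$ variance-$\tfrac1L$ factors in the multiple-policy case) and the Slutsky-type combination of convergence modes are consistent with, and slightly more explicit than, the paper's argument.
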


\begin{proof}
We follow a similar derivation as for Theorem~\ref{thm:pastpolicies-aux} in deriving the component of the dynamics which depends on $\Phi^M_t$. The result of Theorem~\ref{thm:infinite-heads} immediately applies to the $\sum R^\pi (\mathbf{w}^m_t)^\top$ term:
\begin{align}
    \sum_{m=1}^M - (I - \gamma_{i_m} P^\pi )\Phi^M_t \mathbf{w}^m_t (\mathbf{w}^m_t)^\top &= \sum_{i=1}^L \sum_{m : i_m = i}^M - (I - \gamma_i P^\pi )\Phi^M_t \mathbf{w}^m_t (\mathbf{w}^m_t)^\top \\
    &= \sum_{i=1}^L  - (I - \gamma_i P^{\pi} )\Phi^M_t \sum_{m : i_m = i}^M \mathbf{w}^m_t (\mathbf{w}^m_t)^\top \, . \\
    \intertext{Since $L$ is finite and fixed, $\sum_{m:i_m=i}^M\mathbf{w}^m_t(\mathbf{w}^m_t)^\top$ converges to $\frac{1}{L}I$ as before:}
    & \underset{M \rightarrow \infty}{\longrightarrow} \sum_{i=1}^L - (I - \gamma_i P^{\pi} )\Phi^M_t \frac{1}{L}I \\
    &= -( I -\sum_{i=1}^L \frac{\gamma_i}{L}  P^{\pi})\Phi_t^M \\
    &= -(I - \bar{\gamma} P^{{\pi}} )\Phi_t^M \, .
\end{align}
\end{proof}

\section{Experimental details}
\label{sec:experiment-details}

\subsection{Experimental details for Figure~\ref{fig:feature-viz}}

In our evaluations of the evolution of single feature vectors, we compute the continuous-time feature evolution defined in Equation~\eqref{eq:phi-ode}, using $P^\pi$ defined by a random walk on a simple Four-Rooms Gridworld with no reward. We use a randomly initialized representation $\Phi \in \mathbb{R}^{ |\statespace| \times 10}$, and use a single column of this matrix in our feature visualization (we observed similar behaviour in each feature). To compute trajectories, we use the SciPy ODE solver \texttt{solve\_ivp} \citep{2020SciPy}.

\subsection{Experimental details for Section~\ref{sec:feature-generalisation}}

Here, we provide details of the environment used in producing Figure~\ref{sec:feature-generalisation}. The environment is a 30-state chain, with two actions, \texttt{left} and \texttt{right}, which move the agent one state to the left or right, respectively. When the agent cannot move further left or right (due to being at an end state of the chain), the result of the corresponding action keeps the agent in the same state. There is additionally environment stochasticity of $0.01$, meaning that with this probability, a uniformly random action is executed instead. This stochasticity ensures that $P^\pi$ satisfies the conditions of Assumption~\ref{assume:value-function-conditions}. Taking the action \text{left} in the left-most state incurs a reward of $+2$, and taking the action \texttt{right} in the right-most state incurs a reward of $+1$; all other rewards are zero.

\subsection{Experimental details for Section~\ref{sec:deep-rl}}

We modify a base Double DQN agent \citep{van2016deep} and evaluate on the ALE without sticky actions \citep{bellemare2013arcade}. Our agents are implemented in Jax \citep{jax2018github}, and are based on the DQN Zoo \citep{dqnzoo2020github}. Unless otherwise mentioned, all hyperparameters are as for the default Double DQN agent, with the exception of the epsilon parameter in the evaluation policy, which is set to 0.001 in all agents, and the optimizer, which for agents using auxiliary tasks CV, REM and Ensemble is Adam with epsilon $0.1/32^2$, and a lightly tuned learning rate; see below for further details.

Experimental results shown in bar plots, such as Figures~\ref{fig:naux}  and~\ref{fig:rc_sweep}, report a ``relative score'' which is the per-game score normalized by the maximum average score achieved by any agent or configuration. The same, per-game, normalization values are used for all such figures.

\textbf{Auxiliary task details.} In this section, we describe the implementations of all auxiliary tasks considered in the main paper.
\begin{itemize}
    \item \emph{QR-DQN.} The implementation and hyperparameters match QR-DQN-1 in \citet{dabney2018distributional}.
    \item \emph{DDQN+RC.} We use a many-head DQN network which is identical to the standard neural network used for DQN, except that the output dimension is $(M+1) \times |\actionspace|$ instead of $|\actionspace|$, where $M$ is the number of auxiliary heads. Random cumulants are generated using a separate neural network with the same architecture as a standard DQN, but with output dimension equal to the number of auxiliary heads. The width of the Huber loss for each auxiliary head is equal to the number of auxiliary tasks. Let $\phi(x) \in \mathbb{R}^M$ be the output of the cumulant network given input observation $x$, with $M$ the number of auxiliary heads. Then the cumulant for auxiliary head $m$, at time step $t$, is given by $c_t = s \times (\phi(x_{t+1}) - \phi(x_t))$, where $s \in \mathbb{R}$ is a scaling factor. We performed a small hyperparameter sweep over scaling factors in $\{1, 10, 100, 500\}$, finding $s = 100$ to provide the best performance and use this value for all reported experiments. Note that this auxiliary task and the details are nearly identical to the \emph{CumulantValues} auxiliary task of \citet{dabney2020value}, except that we do not pass the values through a tanh non-linearity as this did not appear to have any impact in practice. We performed a hyperparameter sweep over learning rates and gradient norm clipping for this agent, considering learning rates $\{0.00025, 0.0001, 0.00005\}$ and gradient clipping in $\{10, 40\}$. We found that a learning rate of $0.00005$ and gradient norm clipping of $40$ to work best and use these values for all experiments.
    \item \emph{DDQN+REM.} We use a many-head variant of Double DQN, with heads trained according to the REM loss of \citet{agarwal2019striving}. For the agent's policy, an argmax over a uniform average of the heads is used. We swept over learning rates of $0.0001$ and $0.00005$, generally finding $0.00005$ to perform best. 
    \item \emph{DDQN+Ensemble.} As for the REM auxiliary task, we use a many-head variant of Double DQN. Each head is trained using its own double DQN loss, and the resulting losses are averaged. For the agent's policy, an argmax over a uniform average of the heads is used. We swept over learning rates of $0.0001$ and $0.00005$, generally finding $0.00005$ to perform best.
\end{itemize}

\textbf{Modified dense-reward games.} We modified four Atari games (Pong, MsPacman, Seaquest, and Q*bert) to obtain sparse, harder versions of these games to test the performance of random cumulants and other auxiliary tasks. The details of these games are given below. In each case a low-valued, commonly encountered reward is `censored', which means that during training the agent observes a reward of $0$ instead of the targeted reward. When evaluated, and thus for all empirical results reported, the standard uncensored rewards are reported.
\begin{itemize}
    \item \emph{Sparse Pong.} All negative rewards are censored (i.e. set to 0 before being fed to the agent), so the agent receives a reward of +1 for scoring against the opponent, but no reward when it concedes a point to the opponent. As $0$, $1$, and $-1$ are the only rewards in Pong, this modification makes Pong significantly harder. The agent can no longer learn to `avoid losing points`, but can only improve by learning to score points directly.
    \item \emph{Sparse MsPacman.} All rewards less than or equal to 10 are censored. This corresponds to rewards for the numerous small pellets that MsPacman eats, but not the larger pellets or ghosts. Each level ends when all of the small pellets are consumed, thus, by hiding these from the agent we may have significantly changed the primary incentive for the agent to advance the game.
    \item \emph{Sparse Seaquest.} All rewards less than or equal to 20 are censored. This corresponds to the rewards for shooting the sharks underwater, but not the rewards for picking up divers or surfacing. Additionally, even the rewards for sharks increase beyond this level, and thus become visible, once the agent has surfaced and collected enough divers.
    \item \emph{Sparse Q*bert.} All rewards less than or equal to 25 are censored. These are the rewards for flipping the colour of a tile, which is the primary source of reward and the mechanism for advancing to the next level of the game. Once all tiles are flipped, the agent will go to the next level. However, the agent can still observe rewards for going to the next level and for dispatching the enemies.
\end{itemize}

As described in the main paper,we found that the sparse versions of MsPacman, Seaquest, and Q*bert were too difficult for any agent we tested to achieve a reasonable level of performance. In Figure~\ref{fig:sparse-learning-curves}, we display the performance of several auxiliary tasks on these games, noting that the performance achieved is extremely low in comparison to the agents trained on the standard versions of these games (see Figure~\ref{fig:deep-rl}).

\begin{figure}[!htb]
    \centering
    \null
    \hfill
    \includegraphics[keepaspectratio,width=\textwidth]{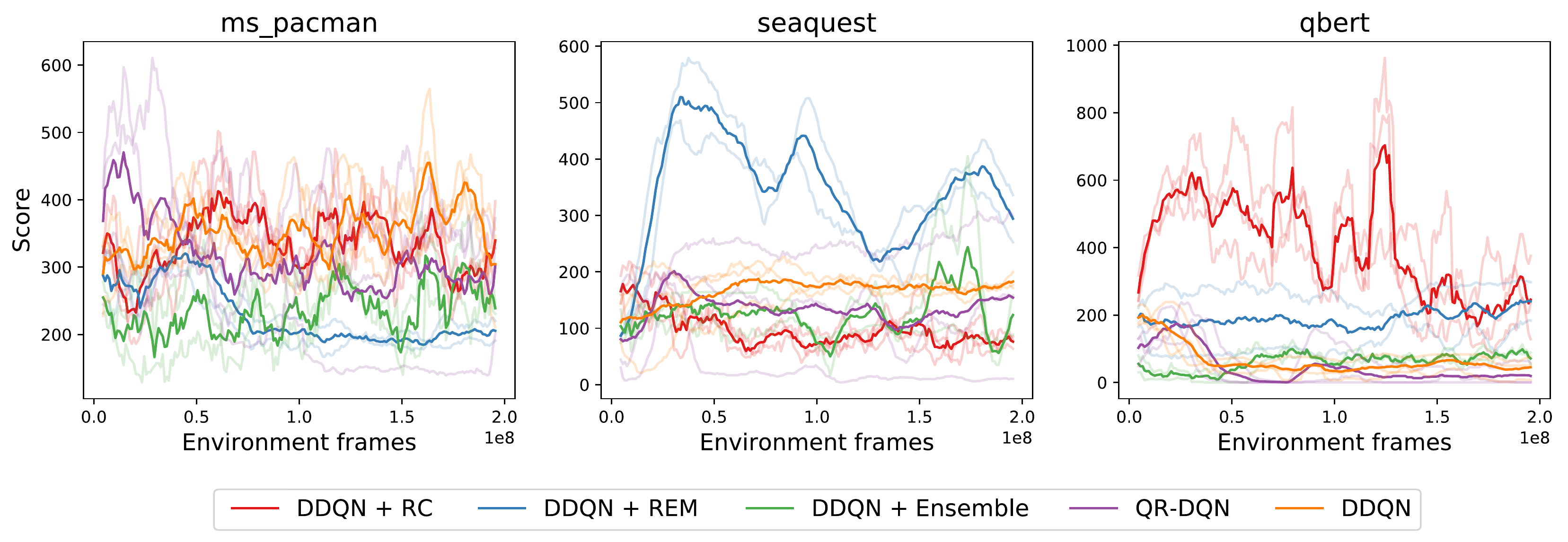}
    \caption{Learning curves on sparsified MsPacman (left), sparsified Seaquest (centre), and sparisifed Q*bert (right).}
    \label{fig:sparse-learning-curves}
\end{figure}

\textbf{Hyperparameter sweeps.} In Figure~\ref{fig:rc_sweep} we vary the weight of the auxiliary loss for the random cumulants agents, with the aim of understanding how this hyperparameters affect each method's performance. Next, in Figures~\ref{fig:ens_sweep} and~\ref{fig:rem_sweep} we present the results of a hyperparameter sweep for Ensemble and REM respectively. For these two, since there is no separate auxiliary loss as in RC, we vary number of heads and the learning rate. Results presented in the main text use the best settings for each algorithm found from these sweeps.

\begin{figure}[!htb]
    \centering
    \includegraphics[keepaspectratio,width=.75\textwidth]{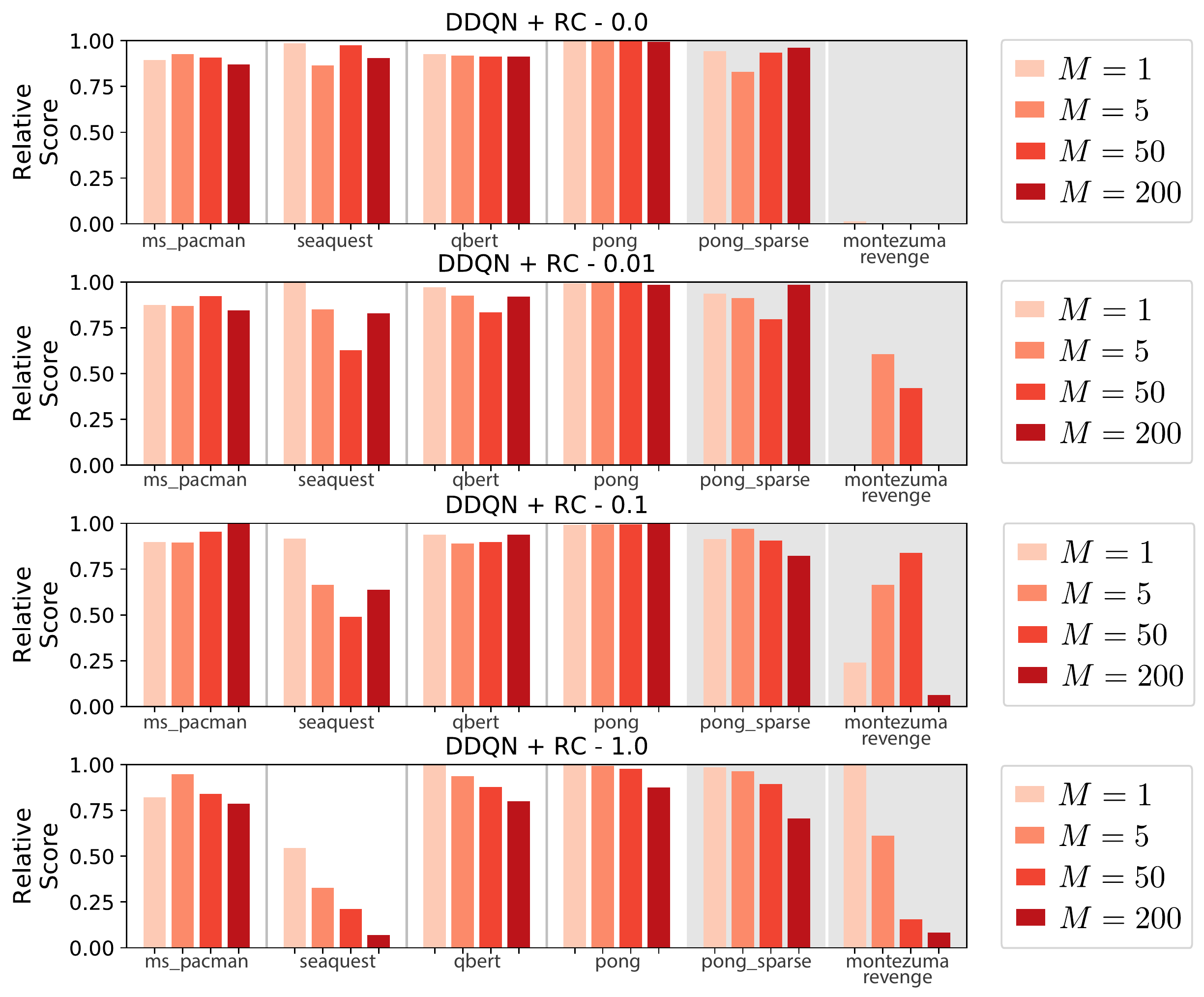}
    \caption{Results of hyper-parameter sweep for Random Cumulant (RC) method, where each row is for a different value of multiplicative scale applied to the auxiliary losses and each bar corresponds to the number of auxiliary heads ($M$). Note that the first row of results corresponds to initializing a network with the auxiliary heads, but setting the weight to zero, effectively disabling the auxiliary task.}
    \label{fig:rc_sweep}
\end{figure}

\begin{figure}[!htb]
    \centering
    \includegraphics[keepaspectratio,width=.75\textwidth]{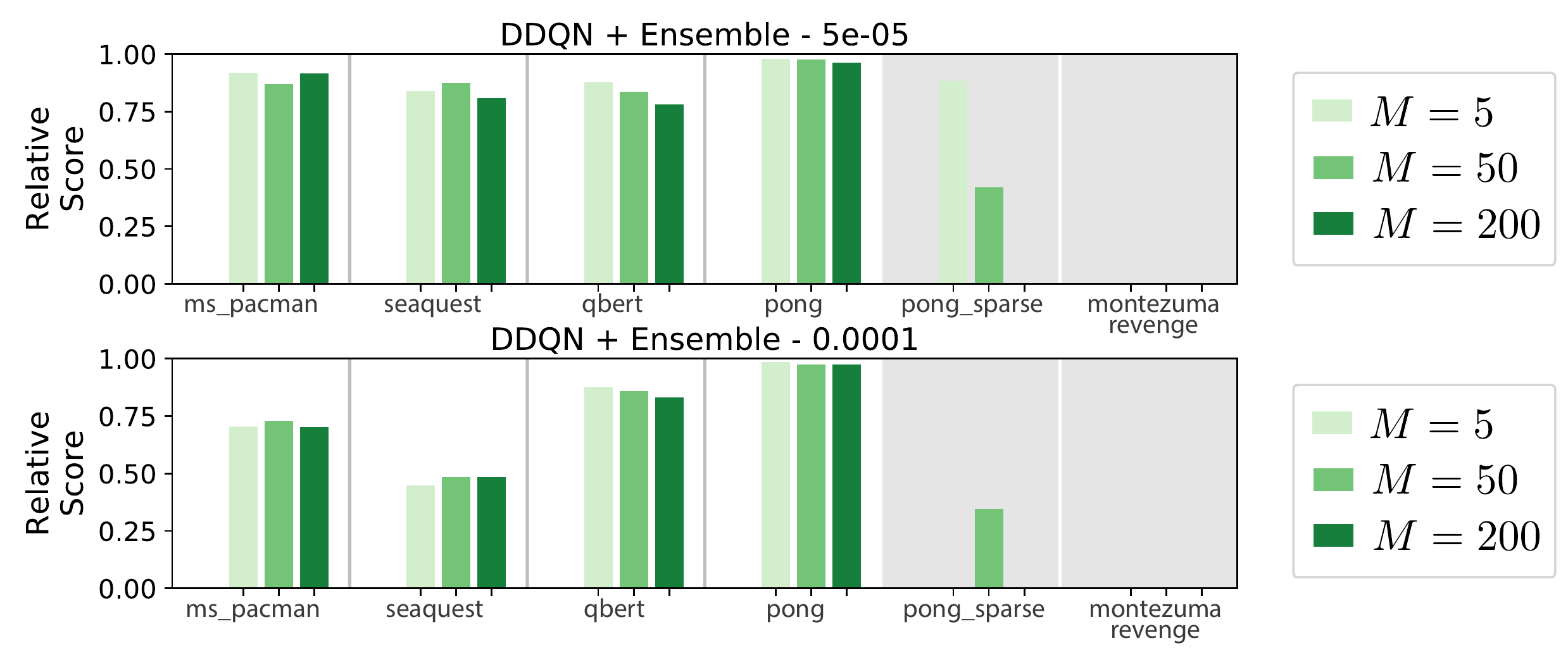}
    \caption{Results of hyper-parameter sweep for the Ensemble method, where each row is for a different learning rate and each bar corresponds to the number of auxiliary heads ($M$).}
    \label{fig:ens_sweep}
\end{figure}

\begin{figure}[!htb]
    \centering
    \includegraphics[keepaspectratio,width=.75\textwidth]{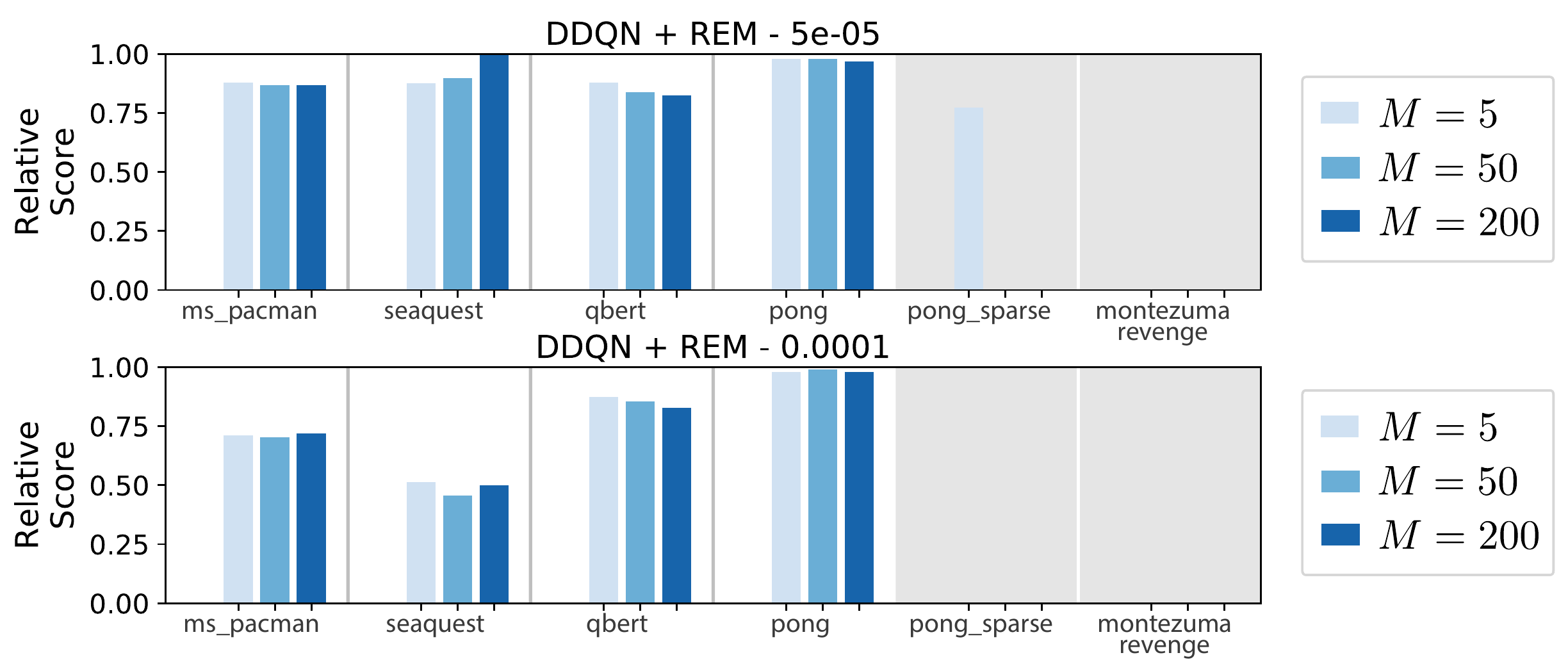}
    \caption{Results of hyper-parameter sweep for the REM method, where each row is for a different learning rate and each bar corresponds to the number of auxiliary heads ($M$).}
    \label{fig:rem_sweep}
\end{figure}

\clearpage

\section{Extensions beyond one-step temporal difference learning}\label{sec:beyond-one-step}

Our analysis in the main paper has focused on the case of learning dynamics under one-step temporal difference learning. This choice is largely because one-step temporal difference learning is such a popular algorithm, not because the results do not hold more generally. In this section, we describe the elements of analogous results for $n$-step learning and TD($\lambda$) for interested readers. We focus on the case of value function dynamics, and believe extensions of the representation dynamics analysis in the main paper along these lines will be interesting directions for future work.

\subsection{Temporal difference learning with $n$-step returns}

In the case of $n$-step returns, the dynamics on the value function $(V_t)_{t \geq 0}$ are given by
\begin{align*}
    \partial_t V_t(x) = \mathbb{E}_\pi\left\lbrack \sum_{k=0}^{n-1} \gamma^k R_k + \gamma^n V_t(X_n)\middle| X_0 = x \right\rbrack - V_t(x) \, .
\end{align*}
In full vector notation, we have
\begin{align*}
    \partial_t V_t = -(I - \gamma^n (P^\pi)^n) V_t + \left\lbrack \sum_{k=0}^{n-1} (\gamma P^\pi)^k \right\rbrack R^\pi \, .
\end{align*}
The solution to this differential equation is
\begin{align*}
    V_t = \exp( -t (I - (\gamma P^\pi)^n ) )(V_0 - V^\pi) + V^\pi \, .
\end{align*}
This bears a close relationship with the result obtained for $1$-step temporal difference learning in the main paper. As expected, we obtain the same limit point. Further, under Assumption~\ref{assume:value-function-conditions}, $(P^\pi)^n$ has the same eigenvectors as $P^\pi$, and so results analogous to Propositions~\ref{prop:one-value-function} \& \ref{prop:many-value-functions} hold for $n$-step temporal difference learning too under these conditions.

\subsection{Temporal difference learning with $\lambda$-returns}

In the case of temporal difference learning with $\lambda$-returns (for $\lambda \in [0,1)$), the dynamics on the value function $(V_t)_{t \geq 0}$ are given by
\begin{align*}
    \partial_t V_t(x) = \mathbb{E}_\pi\left\lbrack \sum_{k=0}^{\infty} (\lambda\gamma)^k (P^\pi)^k ( R^\pi + \gamma P^\pi V_t(X_{k+1}) - V_t(X_k))\middle| X_0 = x \right\rbrack - V_t(x) \, .
\end{align*}
In full vector notation, we have
\begin{align*}
    \partial_t V_t = \sum_{k=0}^{\infty} (\lambda\gamma)^k (P^\pi)^k ( R^\pi + \gamma P^\pi V_t - V_t)
\end{align*}
The solution to this differential equation is
\begin{align*}
    V_t = \exp\left(t\left((1-\lambda) \sum_{k=1}^\infty \lambda^{k-1}\gamma^k (P^\pi)^k - I\right)\right) (V_0 - V^\pi) + V^\pi \, .
\end{align*}
As with $n$-step temporal difference learning, this bears a close relationship with the result obtained for $1$-step temporal difference learning in the main paper. As expected, we obtain the same limit point. Further, under Assumption~\ref{assume:value-function-conditions}, each $(P^\pi)^k$ has the same eigenvectors as $P^\pi$, and so results analogous to Propositions~\ref{prop:one-value-function} \& \ref{prop:many-value-functions} hold for $n$-step temporal difference learning too under these conditions.

\section{Beyond diagonalisability assumptions}\label{sec:more-general-value-function-results}

In this section, we briefly describe extensions of the results of the main paper in scenarios where Assumption~\ref{assume:value-function-conditions} does not hold. There are two main cases we consider: (i) those in which $P^\pi$ is still diagonalisable, but does not have all eigenvalues with distinct magnitudes; and (ii) those in which $P^\pi$ is not diagonalisable.

In the former case, we do not have the different convergence rates of coefficients of different eigenvectors as in the proof of Proposition~\ref{prop:many-value-functions}. By similar arguments we can still deduce convergence of $V_t$ to the span of the eigenspaces with highest magnitude eigenvalues, but we can no longer deduce convergence to individual eigenspaces if there are several other eigenvalues with the same magnitude as the eigenvalue concerned. Note also that this includes the case where the matrix $P^\pi$ is complex- but not real-diagonalisable, since in such case non-real eigenvalues must come in conjugate pairs (which are necessarily of the same absolute value).

In the latter case, we no longer have an eigenbasis for $\mathbb{R}^{\statespace}$ based on $P^\pi$. However, we can consider the Jordan normal decomposition, and may still recover analogous results to those in main paper, where convergence is now to the subspaces generated by \emph{Jordan blocks} with high absolute value eigenvalues. See \citet{parr2008analysis} for further commentary on Jordan normal decompositions in feature analysis.

\section{Further discussion of features and operator decompositions}\label{sec:feature-selection}

Proto-value functions (PVFs), were first defined by \citet{mahadevan2007proto} as the eigenvectors of the \textit{incidence matrix} induced by the environment transition matrix $P$. In the ensuing years, the term PVF has been used to refer to a number of related but not necessarily equivalent concepts. To clarify our use of the term and the relationship of our decompositions of the resolvent and transition matrices of an MDP, we provide a brief discussion here; a summary is provided in Table~\ref{table:features}.

We will use $A$ to refer to the adjacency matrix of the unweighted, undirected graph induced by the matrix $P$ (i.e. $A[i,j]$ is 1 if there exists some action with nonzero probability of taking the agent from state $i$ to state $j$ or from state $j$ to state $i$, and 0 otherwise). $L_G$ will refer to the graph Laplacian based on this matrix $A$. 

We can additionally consider the Laplacian of the weighted, directed graph defined by $P^\pi$; we will refer to this matrix as $L_{P^\pi}$, in reference to its dependence on the probability of transitioning. $T$ denotes the matrix defined by a collection of sampled transitions indexed by $t$, with entries $T_{it} = -1$ if the transition $t$ leaves $i$ and $+1$ if it enters state $i$.

Our first observation is that eigendecomposition and SVD are equivalent for symmetric matrices because any real symmetric matrix has an orthogonal eigenbasis; this means that performing either decomposition yields the same eigenvectors and easily related eigenvalues. Our second observation is that when $P^\pi$ is \textit{not} symmetric, its singular value decomposition and eigendecomposition may diverge; further, the relationship between the SVD of the resolvent matrix $\Psi = (I - \gamma P^\pi)^{-1}$ and of $P^\pi$ is no longer straightforward, despite the eigenspaces of the two matrices being analogous. This means that analysis of the singular value decomposition of $P^\pi$ does not immediately imply any results about the resolvent matrix. 

\begin{table}[!h]
    \centering
    \begin{tabular}{c|c|c}
        Matrix & SVD & Eigendecomposition (ED)  \\
        \hline
        $L_G$ & PVFs \citep{mahadevan2007proto} & Equivalent to SVD  \\
        $T$ & sometimes $\equiv$ ED($L_G$) \citep{machado2017laplacian}  & not discussed \\
        $L_{P^\pi}$ & $\neq$ ED($L_{P^\pi}$)  & \citet{stachenfeld2014design} \\
        $(I - \gamma P^\pi)^{-1}$ &  RSBFs & $ \equiv L_{P^\pi}$ \\
        $P^\pi$ & \citet{behzadian2018feature} & $\equiv L_{P^\pi}$ \\
    \end{tabular}
    \caption{Summary of decompositions of various matrices associated with MDP transition operators, and associated features.}
    \vspace{0.5cm}
    \label{table:features}
\end{table}

Finally, we note that applying a uniform random walk policy may not be sufficient to guarantee that $P^\pi$ will be symmetric, and that in general it will not be possible to obtain a policy which will symmetrize the transition matrix. For example: when $G$ is a connected, non-regular graph (as is the case in many environments such as chains), there must be a node $v$ of degree $d$ adjacent to a node $v'$ of degree $d' \neq d$. A random walk policy will assign $p(v, v') = \frac{1}{d}$, while $p(v', v)$ will receive probability $\frac{1}{d'}$; thus, $P^\pi$ will not be symmetric. Fortunately, this is not a barrier to spectral analysis; the eigenvectors and eigenvalues of $P^\pi$ will still be real, as their transition matrix will be \textit{similar} to a symmetric matrix. We defer to \citet{machado2017laplacian} for a more detailed discussion of this relationship.

\section{Bayes-optimality of RSBFs} \label{sec:bayes-opt}

We can develop the discussion of RSBFs beyond their properties as a matrix decomposition described in Section~\ref{sec:feature-selection} to observe that the RSBFs characterize the Bayes-optimal features for predicting an unknown value function given an isotropic Gaussian prior distribution on the reward, and further characterize a Bayesian posterior over value functions given by conditioning on the known dynamics of the MDP. We will denote by $V_K(\Psi)$ the top $K$ eigenvectors of the matrix $\Psi \Psi^\top$, i.e. the top $K$ left singular vectors of $\Psi$.
\begin{restatable}{corollary}{corrBayesOpt}
Under an isotropic Gaussian prior on reward function $r \in \mathbb{R}^{\statespace}$, the subspace $V_K(\Psi)$ corresponds to the optimal subspace with respect to the following regression problem.
\begin{equation}
    \min_{\Phi \in \mathbb{R}^{\statespace \times K}} \mathbb{E}_{r \sim \mathcal{N}(0, I)} \left\lbrack \| \Pi_{\Phi^\perp} (I - \gamma P^\pi)^{-1} r \|^2 \right\rbrack \, ,
\end{equation}
where $\Pi_{\Phi^\perp}$ denotes orthogonal projection onto the orthogonal complement of $\Phi$.
\end{restatable}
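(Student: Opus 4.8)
The plan is to recognise this as a standard principal-subspace (PCA) problem in disguise. First I would note that the objective depends on $\Phi$ only through its column span, since $\Pi_{\Phi^\perp}$ is the orthogonal projection onto the orthogonal complement of $\langle \Phi \rangle$; hence without loss of generality I may take the columns of $\Phi$ to be orthonormal, in which case $\Pi_{\Phi^\perp} = I - \Phi\Phi^\top$. Writing $v = \Psi r$ for the (random) value function associated with the sampled reward, the assumption $r \sim \mathcal{N}(0, I)$ gives $v \sim \mathcal{N}(0, \Psi\Psi^\top)$, and in particular $\mathbb{E}[vv^\top] = \Psi\,\mathbb{E}[rr^\top]\,\Psi^\top = \Psi\Psi^\top$.

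Next I would expand the objective. Using $\|\Pi_{\Phi^\perp} v\|^2 = v^\top(I - \Phi\Phi^\top)v = \|v\|^2 - \|\Phi^\top v\|^2$ and taking expectations, the term $\mathbb{E}[\|v\|^2] = \mathrm{tr}(\Psi\Psi^\top)$ is a constant independent of $\Phi$. Minimising the objective is therefore equivalent to maximising $\mathbb{E}[\|\Phi^\top v\|^2] = \mathrm{tr}\big(\Phi^\top\,\mathbb{E}[vv^\top]\,\Phi\big) = \mathrm{tr}(\Phi^\top \Psi\Psi^\top \Phi)$ over orthonormal $\Phi \in \mathbb{R}^{\statespace \times \repdim}$. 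This is precisely the trace-maximisation form of the PCA problem for the symmetric positive semidefinite matrix $M = \Psi\Psi^\top$.

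Finally I would invoke the Ky Fan / Rayleigh--Ritz extremal characterisation: over orthonormal $\Phi$ with $\repdim$ columns, $\mathrm{tr}(\Phi^\top M \Phi)$ is maximised, with value $\sum_{\repix=1}^{\repdim} \lambda_\repix(M)$, exactly when the columns of $\Phi$ span the top-$\repdim$ eigenspace of $M$. Since by definition $V_K(\Psi)$ is the span of the top $\repdim$ eigenvectors of $\Psi\Psi^\top$ (equivalently the top $\repdim$ left singular vectors of $\Psi$), this is the claimed optimal subspace. The argument is essentially routine, so there is no serious obstacle; the only point requiring care is the extremal characterisation in the final step, which one can cite directly or prove by diagonalising $M = \sum_i \lambda_i u_i u_i^\top$ and writing $\mathrm{tr}(\Phi^\top M \Phi) = \sum_i \lambda_i \|\Phi^\top u_i\|^2$, then maximising $\sum_i \lambda_i c_i$ over the coefficients $c_i = \|\Phi^\top u_i\|^2 \in [0,1]$ subject to $\sum_i c_i = \repdim$, whose optimum places all mass on the $\repdim$ largest eigenvalues.
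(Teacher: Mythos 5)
Your proposal is correct and follows essentially the same route as the paper's proof: reduce the objective to $\mathrm{Tr}(\Phi^\top \Psi\Psi^\top \Phi)$ using $\mathbb{E}[rr^\top]=I$, turn the minimisation over the complement into a maximisation over the subspace, and conclude via the extremal characterisation of the top-$K$ left singular vectors of $\Psi$. The only difference is cosmetic --- you spell out the Ky Fan / Rayleigh--Ritz step that the paper simply cites as known.
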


\begin{proof}
Let $S$ denote some subspace $S \subset V$.
    \begin{align}
        \mathbb{E}[\|\Pi_s \Psi r\|^2] &= \mathbb{E}[r^\top \Psi^\top \Pi_s^\top \Pi_s \Psi r]
        \intertext{We note that for any real symmetric matrix $A$ we can rewrite $A = \sum \alpha_i v_i v_i^\top$.}
        \mathbb{E}[r^\top \Psi^\top \Pi_s^\top \Pi_S \Psi r] &= \mathbb{E}[ r^\top (\sum \alpha_i v_i v_i^\top) r] = \mathbb{E}[\sum \alpha_i (r^\top v_i) (v_i^\top r)] \\
        &= \mathbb{E}[\sum \alpha_i v_i^\top r r^\top v_i] = \sum \alpha_i v_i^\top \mathbb{E}[r r^\top] v_i \\
        &= \sum \alpha_i v_i^\top v_i = \text{Tr}(\Psi^\top \Pi_S^\top \Pi_S \Psi) =\text{Tr}(\Psi^\top \Pi_S \Psi)
        \intertext{Finally, we can re-express the minimization problem as follows}
        \text{argmin}_{S: \text{Dim}(S) = k} \text{Tr}(\Psi^\top(\Pi_{S^\perp})\Psi) &= \text{argmax}_{S:\text{Dim}(S) = k} \text{Tr}(\Psi^\top \Pi_S \Psi)
        \intertext{Now, because the subspace spanned by the top $k$ left-singular vectors $\{u_1, \dots, u_k\}$ of $\Psi$ is known to be the maximizer of the above equation, we finally obtain}
        &= \langle u_1, \dots, u_k \rangle =  V_K(\Psi) \; .
    \end{align}
\end{proof}

\begin{corollary}
The limiting distribution of $\Phi^M_t$ under the random cumulant auxiliary task described in Theorem~\ref{thm:distribution} is equivalent to the Bayesian posterior over value functions obtained by conditioning on the dynamics $P^\pi$, and given a prior distribution on the reward function equal to $\mathcal{N}(0, \Sigma)$.
\end{corollary}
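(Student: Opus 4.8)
The plan is to show that both the limiting distribution of $\Phi^M_t$ and the claimed Bayesian posterior are the \emph{same} mean-zero Gaussian, namely $\mathcal{N}(0, \Psi \Sigma \Psi^\top)$ on each column, where $\Psi = (I - \gamma P^\pi)^{-1}$. Since two Gaussians agree precisely when their means and covariances agree, it suffices to compute these two objects separately and observe that they coincide.

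First I would read the limiting distribution directly off Theorem~\ref{thm:distribution}: taking $t \to \infty$ after $M \to \infty$, the factor $\exp(-t(I - \gamma P^\pi))$ vanishes, since all eigenvalues of $(I - \gamma P^\pi)$ have strictly positive real part, leaving $\Phi_\infty = \Psi Z_\Sigma$ with each column of $Z_\Sigma$ distributed as $\mathcal{N}(0, \Sigma)$. The closing remark of Theorem~\ref{thm:distribution} already records that each column therefore has covariance $\Psi \Sigma \Psi^\top$; being a linear image of a centred Gaussian it is itself a centred Gaussian, so the per-column limiting law is exactly $\mathcal{N}(0, \Psi \Sigma \Psi^\top)$.

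Second I would make the Bayesian object precise. Place the prior $r \sim \mathcal{N}(0, \Sigma)$ on the unknown reward function. Given the fixed, known transition operator $P^\pi$, the associated value function is determined \emph{deterministically} through the Bellman equation, $V^\pi = (I - \gamma P^\pi)^{-1} r = \Psi r$. The key conceptual point — and the step I expect to require the most care in phrasing, rather than any routine calculation — is that ``conditioning on the dynamics'' performs no likelihood-based update here: because $V^\pi$ is a deterministic linear function of $r$ once $P^\pi$ is fixed, the induced posterior over value functions is simply the \emph{pushforward} of the reward prior under the map $r \mapsto \Psi r$. Applying the standard fact that a linear image $A X$ of $X \sim \mathcal{N}(\mu, \Sigma)$ is $\mathcal{N}(A\mu, A \Sigma A^\top)$ — exactly the Gaussian property invoked in the proof of Theorem~\ref{thm:distribution} — with $A = \Psi$ and $\mu = 0$ yields posterior $\mathcal{N}(0, \Psi \Sigma \Psi^\top)$.

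Finally I would conclude by matching the two displays: both the per-column limiting distribution of $\Phi_\infty$ and the Bayesian posterior over $V^\pi$ equal $\mathcal{N}(0, \Psi \Sigma \Psi^\top)$, hence they coincide. The only genuine subtlety is the interpretive one flagged above — clarifying that this ``posterior'' is a degenerate conditioning through a deterministic map — so that the equivalence amounts to the statement that the random-cumulant feature dynamics sample value functions from precisely the pushforward of the reward prior under the resolvent.
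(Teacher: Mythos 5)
Your proposal is correct and follows essentially the same route as the paper: both arguments identify the ``posterior'' as the deterministic pushforward of the reward prior under the resolvent map $r \mapsto (I-\gamma P^\pi)^{-1} r$ (since conditioning on $P^\pi$ involves no likelihood update), and then match this against the limiting column distribution from Theorem~\ref{thm:distribution}. The only cosmetic difference is that the paper carries out the comparison via a change-of-variables computation on densities, whereas you match Gaussian means and covariances directly; both land on the same identification.
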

\begin{proof}
Each column of $Z_\Sigma$ is sampled from an isotropic Gaussian distribution, and therefore each feature $\phi_i \overset{D}{=} (I - \gamma P^\pi) \epsilon_i$. It therefore suffices to show that under a suitable prior distribution, the distribution of $\phi_i$ is equal to a Bayesian posterior. 
For this, it suffices to show that such a posterior can be obtained by conditioning on the transition dynamics $P^\pi$, and looking at the induced pushforward measure on the reward distribution. Noting that $(I - \gamma P^\pi)$ is invertible, we then obtain the following prior over $V^\pi$, assuming an isotropic Gaussian prior on $p_r(r)$ and any arbitrary distribution over potential transition dynamics $p_\pi(P^\pi)$ which covers $\mathbb{R}^{|S| \times d}$.
\begin{align}
    P(V^\pi) &= \int_{(r, P^\pi)} \mathbbm{1}[(I - \gamma P^\pi)^{-1}r = V^\pi] dp_r(r)dp_{\pi}( P^\pi) 
    \intertext{We observe that the random variable $V^\pi$ has conditional distribution $P(V^\pi|P^\pi) = P((I - \gamma P^\pi)^{-1}r)$, whose density is proportional to $p_r( (I - \gamma P^\pi)V)$ by the change of variables formula.}
    P(V^\pi | P^\pi) &= c p_r(r =  (I - \gamma P^\pi)V^\pi) \\
    \intertext{ Because our prior over $r$ is equal to the initialization distribution of $\epsilon_i$, we obtain}
    &= c p_{\text{init}}(\epsilon_i = (I - \gamma P^\pi) V^\pi) \\
    \intertext{ which is precisely the limiting distribution $p_\infty$ of $\phi_i$ (again applying the change of variables formula).}
    &= p_\infty (\phi_i = (I - \gamma P^\pi)^{-1} \epsilon_i = V^\pi)
\end{align}
So we see that the limiting distribution of $\phi_i$ is equal to the prior over value functions conditioned on the transition dynamics.
\end{proof}
\section{Learning Dynamics for Ensemble Prediction}
\label{sec:ensemble-dynamics}

We provide some visualizations of the induced behaviour on features as a result of training an ensemble with multiple heads and zero reward, replicating the analysis of Section \ref{sec:reps}, to highlight how the eigendecomposition of $P^\pi$ affects the learned representations. We run our evaluations on the Four-Rooms Gridworld by initializing $\Phi \in \mathbb{R}^{105 \times 10}$ (i.e. $|\statespace| = 105$ and the number of features $d=10$) and simulating the ODE defined in Equation~\ref{eq:ensemble-phi-flow} for time $t=100$ with transition matrix $P^\pi$ defined by the uniform random policy on this Gridworld. In some cases, the features converged to zero quickly and so we show a final $t < 100$ to highlight the behaviour of the representation before it reaches zero.

We consider three variables which we permit to vary: the initialization scheme of features, in one case sampled from an isotropic Gaussian \texttt{rand} or from a randomly initialized 2-layer MLP \texttt{nn}); whether the weight matrix is fixed at initialization \texttt{fix} or permitted to follow the flow defined by Equation~\ref{eq:w-ode} \texttt{train}; and finally the number of `heads', \texttt{M}=1, 20, and 200. 

In Figure \ref{fig:ensemble_predictionsl}, we plot the output of an arbitrary head $\mathbf{w}^m$ of the ensemble. In Figure \ref{fig:ensemble_feature0} we visualize the value of a single feature (i.e. a single column of $\Phi$). In Figure~\ref{fig:dotproduct_ensemble}, we track the dot product of the columns of $\Phi$ with the eigenfunctions of $P^\pi$.

We observe, as predicted, that for fixed heads in the overparameterized regime, the features (and the value functions they induce) converge to smooth eigenfunctions. We do not see meaningful convergence of the features trained in conjunction with a single weight vector. In contrast, the value functions and features trained in conjunction with ensembles with more heads than the feature dimension consistently resemble the eigenfunctions of $P^\pi$. When $(\mathbf{w}^m)$ are held fixed, we see convergence to smooth eigenfunctions as predicted by our theory; when $(\mathbf{w}^m)$ are permitted to vary according to the flow in Equation~\ref{eq:ensemble-w-flow}, we see convergence to the most eigenfunction corresponding to the most negative eigenfunction of $P^\pi$. We can observe the evolution of the dot product between the features and the EBFs of $P^\pi$ more clearly in Figure~\ref{fig:dotproduct_ensemble}. Here, each red line corresponds to the dot product between a feature and an EBF. The colour of the line indicates the order $i$ of the eigenvalue $\lambda_i$ to which the EBF corresponds, interpolating between red $\lambda_1$ and blue $\lambda_{105}$. Lower values of $i$ correspond to smoother eigenfunctions. We observe that for sufficiently large $M$, the representations exhibit higher dot product with the smoother eigenfunctions, while for $M=1$ the features stay largely fixed during training.

\begin{figure}[!h]
    \centering
    \includegraphics[width=0.75\linewidth]{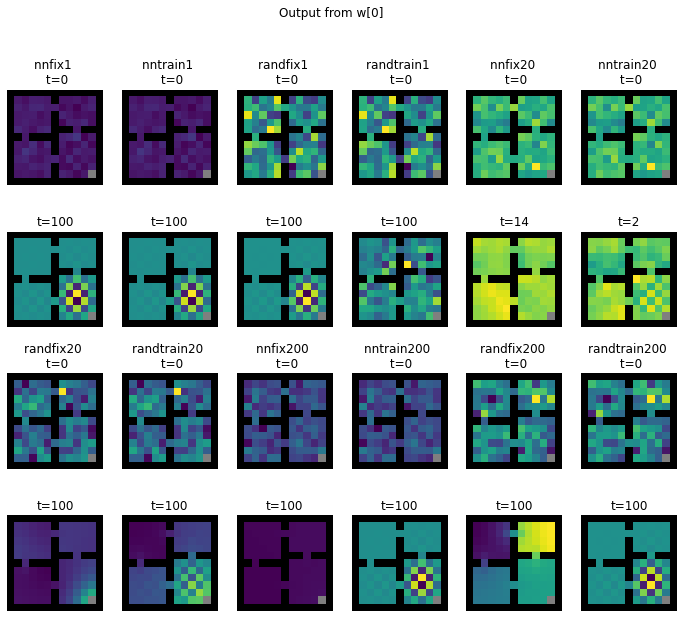}
    \caption{Value functions learned by the ensemble head at index 0 for different training regimes. Plot titles of form (feature initialization scheme, train/fix weight matrix, number of heads in ensemble). Observe that the representation learned with fixed weights tends to converge to smoother eigenfunctions than those learned with weights that are also allowed to train.}
    \label{fig:ensemble_predictionsl}
\end{figure}
\begin{figure}[!h]
    \centering
    \includegraphics[width=0.75\linewidth]{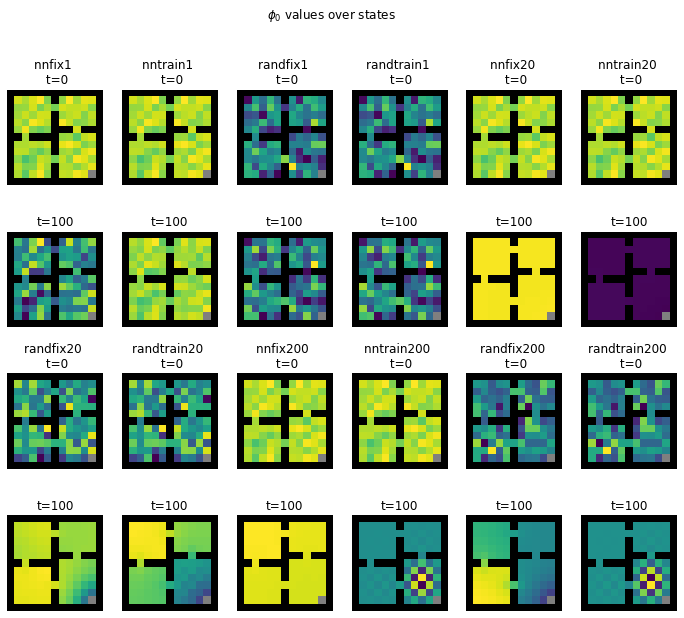}
    \caption{Values of ensemble feature at index 0 for different training regimes. Plot titles of form (feature initialization scheme, train/fix weight matrix, number of heads in ensemble). Observe that the representation learned with fixed weights tends to converge to smoother eigenfunctions than those learned with weights that are also allowed to train.}
    \label{fig:ensemble_feature0}
\end{figure}
\begin{figure}[!h]
    \centering
    \includegraphics[width=0.75\linewidth]{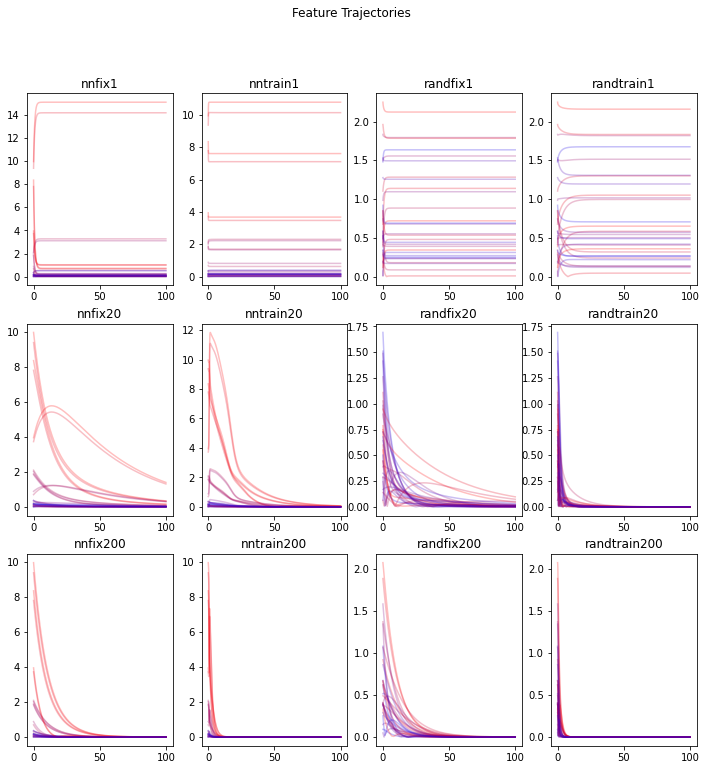}
    \caption{Projection of features onto eigenvectors of $P^\pi$. Red lines correspond to projection onto eigenvectors of higher eigenvalues, blue lines to lower eigenvalues.}
    \label{fig:dotproduct_ensemble}
\end{figure}

\end{appendix}

\end{document}